\newcolumntype{M}[1]{>{\centering\arraybackslash}m{#1}}
\newcolumntype{N}{@{}m{0pt}@{}}
\DeclareMathAlphabet\mathbfcal{OMS}{cmsy}{b}{n}
\newtheorem{theorem}{Theorem}%[section]
\newenvironment{customthm}[1]{\theorem}{\endtheorem}
\newtheorem{lemma}{Lemma}
\newtheorem{proposition}{Proposition}
\newenvironment{customprp}[1]{\proposition}{\endproposition}
\newtheorem{definition}{Definition}
\newcommand{\independent}{\perp\mkern-9.5mu\perp}
\newcommand{\notindependent}{\centernot{\independent}}
\newcommand{\CI}[4]{(#1 \independent #2 \vert #3)_{#4}}
\newcommand{\notCI}[4]{(#1 \notindependent #2 \vert #3)_{#4}}
\newcommand{\msep}[4]{(#1 \perp #2 \vert #3)_{#4}}
\newcommand{\Mb}[2]{\textit{Mb}_{#2}(#1)}
\newcommand{\MB}[1]{\textit{Mb}_{#1}}
\newcommand{\Pa}[1]{\textit{Pa}(#1)}
\newcommand{\Ch}[1]{\textit{Ch}(#1)}
\newcommand{\Adj}[1]{\textit{Adj}(#1)}
\newcommand{\N}[1]{\textit{N}(#1)}
\newcommand{\De}[1]{\textit{De}(#1)}
\newcommand{\Anc}[1]{\textit{Anc}(#1)}
\newcommand{\Dis}[1]{\textit{Dis}(#1)}
\newcommand{\PaP}[1]{\textit{Pa}^+(#1)}
\newcommand{\G}{\mathcal{G}}
\newcommand{\deltaplus}[1]{\Delta^+_{\text{in}}(#1)}
\newcommand{\V}{\mathbfcal{V}}
\newcommand{\GV}{\mathcal{G}_\mathbfcal{V}}
\newcommand{\PV}{P_\mathbfcal{V}}
\newcommand{\POS}{P_{\mathbfcal{O}\vert \mathbfcal{S}}}
\newcommand{\A}{\mathcal{A}}
\newcommand{\setminusA}{\!\setminus\!}
\title{Recursive Causal Structure Learning in the Presence of Latent Variables and Selection Bias}
\author{%
%   David S.~Hippocampus\thanks{Use footnote for providing further information
%     about author (webpage, alternative address)---\emph{not} for acknowledging
%     funding agencies.} \\
%   Department of Computer Science\\
%   Cranberry-Lemon University\\
%   Pittsburgh, PA 15213 \\
%   \texttt{hippo@cs.cranberry-lemon.edu} \\
    Sina Akbari \\
    Department of Computer and \\ Communication Sciences \\
    EPFL, Lausanne, Switzerland \\
    \texttt{sina.akbari@epfl.ch}
    \And
    Ehsan Mokhtarian \\
    Department of Computer and \\ Communication Sciences \\
    EPFL, Lausanne, Switzerland \\
    \texttt{ehsan.mokhtarian@epfl.ch}
    \And
    AmirEmad Ghassami \\
    Department of Computer Science \\
    Johns Hopkins University, Baltimore, USA \\
    \texttt{aghassa1@jhu.edu}
    \And
    Negar Kiyavash \\
    College of Management of Technology \\
    EPFL, Lausanne, Switzerland \\
    \texttt{negar.kiyavash@epfl.ch}
  % examples of more authors
  % \And
  % Coauthor \\
  % Affiliation \\
  % Address \\
  % \texttt{email} \\
  % \AND
  % Coauthor \\
  % Affiliation \\
  % Address \\
  % \texttt{email} \\
  % \And
  % Coauthor \\
  % Affiliation \\
  % Address \\
  % \texttt{email} \\
  % \And
  % Coauthor \\
  % Affiliation \\
  % Address \\
  % \texttt{email} \\
}
\begin{document}

\maketitle

\begin{abstract}
    We consider the problem of learning the causal MAG of a system from observational data in the presence of latent variables and selection bias.
    Constraint-based methods are one of the main approaches for solving this problem, but the existing methods are either computationally impractical when dealing with large graphs or lacking completeness guarantees.
    We propose a novel computationally efficient recursive constraint-based method that is sound and complete. The key idea of our approach is that at each iteration a specific type of variable is identified and removed. This allows us to learn the structure efficiently and recursively, as this technique reduces both the number of required conditional independence (CI) tests and the size of the conditioning sets. 
    The former substantially reduces the computational complexity, while the latter results in more reliable CI tests.
    We provide an upper bound on the number of required CI tests in the worst case. To the best of our knowledge, this is the tightest bound in the literature.
    We further provide a lower bound on the number of CI tests required by any constraint-based method. 
    The upper bound of our proposed approach and the lower bound at most differ by a factor equal to the number of variables in the worst case. 
    We provide experimental results to compare the proposed approach with the state of the art on both synthetic and real-world structures.
\end{abstract}
\section{Introduction}\label{sec: intro}
    Learning the causal structure among the set of variables in the system is the initial step for performing statistical inference tasks such as estimating the reward of a policy in off-policy evaluation \cite{thomas2016data, jiang2016doubly, murphy2003optimal}, etc.
    In the literature, structure learning is for the most part done under the assumption that all the variables in the system are observed \cite{spirtes2000causation, margaritis1999bayesian, pellet2008using, mokhtarian2020recursive, tsamardinos2003time}.
    However, in many applications in real-life systems, this assumption is violated. 
    Moreover, the accessible data may contain selection bias, i.e., some of the variables may have been conditioned on.
    
    The problem of causal structure learning is significantly more challenging when unmeasured (latent) confounders and selection variables exist in the system.
    This is because the set of directed acyclic graphs (DAGs) as independence models, which is the predominant modeling approach in the absence of unobserved variables, is not closed under marginalization and conditioning \cite{richardson2002ancestral}.
    That is, there does not necessarily exist a DAG over the observed variables that demonstrate a one-to-one map with the conditional independence relationships in the observational distribution $P_{\mathbfcal{O}\vert \mathbfcal{S}}$, where $\mathbfcal{O}$ and $\mathbfcal{S}$ denote the observed variables and selection variables, respectively.
    To address this problem, several extensions of the DAG models, such as acyclic directed mixed graphs (ADMGs) \cite{richardson2003markov}, induced path graphs (IPGs) \cite{spirtes2000causation}, and maximal ancestral graphs (MAGs) \cite{richardson2002ancestral} are introduced in the literature.
    
    The main approaches for structure learning include constraint-based and score-based methods \cite{spirtes2000causation, tsirlis2018scoring, ogarrio2016hybrid, colombo2012learning}.
    There are also methods that require specific assumptions on the data generating modules, such as requiring linearity\cite{zhang2020simultaneous}, linearity and non-Gaussianity of the noises \cite{shimizu2006linear} or additivity of the noise with specific types of non-linearity \cite{hoyer2009nonlinear}
    (See \cite{zhang2017learning} for a summary of structure learning approaches.)
    Constraint-based methods are the most commonly used methods for structure learning in the presence of latent variables and selection bias \cite{spirtes2000causation, tillman2008integrating, colombo2012learning, strobl2019constraint, pellet2008finding}. 
    The main idea in these methods is to find the structure which is most consistent with the conditional independence (CI) relationships in the data \cite{spirtes2000causation}. 
    However, the sheer number of CI tests required by these methods prohibits applying them to systems with large number of variables.
    
    Several methods are proposed in the literature to reduce the number of CI tests needed in constraint-based methods, specifically when there are no latent and selection variables in the system.
    For instance, \cite{spirtes2000causation} proposed the seminal PC algorithm for graphs with bounded degree, which has polynomial complexity in the number of vertices. 
    \cite{margaritis1999bayesian, pellet2008using, mokhtarian2020recursive} proposed using Markov boundary information to reduce the number of required CI tests.
    If the size of the Markov boundaries or the in-degree of the variables is bounded, these methods achieve quadratic complexity in the number of the variables.
    However, the majority of the work on causal structure learning in the presence of latent and selection variables do not provide any analysis for the required number of CI tests.
    As an exception, for sparse graphs, and given the exact value of the maximum degree of the MAG as side information, \cite{claassen2013learning} proposed an algorithm that requires a polynomial number of CI tests in the number of variables.
    Additionally, \cite{colombo2012learning} proposed a modification of the FCI algorithm, called RFCI, with specific attention to its time complexity. 
    However, RFCI is not complete; that is, the output of this algorithm does not capture all the CI relationships in the data.
    
    In this paper, we propose a novel recursive constraint-based method for causal structure learning in the presence of latent confounders and selection bias. We use MAGs as the graphical representation of the system.
    The main idea of our recursive approach is that in each iteration, we choose a particular variable of the system, say $X$, and locally identify its adjacent variables.
    Then, we recursively learn the structure over the rest of the variables using the marginal distribution $P_{\mathbfcal{O} \setminus \{X\}\vert \mathbfcal{S}}$.
    Note that the choice of $X$ cannot be arbitrary.
    For instance, consider the DAG $\GV$ in Figure \ref{fig: a.rem}, where the variable $U$ is latent. 
    The causal MAG over $\mathbfcal{O}=\{T,W,Y,Z\}$ is shown in Figure \ref{fig: b.rem} as MAG $\mathcal{M}$. 
    As seen in Figure \ref{fig: c.rem}, if we start with the choice of $X=T$, 
    we can correctly learn the subgraph of $\mathcal{M}$ over $\{Y,W,Z\}$, whereas if we start with $X=W$, we will end up learning the graph in Figure \ref{fig: d.rem}, which has two extra edges (highlighted in red) between $Y,Z$ and $Y,T$  that do not exist in $\mathcal{M}$ (we will revisit this example in Section \ref{sec: L-MARVEL}).

    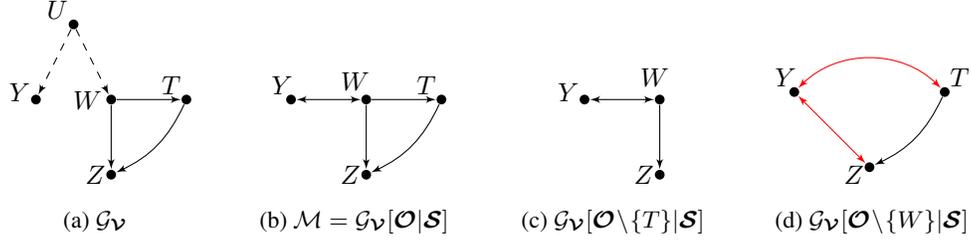
\begin{figure}[t] 
	    \centering
		\tikzstyle{block} = [circle, inner sep=1.3pt, fill=black]
		\tikzstyle{input} = [coordinate]
		\tikzstyle{output} = [coordinate]
		\begin{subfigure}[b]{0.24\textwidth}
    	    \centering
    	    \begin{tikzpicture}
                \tikzset{edge/.style = {->,> = latex'}}
                % vertices
                \node[block] (u) at  (0.5,2) {};
                \node[] ()[above left=-0.1cm and -0.1cm of u]{$U$};
                \node[block] (t) at  (2,1) {};
                \node[] ()[above left=-0.1cm and -0.1cm of t]{$T$};
                \node[block] (y) at  (0,1) {};
                \node[] ()[above left=-0.2cm and -0.1cm of y]{$Y$};
                \node[block] (x) at  (1,1) {};
                \node[] ()[above left=-0.3cm and -0.05cm of x]{$W$};
                \node[block] (z) at  (1,0) {};
                \node[] ()[above left=-0.3cm and -0.1cm of z]{$Z$};
                %edges
                \draw[edge] (x) to (z);
                \draw[edge, dashed] (u) to (y);
                \draw[edge, dashed] (u) to (x);
                \draw[edge] (x) to (t);
                \draw[edge, bend left=20] (t) to (z);
            \end{tikzpicture}
        \caption{$\GV$}\label{fig: a.rem}
		\end{subfigure}
        \begin{subfigure}[b]{0.24\textwidth}
    	    \centering
    	    \begin{tikzpicture}
                \tikzset{edge/.style = {->,> = latex'}}
                % vertices
                \node[block] (t) at  (2,1) {};
                \node[] ()[above left=-0.1cm and -0.1cm of t]{$T$};
                \node[block] (y) at  (0,1) {};
                \node[] ()[above left=-0.1cm and -0.2cm of y]{$Y$};
                \node[block] (x) at  (1,1) {};
                \node[] ()[above left=-0.05cm and -0.22cm of x]{$W$};
                \node[block] (z) at  (1,0) {};
                \node[] ()[above left=-0.3cm and -0.1cm of z]{$Z$};
                %edges
                \draw[edge] (x) to (z);
                %\draw[edge] (x) to (y);
                \draw[edge,style={<->}] (y) to (x);
                \draw[edge] (x) to (t);
                \draw[edge, bend left=20] (t) to (z);
                %\path (z2) to node {\dots} (zp);
            \end{tikzpicture}
        \caption{$\mathcal{M}=\GV[\mathbfcal{O}\vert\mathbfcal{S}]$}
        \label{fig: b.rem}
		\end{subfigure}
		\begin{subfigure}[b]{0.24\textwidth}
    	    \centering
    	    \begin{tikzpicture}
                \tikzset{edge/.style = {->,> = latex'}}
                % vertices
                \node[block] (y) at  (0,1) {};
                \node[] ()[above left=-0.2cm and -0.1cm of y]{$Y$};
                \node[block] (x) at  (1,1) {};
                \node[] ()[above left=-0.0cm and -0.3cm of x]{$W$};
                \node[block] (z) at  (1,0) {};
                \node[] ()[above left=-0.3cm and -0.1cm of z]{$Z$};
                %edges
                \draw[edge] (x) to (z);
                \draw[edge,style={<->}] (y) to (x);
            \end{tikzpicture}
        \caption{$\GV[\mathbfcal{O}\setminusA\{T\}\vert\mathbfcal{S}]$}
        \label{fig: c.rem}
		\end{subfigure}
		\begin{subfigure}[b]{0.24\textwidth}
    	    \centering
    	    \begin{tikzpicture}
                \tikzset{edge/.style = {->,> = latex'}}
                % vertices
                \node[block] (t) at  (2,1) {};
                \node[] ()[above right=-0.1cm and -0.1cm of t]{$T$};
                \node[block] (y) at  (0,1) {};
                \node[] ()[above left=-0.1cm and -0.2cm of y]{$Y$};
                \node[block] (z) at  (1,0) {};
                \node[] ()[above left=-0.4cm and -0.1cm of z]{$Z$};
                %edges
                \draw[edge, bend left=20] (t) to (z);
                \draw[edge,red,style={<->},bend left=45] (y) to (t);
                \draw[edge,red,style={<->}] (y) to (z);
            \end{tikzpicture}
        \caption{$\GV[\mathbfcal{O}\setminusA\{W\}\vert\mathbfcal{S}]$}
        \label{fig: d.rem}
		\end{subfigure}
        \caption{Effect of removing a variable on the MAG over the remaining variables.}
        \label{fig: rem}
    \end{figure}
    
    Our main contributions are as follows.
    \begin{itemize}[leftmargin=*]
        \item 
            We introduce the notion of a \emph{removable} variable in MAGs, which is a variable that can be removed from the causal graph without changing the m-separation relations (Definition \ref{def: removable}). 
            We further represent a method to test the removability of a variable given the observational data (Theorem \ref{thm: test removability}). 
        \item 
            We propose an algorithm called L-MARVEL for causal structure learning in the presence of latent and selection variables. 
            We show that our method is sound and complete (Theorem \ref{thm: sound and complete}) and performs $\mathcal{O}(n^2 + n{\deltaplus{\mathcal{M}}}^2 2^{\deltaplus{\mathcal{M}}})$ CI tests in the worst case (Proposition \ref{prp: upper-bound}), where $n$ denotes the number of variables and $\deltaplus{\mathcal{M}}$ is the maximum size of the union of parents, district and the parents of district of a vertex in the MAG (Equation \eqref{eq: def deltaplus}).
        \item  
            We show that any constraint-based algorithm requires $\Omega(n^2+n{\deltaplus{\mathcal{M}}}2^{\deltaplus{\mathcal{M}}})$ CI tests in the worst case (Theorem \ref{thm: lwrBound}). 
            Comparing this lower bound with our upper bound demonstrates the efficiency of our proposed method.
    \end{itemize}
    To sum up, the purpose and desirability of the proposed recursive algorithm for structure learning is two fold. 
    First, since we choose specific (\textit{removable}) variables in each iteration (with the property of having small Markov boundary), we ensure that the number of required CI tests in each iteration, and hence in total, remains small. 
    Therefore, we can significantly reduce the time complexity compared to non-recursive approaches.
    Second, by virtue of the gradual reduction of the order of the graph over the iterations, the size of the conditioning sets used in each CI test is reduced, which results in more reliable CI tests with smaller errors and more accurate results.
    
    This paper is organized as follows.
    In Section \ref{sec: preliminaries}, we review the preliminaries, present the terminology, and formally describe the problem.
    In Section \ref{sec: L-MARVEL}, we present the L-MARVEL method along with its analysis. In Section \ref{sec: complexity} we also provide the universal lower bound on the complexity of every constraint-based method. Finally, Section \ref{sec: experiment} presents a comprehensive set of experiments to compare L-MARVEL with various algorithms on synthetic and real-world structures.
\section{Preliminaries and problem description} \label{sec: preliminaries}
\subsection{Terminology}
    A \emph{mixed graph} $\mathcal{G}$ over the set of vertices $\mathbf{V}$ is a graph containing three types of edges $-,\to$ and $\leftrightarrow$. The two ends of an edge are called \emph{marks}. There are two kinds of marks: \emph{arrowhead} (\textgreater) and \emph{tail} ($-$). If there exists a \emph{directed edge} $X\to Y$ in the graph, we say $X$ is a \emph{parent} of $Y$ and $Y$ is a \emph{child} of $X$. For a \emph{bi-directed edge}  $X\leftrightarrow Y$, we say $X$ and $Y$ are \emph{spouses}. For an \emph{undirected edge} $X-Y$, $X$ and $Y$ are called \emph{neighbors}. In all of the aforementioned cases, we say $X$ and $Y$ are \emph{adjacent}. The \emph{skeleton} of $\mathcal{G}$ is an undirected graph with the same set of vertices $\mathbf{V}$ where there is an edge between $X$ and $Y$ if they are adjacent in $\mathcal{G}$. A path from $X$ to $Y$ where every vertex on the path is a child of its preceding vertex is called a \emph{directed path}. If a directed path exists from $X$ to $Y$, $X$ is called an \emph{ancestor} of $Y$. We assume every vertex is an ancestor of itself. We denote by $\Pa{X}$, $\Ch{X}$, $\N{X}$, $\Adj{X}$, and $\Anc{X}$, the set of parents, children, neighbors, adjacent vertices, and ancestors of $X$, respectively. The \emph{district set} of a variable $X$, denoted by $\Dis{X}$, is the set of variables that have a path to $X$ comprised of only bidirectional edges. By $\PaP{X}$ we denote the union of parents, district set, parents of district set, and the neighbors of a variable\footnote{The motivation behind this definition is that the local Markov property does not necessarily hold when causal sufficiency is violated, but if $X\not\in \Anc{\Dis{X}}$, then $\PaP{X}$ separates $X$ from its non-descendants. See the supplementary material for proofs.}, i.e., 
    \begin{equation}\label{eq: parentplus}
        \PaP{X}=\Pa{X}\cup\Dis{X}\cup\Pa{\Dis{X}}\cup\N{X}.
    \end{equation}
    Uppercase capitals indicate single vertices, whereas bold letters denote sets of vertices. For a set of vertices $\mathbf{X}$,  $\Anc{\mathbf{X}}=\cup_{X\in\mathbf{X}}\Anc{X}$. A non-endpoint vertex $X$ on a path is called a \emph{collider}, if both of the edges incident to $X$ on the path have an arrowhead at $X$. A path $\mathcal{P}$ is a \emph{collider path} if every non-endpoint vertex on $\mathcal{P}$ is a collider on $\mathcal{P}$. A path $\mathcal{P}$ between the vertices $X$ and $Y$ is called an \emph{m-connecting} or \emph{active} path relative to a set $\mathbf{Z}\subseteq \mathbf{V}\setminusA\{X,Y\}$, if
    \begin{enumerate*}[label=(\roman*)]
        \item 
            every non-collider on $\mathcal{P}$ is not a member of $\mathbf{Z}$, and 
        \item
            every collider on $\mathcal{P}$ belongs to $\Anc{\{X,Y\}\cup\mathbf{Z}}$.
    \end{enumerate*}
     
    \begin{definition}[m-separation] \label{def: m-sep}
        Suppose $\G$ is a mixed graph. A set $\mathbf{Z}$ m-separates $X$ and $Y$ in $\G$, denoted by $\msep{X}{Y}{\mathbf{Z}}{\G}$, if there is no m-connecting path between $X$ and $Y$ relative to $\mathbf{Z}$ in $\mathcal{G}$\footnote{DAGs are a subclass of mixed graphs. Note that for DAGs, this definition reduces to d-separation. See \cite{pearl1988probabilistic} for the definition of d-separation.}. We call $\mathbf{Z}$ a separating set for $X$ and $Y$. We drop the subscript $\mathcal{G}$ whenever it is clear from context.
    \end{definition}
    A \emph{directed cycle} exists in a mixed graph if $X\to Y$ and $Y\in \Anc{X}$. An \emph{almost directed cycle} exists in a mixed graph when $X\leftrightarrow Y$ and $Y\in \Anc{X}$. A mixed graph is said to be \emph{ancestral}, if it does not contain directed cycles or almost-directed cycles, and for any undirected edge $X-Y$, $X$ and $Y$ have no parents or spouses. An ancestral graph is called \emph{maximal} if for any pair of non-adjacent vertices, there exists a set of vertices that m-separates them. A mixed graph is called a \emph{Maximal Ancestral Graph} (MAG) if it is ancestral and maximal. A MAG is called a \emph{directed acyclic graph} (DAG) if it has only directed edges. 
    
    A DAG $\mathcal{G}$ can be projected into a unique MAG over a subset of its vertices with the following projection, referred to as an embedded pattern in \cite{verma1991equivalence}.
    \begin{definition}[Latent projection] \label{def: DAG to MAG}
        Suppose $\mathcal{G}$ is a DAG over $\mathbf{V} = \mathbf{O} \cup \mathbf{L}\cup \mathbf{S}$. The projection of $\mathcal{G}$ over $\mathbf{O}$ conditioned on $\mathbf{S}$, denoted by $\mathcal{G}[\mathbf{O}\vert\mathbf{S}]$, is a MAG over vertices $\mathbf{O}$ constructed as follows:
        \begin{enumerate}[label=(\roman*)]
            \item 
                Skeleton: $X,Y\in \mathbf{O}$ are adjacent in $\mathcal{G}[\mathbf{O}\vert\mathbf{S}]$ if there exists an inducing path\footnote{An \emph{inducing path} between $X$ and $Y$ relative to $\langle\mathbf{L},\mathbf{S}\rangle$, where $\mathbf{L}$ and $\mathbf{S}$ are disjoint sets not containing $X$ and $Y$, is a path on which every non-collider is a member of $\mathbf{L}$ and every collider 
                belongs to $\Anc{\{X,Y\}\cup\mathbf{S}}$.} in $\mathcal{G}$ between $X$ and $Y$ relative to $\langle\mathbf{L},\mathbf{S}\rangle$.
            \item
                Orientation: For each pair of adjacent variables $X,Y$ in $\mathcal{G}[\mathbf{O}\vert\mathbf{S}]$, the edge between $X$ and $Y$ is oriented as $X\to Y$ if $X \in \Anc{\{Y\}\cup\mathbf{S}}$ and $Y \notin \Anc{\{X\}\cup\mathbf{S}}$; as $X \leftrightarrow Y$ if $X \notin \Anc{\{Y\}\cup\mathbf{S}}$ and $Y \notin \Anc{\{X\}\cup\mathbf{S}}$; and as $X-Y$ if $X \in \Anc{\{Y\}\cup\mathbf{S}}$ and $Y \in \Anc{\{X\}\cup\mathbf{S}}$.
        \end{enumerate}
    \end{definition}
    The above projection is the unique projection which satisfies the following property \cite{richardson2002ancestral}.
    \begin{equation} \label{eq: msep iff dsep}
        \msep{X}{Y}{\mathbf{Z}}{\mathcal{G}[\mathbf{O}\vert \mathbf{S}]} 
        \iff
        \msep{X}{Y}{\mathbf{Z} \cup \mathbf{S}}{\mathcal{G}}.
    \end{equation}
    Two MAGs are called \emph{Markov equivalent} if they impose the same m-separations. 
    A class of Markov equivalent MAGs can be represented as a (maximally informative) \emph{partially-oriented ancestral graph} (PAG), where the PAG contains the skeleton and all the invariant edge marks in the class.
    
    Let $P$ be the joint distribution over a set of variables $\mathbf{V}$. For $X, Y \in \mathbf{V}, \mathbf{Z}\subseteq \mathbf{V}\setminusA \{X,Y\}$, a conditional independence (CI) test in $P$ on the triplet $\langle X,\mathbf{Z},Y\rangle$ yields independence, denoted by $\CI{X}{Y}{\mathbf{Z}}{P}$, if $P(X\vert Y,\mathbf{Z})=P(X \vert \mathbf{Z})$. We drop the subscript $P$ when it is clear from context. Suppose $\G$ is a DAG over $\mathbf{V}$, i.e., each vertex of $\G$ corresponds to a variable of $\mathbf{V}$. We say $P$ is \emph{faithful} with respect to $\G$, if $\CI{X}{Y}{\mathbf{Z}}{P} \iff \msep{X}{Y}{\mathbf{Z}}{\G}$, i.e., the conditional independence in distribution $P$ is equivalent to m-separation in the DAG $\mathcal{G}$.
    \subsection{Problem description}
    We consider a system with the set of variables $\V = \mathbfcal{O} \cup \mathbfcal{L}\cup \mathbfcal{S}$ and the joint distribution $\PV$, where $\mathbfcal{O}$, $\mathbfcal{L}$, and $\mathbfcal{S}$ denote the set of observed, latent, and selection variables, respectively. 
    Each variable $X \in \V$ is generated as $X = f_X(\Pa{X},\epsilon_X)$, where $f_X$ is a deterministic function, $\Pa{X} \subseteq \V\setminusA\{X\}$ is the set of parents of $X$, i.e., the set of variables that have a direct causal effect on $X$, and $\epsilon_X$ is the exogenous noise corresponding to $X$. We assume all noise variables are jointly independent. 
    This model is referred to as structural equations model (SEM) \cite{pearl2009causality}. 
    The causal graph of the system, which represents the causal relations among the variables, is denoted by $\GV$. $\GV$ is a directed graph over $\V$, i.e., each vertex is associated with a variable\footnote{We will use vertex and variable interchangeably throughout the paper.}, and a directed edge exists from each variable in $\Pa{X}$ to $X$, for all $X\in \V$. We assume that $\GV$ is a DAG, and its latent projection over $\mathbfcal{O}$ conditioned on $\mathbfcal{S}$ is denoted by $\mathcal{M}:=\GV[\mathbfcal{O}\vert \mathbfcal{S}]$. We will call $\mathcal{M}$ the ground truth MAG. Further, we assume that $\PV$ is faithful with respect to $\GV$, which along with Equation \eqref{eq: msep iff dsep} implies that for each $X,Y\in \mathbfcal{O}$ and $\mathbf{Z}\subseteq \mathbfcal{O} \setminusA \{X,Y\}$,
    \begin{equation}
        \msep{X}{Y}{\mathbf{Z}}{\mathcal{M}}
        \iff 
        \CI{X}{Y}{\mathbf{Z}}{\POS}. 
    \end{equation}
    Given the observational data from $\POS$, i.e., the marginal distribution over the observed variables, conditioned on the selection variables, we consider the problem of learning the PAG that represents the Markov equivalence class (MEC) of $\mathcal{M}$.

\section{L-MARVEL Algorithm} \label{sec: L-MARVEL}
    \begin{algorithm}[t]
        \caption{L-MARVEL.}
        \label{alg: L-MARVEL}
        \begin{algorithmic}[1]
            \STATE {\bfseries Input:} $\mathbfcal{O},\, \POS$
            \STATE {\bfseries Output:} PAG $\hat{\mathcal{M}}$
            \STATE $\MB{\mathbfcal{O}} \gets \text{ComputeMb}(\mathbfcal{O},\, \POS)$
            \STATE $\A \gets \text{Initialization}(\mathbfcal{O},\, \MB{\mathbfcal{O}})$
            \STATE $\A \gets \text{L-MARVEL}(\mathbfcal{O},\, \POS,\, \MB{\mathbfcal{O}},\, \A)$
            \STATE \text{Create} $\hat{\mathcal{M}}$ \text{according to adjacencies in} $\mathcal{A}$ \text{and orient it maximally using rules 0-10 of \cite{zhang2008causal}}
        \end{algorithmic}
        \hrulefill
        \begin{algorithmic}[1]
            \STATE {\bfseries Function} L-MARVEL($\mathbf{V},\, P_{\mathbf{V}|\mathbfcal{S}},\, \MB{\mathbf{V}},\, \A$)
            \IF{$|\mathbf{V}|=1$}
                \RETURN $\A$
            \ELSE
                \STATE $(X_{1},X_{2},\dots X_{|\mathbf{V}|})\gets$ Sort $\mathbf{V}$ based on the Markov boundary size in ascending order.
                \FOR{$i=1$ to $|\mathbf{V}|$}
                    \STATE $(\Adj{X_i},\,\A) \gets \textbf{FindAdjacent}(X_i,\, \Mb{X_i}{\mathbf{V}},\, P_{\mathbf{V}|\mathbfcal{S}},\, \A)$ 
                    \STATE $\textit{isR}\gets \textbf{IsRemovable}(X_i,\, \Mb{X_i}{\mathbf{V}},\, P_{\mathbf{V}|\mathbfcal{S}},\, \Adj{X_i})$ \hfill \% Main step of the algorithm. 
                    \IF{\textit{isR} is true}
                        \STATE $(\MB{\mathbf{V}\setminusA X_i},\, \A) \gets \textbf{UpdateMb}(X_i,\, \Adj{X_i},\, \MB{\mathbf{V}},\, P_{\mathbf{V}|\mathbfcal{S}},\, \A)$
                        \RETURN $\text{L-MARVEL}(\mathbf{V}\setminusA \{X_i\},\, P_{\mathbf{V}\setminusA \{X_i\}|\mathbfcal{S}},\, \MB{\mathbf{V}\setminusA \{X_i\}},\, \A)$
                    \ENDIF
                \ENDFOR
            \ENDIF
        \end{algorithmic}
    \end{algorithm}
    %In this section, we present L-MARVEL which extends the approach
    %\cite{mokhtarian2020recursive} proposed MARVEL algorithm for causal structure learning when all the variables are observable. 
    In this section, we present \emph{Latent MARVEL} (L-MARVEL) algorithm to learn the PAG over $\mathbfcal{O}$ that represents the system.
    This algorithm relies on a notion similar to the MARVEL algorithm proposed by \cite{mokhtarian2020recursive} for DAG learning when all the variables are observable.
    % This is the [] version of MARVEL algorithm in \cite{mokhtarian2020recursive} which requires all the variables to be observable. 
    Our approach relies on the Markov boundary information as input.
    \begin{definition}[Markov boundary]
        Suppose $\mathbf{V} \subseteq \mathbfcal{O}$. Markov boundary of $X\in \mathbf{V}$ with respect to $\mathbf{V}$ is a minimal set of variables $\mathbf{Z}\subseteq\mathbf{V}\setminusA\{X\}$, such that $X$ is independent of the rest of the variables of $\mathbf{V}$ conditioned on $\mathbf{Z}\cup\mathbfcal{S}$.
    \end{definition}
    Under faithfulness, Markov boundary of $X\in \mathbf{V} \subseteq \mathbfcal{O}$ with respect to $\mathbf{V}$, denoted by $\Mb{X}{\mathbf{V}}$, is unique and it consists of all the variables that have a collider path to $X$ in $\GV[\mathbf{V}|\mathbfcal{S}]$ \cite{yu2018mining, pellet2008finding}. 
    We indicate by $\MB{\mathbf{V}}$, the Markov boundaries of all of the variables in $\mathbf{V}$ with respect to $\mathbf{V}$.
    
    Our learning procedure is outlined in Algorithm \ref{alg: L-MARVEL}. 
    Throughout the algorithm, the data structure $\mathcal{A}$ stores the pairs of vertices that have been identified to be adjacent, and the separating sets found for non-adjacent vertices so far.
    As the first step, the Markov boundary information with respect to $\mathbfcal{O}$ is identified using one of the standard methods in the literature, as discussed in Section \ref{sec: Mb}.
    Then $\mathcal{A}$ is initialized with separating sets implied by the Markov boundary information, i.e., for any $X$ and $Y\notin\Mb{X}{\mathbfcal{O}}$, $\Mb{X}{\mathbfcal{O}}$ is a separating set for $X,Y$. 
    $\mathcal{A}$ is updated when a new separating set is discovered for a pair of vertices, or two vertices are determined to be adjacent. 
    After initializing $\mathcal{A}$ in line 4, we call the L-MARVEL function over $\mathbfcal{O}$, which recursively identifies all the adjacent pairs of vertices, i.e., the skeleton of $\mathcal{M}$, and discovers a separating set for all non-adjacent pairs of vertices. 
    This information suffices to maximally orient the edge marks at the end of the algorithm using the complete set of orientation rules in \cite{zhang2008causal}.
    
    L-MARVEL works as follows. 
    It chooses a variable $X$, identifies $\Adj{X}$ (i.e., the set of variables adjacent to $X$), and then recursively learns the structure over $\mathbfcal{O}\setminusA\{X\}$, discarding $X$. 
    This is desirable as the problem size decreases at each iteration, which results in a substantial reduction in the computational complexity. 
    Moreover, performing CI tests of high order is avoided. 
    If the learned graph, i.e., $\GV[\mathbfcal{O}\setminusA\{X\}\vert\mathbfcal{S}]$ is equal to the induced subgraph of $\GV[\mathbfcal{O}\vert\mathbfcal{S}]$ over $\mathbfcal{O}\setminusA\{X\}$, we can add $X$ to this graph and connect it to its adjacent variables with an edge. 
    As discussed in the example in Figure \ref{fig: rem}, this is not true for an arbitrary $X$.
    We show that for certain vertices, called \emph{removable}, we can indeed apply such a recursive learning procedure. 
    Next, we define what makes a variable removable.
    
    \begin{definition} [Removable] \label{def: removable}
        Suppose $\mathcal{G}$ is a MAG over $\mathbf{V}$, $X\in \mathbf{V}$, and $\mathcal{H}$ is the induced subgraph of $\mathcal{G}$ over $\mathbf{V}\setminusA\{X\}$. $X$ is a removable vertex in $\mathcal{G}$ if $\mathcal{G}$ and $\mathcal{H}$ impose the same m-separation relations over $\mathbf{V}\setminusA\{X\}$. That is, for any vertices $Y,W\in \mathbf{V}\setminusA\{X\}$ and $\mathbf{Z}\subseteq\mathbf{V}\setminusA\{X,Y,W\}$,
        \begin{equation}\label{eq: d-sepEquivalence}
            \msep{Y}{W}{\mathbf{Z}}{\mathcal{G}}
		    \iff
		    \msep{Y}{W}{\mathbf{Z}}{\mathcal{H}}.
		\end{equation}
    \end{definition}
    In the case that $\mathcal{G}$ is a DAG, Definition \ref{def: removable} reduces to what \cite{mokhtarian2020recursive} proposed for DAGs. However, their tests for identifying removability fail when causal sufficiency is violated.
    Next, we provide a graphical characterization of removable variables. 
    \begin{theorem}\label{thm: graph-rep}
        Vertex $X$ is removable in a MAG $\mathcal{M}$ over the variables $\mathbf{V}$, if and only if 
        \begin{enumerate}
            \item for any $Y\in \Adj{X}$ and $Z\in \Ch{X}\cup \N{X}\setminus \{Y\}$, $Y$ and $Z$ are adjacent, and 
            \item for any collider path $u=(X,V_1,...,V_m,Y)$ and $Z\in\mathbf{V}\setminusA\{X,Y,V_1,...,V_m\}$ such that $\{X,V_1,...,V_m\}\subseteq\Pa{Z}$, $Y$ and $Z$ are adjacent.
        \end{enumerate}
    \end{theorem}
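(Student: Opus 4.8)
The plan is to split the biconditional and dispose of the trivial half of Definition~\ref{def: removable} once and for all. Writing $\mathcal{H}$ for the induced subgraph of $\mathcal{M}$ on $\mathbf{V}\setminusA\{X\}$, every path (and every directed path) of $\mathcal{H}$ is a path (directed path) of $\mathcal{M}$, so ancestorship computed in $\mathcal{H}$ is contained in ancestorship computed in $\mathcal{M}$; consequently a path that is active in $\mathcal{H}$ relative to some $\mathbf{Z}$ is automatically active in $\mathcal{M}$ relative to $\mathbf{Z}$, which yields $\msep{Y}{W}{\mathbf{Z}}{\mathcal{M}}\Rightarrow\msep{Y}{W}{\mathbf{Z}}{\mathcal{H}}$ for free. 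Hence $X$ is removable exactly when the reverse implication holds, i.e.\ when every $\mathcal{M}$-active path between $Y,W\in\mathbf{V}\setminusA\{X\}$ relative to $\mathbf{Z}\subseteq\mathbf{V}\setminusA\{X,Y,W\}$ can be replaced by an $\mathcal{H}$-active path between the same endpoints relative to the same set.

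For necessity I would argue by contraposition. If Condition~1 fails, pick $Y\in\Adj{X}$ and $Z\in(\Ch{X}\cup\N{X})\setminusA\{Y\}$ non-adjacent in $\mathcal{M}$; by maximality of $\mathcal{M}$ there is $\mathbf{S}$ with $\msep{Y}{Z}{\mathbf{S}}{\mathcal{M}}$. On the path $(Y,X,Z)$ the only non-endpoint is $X$, and it is a non-collider because the $X$--$Z$ edge has a tail at $X$; so this path is active relative to any set omitting $X$, forcing $X\in\mathbf{S}$. Then $\mathbf{S}\setminusA\{X\}$ still activates that path, so $\notmsep{Y}{Z}{\mathbf{S}\setminusA\{X\}}{\mathcal{M}}$, while any path of $\mathcal{H}$ active relative to $\mathbf{S}\setminusA\{X\}$, not passing through $X$, would remain active relative to $\mathbf{S}$ in $\mathcal{M}$ (adjoining $X$ to a conditioning set cannot block a path that avoids $X$), contradicting $\msep{Y}{Z}{\mathbf{S}}{\mathcal{M}}$; hence $\msep{Y}{Z}{\mathbf{S}\setminusA\{X\}}{\mathcal{H}}$ and $X$ is not removable. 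If Condition~2 fails, I would run the identical argument on the path obtained by reversing the collider path $(X,V_1,\dots,V_m,Y)$ and appending the edge $X\to Z$: on it $X$ is again the unique non-collider while each $V_i$ is a collider lying in $\Anc{Z}$ since $V_i\in\Pa{Z}$, so the same ``delete $X$ from the separating set'' trick goes through.

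For sufficiency, assume both conditions. First I would establish a \emph{bypass lemma}: if a directed path of $\mathcal{M}$ contains the stretch $C'\to X\to C''$, then $C'\in\Adj{X}$ and $C''\in\Ch{X}$, so Condition~1 makes them adjacent, and since $C'\in\Anc{X}\subseteq\Anc{C''}$ ancestrality and maximality force the edge to be $C'\to C''$; deleting $X$ gives a shorter directed path, and iterating shows $\Anc{\mathbf{U}}$ in $\mathcal{H}$ equals $\Anc{\mathbf{U}}\setminusA\{X\}$ in $\mathcal{M}$ for every $\mathbf{U}\subseteq\mathbf{V}\setminusA\{X\}$. In particular an $X$-avoiding path is $\mathcal{H}$-active relative to $\mathbf{Z}$ iff it is $\mathcal{M}$-active relative to $\mathbf{Z}$ (for $X\notin\mathbf{Z}$), reducing the whole task to: whenever $\notmsep{Y}{W}{\mathbf{Z}}{\mathcal{M}}$, some $\mathcal{M}$-active $Y$--$W$ path relative to $\mathbf{Z}$ avoids $X$. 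I would take a shortest $\mathcal{M}$-active path $p$, suppose $X\in p$ with path-neighbors $A$ and $B$, and contradict minimality by splicing $X$ out. If $X$ is a non-collider on $p$, one of its incident edges — say the one towards $B$ — has a tail at $X$, so $B\in\Ch{X}\cup\N{X}$, $A\in\Adj{X}$, and Condition~1 supplies an edge $A$--$B$; analyzing its marks through ancestrality and maximality (with the single stubborn subcase, where $A$ would turn into a non-collider lying in $\mathbf{Z}$, resolved by applying Condition~2 to the collider path $(X,A,A_{\text{prev}})$ together with the common child $B$ of $X$ and $A$) yields a strictly shorter active path. If $X$ is a collider on $p$, then $X\in\Anc{\{Y,W\}\cup\mathbf{Z}}$; combining the run of colliders of $p$ around $X$ with an initial segment of a shortest directed path from $X$ into $\{Y,W\}\cup\mathbf{Z}$, one extracts a collider path emanating from $X$ whose vertices share a common child, and Condition~2 then forces an adjacency that lets us reroute $p$ into a strictly shorter active path.

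I expect the heart of the difficulty to be exactly this last splicing step: after excising $X$ one must check that every affected vertex keeps its collider/non-collider role, or, when the role changes, that activeness (including the $\Anc{\{Y,W\}\cup\mathbf{Z}}$ membership of colliders) is nonetheless preserved — and in particular the collider case, where the real work is pinpointing the right collider path to feed to Condition~2 and verifying its common-child hypothesis $\{X,V_1,\dots,V_m\}\subseteq\Pa{Z}$. There is no single slick step here; the content lies in organizing this structural case analysis so that every $\mathcal{M}$-active path can be routed around $X$.
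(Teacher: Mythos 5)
Your necessity argument and your reduction of sufficiency are sound and essentially mirror the paper's: the contrapositive via a separating set forced to contain $X$ matches the paper's use of the always-active paths $(Z,X,Y)$ and $(Z,X,V_1,\dots,V_m,Y)$, and your ``bypass lemma'' plus the equivalence of $\mathcal{M}$- and $\mathcal{H}$-activeness for $X$-avoiding paths are exactly the paper's preliminary lemmas on descendant preservation. The difference is in how the hard half is organized: the paper fixes an arbitrary $\mathcal{M}$-path and shows it is blocked, using as its contradiction lever the fact that each spliced path lies in $\mathcal{H}$ and is therefore blocked there; you instead work entirely inside $\mathcal{M}$ with a shortest-active-path minimality argument. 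That recasting creates a concrete gap in your collider case: the natural reroute replaces $X$ by a child $T$ of $X$ lying \emph{off} the path (the first vertex of a directed path from $X$ into $\{Y,W\}\cup\mathbf{Z}$), so the rerouted path has the \emph{same} length as $p$, and ``contradict minimality by splicing $X$ out'' does not apply as stated. You would have to either argue directly that this same-length reroute is active and $X$-avoiding (which already requires handling the possibility that $T$ lies on $p$, so the reroute is only a walk and an active-walk-to-active-path step is needed), or observe that the length only drops after at least one application of Condition 2; neither move appears in the proposal.

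The second gap is that the ``single stubborn subcase'' is in fact a recursion, and the recursion is where the whole proof lives. After one application of Condition 2 the re-spliced path $(\dots,A_{\mathrm{prev}},B,\dots)$ can again be blocked at the newly exposed vertex (a collider of $p$ turned non-collider lying in $\mathbf{Z}$), the same obstruction can arise on the $B$ side (e.g.\ when both $A,B\in\Ch{X}$, and in the collider case when $B\leftrightarrow X$ with $B\to T$), and each step needs the mark/ancestrality analysis showing the next vertex is a parent of the common child so that Condition 2 applies with a longer collider path $(X,A,A_{\mathrm{prev}},\dots)$. One must then iterate all the way to an endpoint and verify that the terminal spliced path is genuinely active (e.g.\ that the status of $B$ and the ancestor property of the surviving colliders are unchanged). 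This inductive case analysis is precisely the content of the paper's proof of the ``if'' direction; your proposal names it as ``the real work'' but does not carry it out, so as it stands the argument is a correct plan with its core step unproved rather than a complete proof.
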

    Figure \ref{fig: graph-rep} represents the graphical constraints of Theorem \ref{thm: graph-rep}. 
    Figure \ref{fig: graph-rep not rem} depicts the first condition, where $Z$ is either a child or a neighbor of $X$, and $Y\in\Adj{X}$, while Figure \ref{fig: graph-rep rem} depicts a collider path where
    $X$ and $V_i$s are parents of $Z$. 
    Theorem \ref{thm: graph-rep} states that $X$ is removable if and only if the edges highlighted in red are present in both cases.
    See Appendix \ref{sec: apd graphical of removable} for a formal proof and further discussion on Theorem \ref{thm: graph-rep}.
    The next proposition clarifies why removable variables are exactly those that can be removed at each iteration in our recursive approach.
    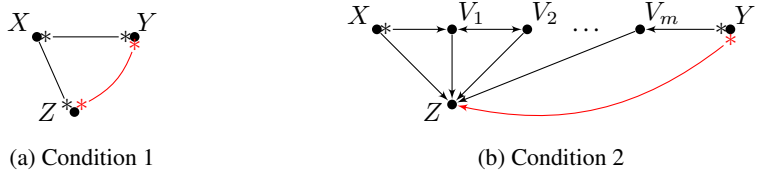
\begin{figure}[t] 
        \centering
    	\tikzstyle{block} = [circle, inner sep=1.3pt, fill=black]
    	\tikzstyle{input} = [coordinate]
    	\tikzstyle{output} = [coordinate]
    	\begin{subfigure}[b]{0.44\textwidth}
    	    \centering
    	    \begin{tikzpicture}
                \tikzset{edge/.style = {->,> = latex'}}
                % vertices
                \node[block] (x) at  (0,1) {};
                \node[] ()[above left=-0.1cm and -0.1cm of x]{$X$};
                \node[block] (z) at  (0.5,0) {};
                \node[block] (y) at  (1.3,1) {};
                \node[style={minimum size=0.1cm},inner sep=0.01pt](starxtz) [above left=-0.05cm and -0.05cm of z]{$*$};
                \node[style={minimum size=0.1cm},inner sep=0.01pt](starxty) [left= -0.05cm of y]{$*$};
                \node[style={minimum size=0.1cm},inner sep=0.01pt](starytx) [right= -0.05cm of x]{$*$};
                \node[color=red,style={minimum size=0.1cm},inner sep=0.01pt](starytz) [above right=-0.05cm and -0.05cm of z]{$*$};
                \node[color=red,style={minimum size=0.1cm},inner sep=0.01pt](starzty) [below =-0.05cm  of y]{$*$};
                \node[] ()[above right=-0.1cm and -0.15cm of y]{$Y$};

                \node[] ()[above left=-0.3cm and 0.05cm of z]{$Z$};
                %edges
                \draw[edge, color=red, style={-}, bend right = 20] (starytz) to (starzty);
                \draw[edge, style={-}] (starxtz) to (x);
                \draw[edge, style={-}](starxty) to (starytx);
            \end{tikzpicture}
        \caption{Condition 1}
        \label{fig: graph-rep not rem}
    	\end{subfigure}
        \begin{subfigure}[b]{0.44\textwidth}
    	    \centering
    	    \begin{tikzpicture}
                \tikzset{edge/.style = {->,> = latex'}}
                % vertices
                \node[block] (x) at  (0,1) {};
                \node[] ()[above left=-0.1cm and -0.1cm of x]{$X$};
                \node[block] (v1) at  (1,1) {};
                \node[] ()[above right=-0.1cm and -0.1cm of v1]{$V_1$};
                \node[style={minimum size=0.1cm},inner sep=0.01pt](starv1tx) [right= -0.05cm of x]{$*$};
                \node[block] (v2) at  (2,1) {};
                \node[] ()[above right=-0.1cm and -0.12cm of v2]{$V_2$};
                
                \node[right=0.4cm of v2]{\ldots};
                
                \node[block] (vm) at  (3.5,1) {};
                \node[] ()[above right=-0.1cm and -0.12cm of vm]{$V_m$};
                
                \node[block] (y) at  (4.7,1) {};
                \node[] ()[above right=-0.1cm and -0.12cm of y]{$Y$};
                \node[style={minimum size=0.1cm},inner sep=0.01pt](starvmty) [left= -0.05cm of y]{$*$};
                \node[block] (z) at  (1,0) {};
                \node[] ()[above left=-0.4cm and -0.05cm of z]{$Z$};
                \node[color=red,style={minimum size=0.1cm},inner sep=0.01pt](star) [below left=0cm and -0.15cm of y]{$*$};
                %edges
                \draw[edge] (starv1tx) to (v1);
                \draw[edge, style={<->}] (v1) to (v2);
                \draw[edge] (starvmty) to (vm);
                \draw[edge] (x) to (z);
                \draw[edge] (v1) to (z);
                \draw[edge] (v2) to (z);
                \draw[edge] (vm) to (z);
                \draw[edge, color=red, style={->}, bend left = 25] (star) to (z);
            \end{tikzpicture}
        \caption{Condition 2}
        \label{fig: graph-rep rem}
    	\end{subfigure}
        \caption{Graphical characterization of a removable variable. The edge marks indicated by a star ($*$) can be either a tail or an arrowhead.}
        \label{fig: graph-rep} 
    \end{figure}
    \begin{proposition} \label{prp: removable-subgraph}
        Suppose $\mathbf{V}\subseteq \mathbfcal{O}$ and $X\in \mathbf{V}$. $\GV[\mathbf{V}\setminusA \{X\} \vert \mathbfcal{S}]$ is equal to the induced subgraph of $\GV[\mathbf{V} \vert \mathbfcal{S}]$ over $\mathbf{V}\setminusA \{X\}$ if and only if $X$ is removable in $\GV[\mathbf{V} \vert \mathbfcal{S}]$.
    \end{proposition}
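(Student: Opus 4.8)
The plan is to work with three objects: the projected MAG $\mathcal{M}_1:=\GV[\mathbf{V}\vert\mathbfcal{S}]$, the marginalized MAG $\mathcal{M}_2:=\GV[\mathbf{V}\setminusA\{X\}\vert\mathbfcal{S}]$, and the induced subgraph $\mathcal{H}$ of $\mathcal{M}_1$ over $\mathbf{V}\setminusA\{X\}$; the claim to prove is that $\mathcal{M}_2=\mathcal{H}$ if and only if $X$ is removable in $\mathcal{M}_1$. First I would record the bridge that drives everything: the latent projections defining $\mathcal{M}_1$ and $\mathcal{M}_2$ use the \emph{same} selection set $\mathbfcal{S}$ and differ only in that $\mathcal{M}_2$ additionally treats $X$ as latent, so applying Equation \eqref{eq: msep iff dsep} to each projection gives, for every $Y,W\in\mathbf{V}\setminusA\{X\}$ and every $\mathbf{Z}\subseteq(\mathbf{V}\setminusA\{X\})\setminusA\{Y,W\}$,
\[
\msep{Y}{W}{\mathbf{Z}}{\mathcal{M}_1}\;\iff\;\msep{Y}{W}{\mathbf{Z}\cup\mathbfcal{S}}{\GV}\;\iff\;\msep{Y}{W}{\mathbf{Z}}{\mathcal{M}_2}.
\]
Thus $\mathcal{M}_1$ and $\mathcal{M}_2$ impose identical m-separation relations over $\mathbf{V}\setminusA\{X\}$, and comparing with Definition \ref{def: removable} shows that $X$ is removable in $\mathcal{M}_1$ \emph{exactly when} $\mathcal{H}$ and $\mathcal{M}_2$ impose identical m-separation relations. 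The ``only if'' direction is then immediate, since $\mathcal{M}_2=\mathcal{H}$ trivially makes the two m-separation relations coincide.

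For the ``if'' direction I would assume $X$ removable, so $\mathcal{H}$ and $\mathcal{M}_2$ are m-separation equivalent, and then upgrade this to graph equality in three moves. (a) Argue that $\mathcal{H}$ is itself a MAG: ancestrality is inherited from $\mathcal{M}_1$ (deleting a vertex introduces no directed or almost-directed cycle, and the endpoints of any undirected edge of $\mathcal{H}$ already had no parents or spouses in $\mathcal{M}_1$), and maximality follows by contradiction --- a failure would give non-adjacent $Y,W$ joined by an inducing path in $\mathcal{H}$, all of whose internal vertices are colliders lying in the $\mathcal{H}$-ancestor set of $\{Y,W\}$; since ancestor sets only shrink when edges are removed, the very same path is an inducing path in $\mathcal{M}_1$, forcing $Y,W$ adjacent in $\mathcal{M}_1$ and hence in $\mathcal{H}$, a contradiction. (b) Deduce equal skeletons: a pair of vertices non-adjacent in the MAG $\mathcal{H}$ has a separating set, which by m-separation equivalence also separates them in $\mathcal{M}_2$, hence they are non-adjacent in $\mathcal{M}_2$; the reverse inclusion is symmetric, using that $\mathcal{M}_2$ is a MAG. (c) Deduce equal orientations: an edge present between $Y$ and $W$ in both graphs is oriented in $\mathcal{H}$ according to $\mathcal{M}_1=\GV[\mathbf{V}\vert\mathbfcal{S}]$, that is, via the conditions ``$Y\in\Anc{\{W\}\cup\mathbfcal{S}}$'' and ``$W\in\Anc{\{Y\}\cup\mathbfcal{S}}$'' evaluated in $\GV$ (Definition \ref{def: DAG to MAG}(ii)), while in $\mathcal{M}_2=\GV[\mathbf{V}\setminusA\{X\}\vert\mathbfcal{S}]$ the same edge is oriented by exactly this rule on exactly the same ancestral relations in $\GV$; so the marks coincide. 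Since a mixed graph is determined by its skeleton together with the marks at the two ends of every edge, (b) and (c) yield $\mathcal{M}_2=\mathcal{H}$.

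Two remarks on the shape of the proof. It is mostly a careful assembly of standard facts, and in particular it does \emph{not} invoke the graphical characterization of Theorem \ref{thm: graph-rep}: this proposition is the semantic content of removability, whereas the conditions of Theorem \ref{thm: graph-rep} are the tool one uses to test removability from data. The step I expect to require the most care is (a) --- proving that the induced subgraph $\mathcal{H}$ is again maximal --- since the natural worry is an inducing path in $\mathcal{H}$ that exploited ancestral relations destroyed by deleting $X$; the lifting argument sketched above is precisely what excludes this. A secondary point to treat carefully is the opening step: one must check that the two latent projections differ only by moving $X$ into the latent set, so that Equation \eqref{eq: msep iff dsep} applies verbatim, with the same $\mathbfcal{S}$, to both.
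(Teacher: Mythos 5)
Your proposal is correct and takes essentially the same route as the paper's proof: both reduce the statement to the fact that the two latent projections impose identical m-separations over $\mathbf{V}\setminusA\{X\}$ (the paper phrases this bridge through conditional independence and faithfulness, you invoke Equation \eqref{eq: msep iff dsep} directly), and both upgrade the resulting Markov equivalence of $\mathcal{H}$ and $\GV[\mathbf{V}\setminusA\{X\}\vert\mathbfcal{S}]$ to graph equality by first matching skeletons and then observing that the edge marks in both graphs are determined by the same ancestral relations in $\GV$ via Definition \ref{def: DAG to MAG}. Your step (a), explicitly verifying that the induced subgraph $\mathcal{H}$ is itself a maximal ancestral graph, is a detail the paper leaves implicit when it appeals to equality of skeletons for Markov-equivalent graphs, but it does not change the overall argument.
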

    Appendix \ref{sec: apd proof} includes the proofs of our results.
    Identifying a removable variable at each iteration is the core of L-MARVEL.
    We will discuss an efficient algorithm to determine whether a variable is removable in Section \ref{sec: test removability}.
    At each iteration, given the set of remaining variables $\mathbf{V}$, these variables are sorted in ascending order of their Markov boundary size. 
    Starting with the variable with the smallest Markov boundary, we search for its adjacent vertices within its Markov boundary. 
    If $Y\in\Mb{X_i}{\mathbf{V}}$ is not adjacent to $X_i$, then $X_i$ and $Y$ have a separating set in $\Mb{X_i}{\mathbf{V}}\setminusA\{Y\}$ \cite{pellet2008finding}.
    Hence, identifying $\Adj{X_i}$ can be performed using a brute-force search in the Markov boundary, using at most $\vert\Mb{X_i}{\mathbf{V}}\vert2^{(\vert\Mb{X_i}{\mathbf{V}}\vert-1)}$ CI tests. 
    In Section \ref{sec: complexity}, we show that the loop in line 6 of Algorithm \ref{alg: L-MARVEL} never reaches variables with large Markov boundaries, and this guarantees that both the number of CI tests and their order remains small throughout the algorithm. 
    We then determine whether $X_i$ is removable given $\Mb{X_i}{\mathbf{V}}$ and $\Adj{X_i}$, using the efficient algorithm we shall discuss in Section \ref{sec: test removability}.
    We continue this procedure until we identify the first removable variable $X=X_i$. 
    Then, we remove $X$ from the set of remaining variables, and update the Markov boundaries with respect to $\mathbf{V}\setminusA\{X\}$, which is the input to the next iteration. 
    The latter does not require the discovery of Markov boundaries from scratch, and is implemented as we shall see in Section \ref{sec: Mb}.
    
    The rest of the section is dedicated to showing how to efficiently identify a \emph{removable} variable (Section \ref{sec: test removability}), how to update the Markov boundary information (Section \ref{sec: Mb}), and the analysis of the algorithm (Section \ref{sec: complexity}).
    \subsection{Testing removability in MAGs}
        % The following theorem presents the conditions of removability of a variable using CI tests within the Markov boundary.
        % This theorem excludes a very specific structure of the MAG $\mathcal{M}$, where $\mathcal{M}$ has a cycle of length at least 4, that contains only undirected edges, and furthermore, this cycle has no chords. We shall discuss in Appendix \ref{sec: apd chordal}, which specific structure of the DAG $\GV$ this MAG represents, and why it is required to exclude this certain structure. As we shall see in Appendix \ref{sec: apd chordal}, such MAGs imply very restrictive structure over the selection variables. 
        The following theorem presents the conditions of removability of a variable using CI tests within the Markov boundary.
        This theorem excludes a particular structure of the MAG $\mathcal{M}$, where $\mathcal{M}$ has a cycle of the length of at least four that contains only undirected edges, and this cycle has no chords. We shall discuss in Appendix \ref{sec: apd chordal}, which specific structure of the DAG $\GV$ this MAG represents, and why it is required to exclude this specific structure. As we shall see in Appendix \ref{sec: apd chordal}, such MAGs imply a very restrictive structure over the selection variables. 
        
        \begin{theorem} \label{thm: test removability}
            Suppose the edge-induced subgraph of $\mathcal{M}$ over the undirected edges (i.e., the edges due to selection bias) is chordal. 
            Let $\mathcal{G} = \GV[\mathbf{V}|\mathbfcal{S}]$ for some $\mathbf{V}\subseteq \mathbfcal{O}$.
            $X\in\mathbf{V}$ is removable in $\mathcal{G}$ if and only if for every $Y\in \Adj{X}$ and $Z\in \Mb{X}{\mathbf{V}}$, at least one of the following holds.
            \begin{enumerate}
                \item[] \textbf{Condition 1:\:}
                    $\exists \mathbf{W}\subseteq \Mb{X}{\mathbf{V}} \setminusA \{Y,Z\}\!:\: Y\independent Z\vert{\mathbf{W}}.$
                \item[] \textbf{Condition 2:\:}
                    $\forall \mathbf{W}\subseteq \Mb{X}{\mathbf{V}} \setminusA \{Y,Z\}\!:\: Y\notindependent Z\vert{\mathbf{W}\cup \{X\}}.$
            \end{enumerate}
            Furthermore, the set of removable vertices in $\mathcal{G}$ is non-empty.
        \end{theorem}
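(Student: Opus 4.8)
\emph{Setup and reduction.} Write $\mathcal{G}=\GV[\mathbf{V}\vert\mathbfcal{S}]$. By faithfulness and Equation~\eqref{eq: msep iff dsep}, $\CI{A}{B}{\mathbf{C}}{P_{\mathbf{V}\vert\mathbfcal{S}}}\iff\msep{A}{B}{\mathbf{C}}{\mathcal{G}}$ for all $A,B\in\mathbf{V}$ and $\mathbf{C}\subseteq\mathbf{V}\setminus\{A,B\}$, so Conditions~1 and~2 are exactly their m-separation analogues in $\mathcal{G}$. Under faithfulness $\Mb{X}{\mathbf{V}}$ is the set of vertices joined to $X$ by a collider path in $\mathcal{G}$; in particular $\Adj{X}\subseteq\Mb{X}{\mathbf{V}}$ and $X\notin\Mb{X}{\mathbf{V}}$. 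By Theorem~\ref{thm: graph-rep}, ``$X$ is removable in $\mathcal{G}$'' is equivalent to the conjunction of the two graphical conditions stated there. The plan is therefore to show that some pair $Y\in\Adj{X}$, $Z\in\Mb{X}{\mathbf{V}}$ violates \emph{both} CI conditions if and only if \emph{one} of the two graphical conditions of Theorem~\ref{thm: graph-rep} fails, and then to prove the non-emptiness claim separately.

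\emph{A violated graphical condition yields a bad pair.} If condition~1 of Theorem~\ref{thm: graph-rep} fails there are non-adjacent $Y\in\Adj{X}$ and $Z\in(\Ch{X}\cup\N{X})\setminus\{Y\}$, with $Z\in\Mb{X}{\mathbf{V}}$, and I take the pair $(Y,Z)$. If condition~2 fails via a collider path $(X,V_1,\dots,V_m,Y)$ with $\{X,V_1,\dots,V_m\}\subseteq\Pa{Z}$ and $Y,Z$ non-adjacent, then $X\to Z$ (so $Z\in\Adj{X}$) and $Y\in\Mb{X}{\mathbf{V}}$, and I take the pair $(Z,Y)$. In both cases the chosen pair consists of one vertex of $\Adj{X}$ and one vertex of $\Mb{X}{\mathbf{V}}$, and there is a path between them --- the two-edge path through $X$ in the first case, and the path $Y,V_m,\dots,V_1,X,Z$ (using the collider-path edges, whose internal edges are bidirected, and $X\to Z$) in the second --- on which $X$ is the only non-collider, because the edge leaving $X$ towards $Z$ has a tail at $X$, while every other non-endpoint is a collider belonging to $\Anc{Z}$. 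Since $X\notin\Mb{X}{\mathbf{V}}$ and $Z$ is an endpoint of this path, the path is m-connecting relative to every subset of $\Mb{X}{\mathbf{V}}$, so Condition~1 fails for the chosen pair. Condition~2 fails as well, because the two vertices can be m-separated by a set contained in $\{X\}\cup\Mb{X}{\mathbf{V}}$ --- built from $X$, the $V_i$, and suitable parent and district vertices --- where $Z\in\Ch{X}\cup\N{X}$ (respectively $X\to Z$) keeps these vertices inside $\{X\}\cup\Mb{X}{\mathbf{V}}$, and chordality of the undirected part of $\mathcal{M}$ is used to handle the subcase $Z\in\N{X}$.

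\emph{A bad pair forces a violated graphical condition.} Suppose $Y\in\Adj{X}$, $Z\in\Mb{X}{\mathbf{V}}$ violate both CI conditions; then $Y,Z$ are non-adjacent (adjacent vertices always satisfy Condition~2), no subset of $\Mb{X}{\mathbf{V}}$ m-separates $Y,Z$, and $\msep{Y}{Z}{\mathbf{W}^{*}\cup\{X\}}{\mathcal{G}}$ for some $\mathbf{W}^{*}\subseteq\Mb{X}{\mathbf{V}}\setminus\{Y,Z\}$. Adjoining $X$ to a conditioning set can only block paths on which $X$ is a non-collider and can only activate paths on which $X$ is a collider lying in $\Anc{\{Y,Z\}\cup\mathbf{W}^{*}}$; since $\mathbf{W}^{*}\cup\{X\}$ m-separates $Y,Z$, no path of the latter type is active, so every $\mathbf{W}^{*}$-active $Y$--$Z$ path passes through $X$ as a non-collider. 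Assume, for contradiction, that both graphical conditions of Theorem~\ref{thm: graph-rep} hold; I then reroute each such path away from $X$. Where the path enters $X$ from $A$ and leaves towards $B$ with a tail at $X$, one has $B\in\Ch{X}\cup\N{X}$ and $A\in\Adj{X}$, so condition~1 supplies the edge between $A$ and $B$ and the segment through $X$ is short-circuited; a segment in which $X$ feeds a bidirected collider chain into a common child is treated using condition~2; and chordality of the undirected part of $\mathcal{M}$ makes these rewritings valid when the edge leaving $X$ on the path is undirected. After finitely many rewritings every $Y$--$Z$ path avoiding $X$ would be blocked by $\mathbf{W}^{*}$ together with the finitely many vertices introduced, all of which lie in $\Mb{X}{\mathbf{V}}$, contradicting that no subset of $\Mb{X}{\mathbf{V}}$ m-separates $Y,Z$. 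This path-surgery step, which guarantees that the activity of every $\mathbf{W}^{*}$-active path can be removed while keeping the conditioning set inside $\Mb{X}{\mathbf{V}}$ --- and in particular copes with undirected edges incident to $X$, exactly where the chordality hypothesis is needed --- is the main obstacle in the proof.

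\emph{Non-emptiness.} The subgraph of $\mathcal{G}$ formed by its directed edges is a DAG. Contract each connected component of the subgraph formed by the undirected edges to a single super-vertex. Since an undirected edge of a MAG forces both of its endpoints to have no parents and no spouses, every component of size at least two has no incoming directed edge and contains no directed edge, so the contracted directed graph is acyclic (a directed cycle among super-vertices would involve only singletons, hence produce a directed cycle in $\mathcal{G}$). Let $\mathbf{C}$ be a sink super-vertex. If $|\mathbf{C}|=1$, say $\mathbf{C}=\{X\}$, then $X$ has no children and no neighbors, so both conditions of Theorem~\ref{thm: graph-rep} are vacuous and $X$ is removable. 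If $|\mathbf{C}|\ge 2$, then no vertex of $\mathbf{C}$ has a child, and $\mathbf{C}$ is chordal by hypothesis; taking $X$ to be a simplicial vertex of $\mathbf{C}$ gives $\Adj{X}=\N{X}$ equal to a clique and $\Ch{X}=\emptyset$, so condition~1 of Theorem~\ref{thm: graph-rep} holds (its conclusion reduces to adjacency inside the clique $\N{X}$) and condition~2 is vacuous, whence $X$ is removable.
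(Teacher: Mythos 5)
Your high-level plan (translate the CI conditions into m-separations and pit them against the graphical characterization of Theorem \ref{thm: graph-rep}) is the right one, and your non-emptiness argument is essentially sound and close to the paper's (a childless/sink vertex when a directed or bidirected edge exists, a simplicial vertex of the undirected part otherwise). But both directions of the equivalence have genuine gaps. In the direction ``a violated graphical condition yields a bad pair,'' you correctly show that Condition 1 fails for your chosen pair, but the failure of Condition 2 --- i.e.\ the \emph{existence} of a set $\mathbf{W}\subseteq \Mb{X}{\mathbf{V}}\setminusA\{Y,Z\}$ with $\msep{Y}{Z}{\mathbf{W}\cup\{X\}}{\mathcal{G}}$ --- is only asserted (``built from $X$, the $V_i$, and suitable parent and district vertices''). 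This is exactly the nontrivial content: the paper constructs the separator explicitly as $\N{Z}\cup\bigl(\PaP{Z}\cap\Anc{\{Z,Y\}}\bigr)$ and proves (its Lemma on $\PaP{\cdot}$) that this set blocks every $Y$--$Z$ path and lies inside $\Mb{X}{\mathbf{V}}\cup\{X\}$, with a separate construction $\{X\}\cup\N{X}\setminusA\{Y,Z\}$ for the case $Z\in\N{X}$, where chordality of the undirected part is invoked to find a chord blocking purely undirected paths. Without some such construction your first direction is incomplete.

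The second direction is where the larger problem lies, and you acknowledge it yourself: the ``path-surgery'' rerouting of every $\mathbf{W}^{*}$-active path around $X$ is not carried out, and as sketched it amounts to re-proving the hard (``if'') part of Theorem \ref{thm: graph-rep}, with a final sentence that is logically off (you enlarge the conditioning set by the introduced vertices instead of showing that $\mathbf{W}^{*}$ itself blocks). The key observation you are missing is that once both graphical conditions hold, Theorem \ref{thm: graph-rep} already gives you removability, and removability can be used \emph{as a definition} rather than re-deriving path rewirings: from $\msep{Y}{Z}{\mathbf{W}^{*}\cup\{X\}}{\mathcal{G}}$ pass to the induced subgraph $\mathcal{H}$ over $\mathbf{V}\setminusA\{X\}$ --- every $Y$--$Z$ path in $\mathcal{H}$ avoids $X$ and is blocked in $\mathcal{G}$ by $\mathbf{W}^{*}\cup\{X\}$; the blocker is not $X$, a non-collider blocker lies in $\mathbf{W}^{*}$, and a collider blocker keeps blocking because descendants in $\mathcal{H}$ are a subset of descendants in $\mathcal{G}$ --- so $\msep{Y}{Z}{\mathbf{W}^{*}}{\mathcal{H}}$, and Definition \ref{def: removable} transfers this back to $\msep{Y}{Z}{\mathbf{W}^{*}}{\mathcal{G}}$, i.e.\ Condition 1 holds, contradiction. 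This is precisely the paper's Lemma ``removable $\Rightarrow$ m-separation drops $X$'' and it dissolves your ``main obstacle''; as written, your proof of this direction is a gap, not merely a different route.
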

        Using Theorem \ref{thm: test removability} and 
        given $\Adj{X}$ and $\Mb{X}{}$, Algorithm \ref{alg: removability} tests the removability of $X$ by performing 
        $\mathcal{O}(\left\vert\Adj{X}\right\vert\left\vert\Mb{X}{\mathbf{V}}\right\vert2^{\left\vert\Mb{X}{\mathbf{V}}\right\vert})$ CI tests. 
        Note that the removability test is only performed for variables with small Markov boundaries, which keeps both the number of CI tests and the size of the conditioning sets small, as we shall see in Section \ref{sec: complexity}. 
        
        \begin{algorithm}[ht]
               \caption{IsRemovable - Determine whether $X$ is removable.}
               \label{alg: removability}
            \begin{algorithmic}[1]
                \STATE {\bfseries Input:} ($X,\, \Mb{X}{\mathbf{V}},\, P_{\mathbf{V}|\mathbfcal{S}},\, \Adj{X}$)
                \FOR{$Y\in \Adj{X}, Z\in \Mb{X}{\mathbf{V}}$}
    		        \IF{Condition 1 of Theorem \ref{thm: test removability} does not holds}
    		            \IF{Condition 2 of Theorem \ref{thm: test removability} does not holds}
    		                \STATE \textbf{Return} False \hfill \% $X$ is not removable.
    		            \ENDIF
                    \ENDIF   
        		\ENDFOR
    		    \STATE \textbf{Return} True \hfill \% $X$ is removable.
            \end{algorithmic}
        \end{algorithm} \label{sec: test removability}
        Conditions of Theorem \ref{thm: test removability} can be checked in different orders, although we have witnessed in our experiments that checking these conditions in the order of Algorithm \ref{alg: removability} increases the accuracy.
    \subsection{Markov boundary discovery and updating Markov
    boundaries} \label{sec: Mb}
        L-MARVEL requires Markov boundary information for initialization.
        Several algorithms have been proposed in the literature for discovering the Markov boundaries\cite{margaritis1999bayesian, pellet2008using, tsamardinos2003algorithms, yu2018mining}. 
        For instance, TC \cite{pellet2008using} algorithm states that
        \[ 
            \notCI{X}{Y}{\mathbf{V}\setminusA\{X,Y\}}{P_{\mathbf{V}|\mathbfcal{S}}}
            \iff 
            X\in\Mb{Y}{\mathbf{V}} \text{ and } Y\in\Mb{X}{\mathbf{V}},
        \]
        where $\mathbf{V}\subseteq \mathbfcal{O}$.
        Grow-Shrink (GS) algorithm \cite{margaritis1999bayesian} and its modifications, including IAMB and its variants \cite{tsamardinos2003algorithms} address Markov boundary discovery by performing more CI tests with smaller conditioning sets. 
        These algorithms require a linear number of CI tests in the number of variables to determine the Markov boundary of a certain variable, i.e., quadratic number of CI tests to discover the entire Markov boundaries. However, given the challenging nature of Markov boundary discovery, these algorithms might fail to accurately discover this information in some settings. 
        We need to utilize one of these methods\footnote{In our experiments, we used TC.} to initially discover the Markov boundaries, but the subsequent update of the boundaries throughout the later iterations is performed within L-MARVEL as we shall next discuss.
        
        \textbf{Updating Markov boundaries:} 
        Let $\MB{\mathbf{V}}$ be the input to an iteration of L-MARVEL where $X$ is identified as removable and we need to learn $\MB{\mathbf{V}\setminusA\{X\}}$. 
        By definition of removability, the latent projection of $\GV$ over $\mathbf{V}\setminusA\{X\}$ is the induced subgraph of $\GV[\mathbf{V}\vert\mathbfcal{S}]$. 
        As a result, removing $X$ has two effects: 
        \begin{enumerate*}
            \item $X$ is removed from all Markov boundaries, and
            \item for $Y,Z\in\mathbf{V}\setminusA \{X\}$, if all of the collider paths between $Y$ and $Z$ in $\GV[\mathbf{V}\vert\mathbfcal{S}]$ pass through $X$, then $Y$ and $Z$ must be excluded from each others Markov boundary.
        \end{enumerate*} 
        Note that in the latter case, $Y,Z\in\Mb{X}{\mathbf{V}}$. 
        The latter update is performed using a single CI test, i.e., $ \notCI{Y}{Z}{\Mb{Z}{\mathbf{V}} \setminusA \{X,Y,Z\}}{}$, or equivalently, $\notCI{Y}{Z}{\Mb{Y}{\mathbf{V}} \setminusA \{X,Y,Z\}}{}$. 
        We choose the CI test with the smaller conditioning set among the two. If the dependency does not hold, we remove $Y,Z$ from each other's Markov boundary.
    \subsection{Analysis}\label{sec: complexity}
        First, we state the soundness and completeness of L-MARVEL in the following theorem. 
        \begin{theorem}\label{thm: sound and complete}
            Suppose the distribution $\PV$ over $\V = \mathbfcal{O}\cup\mathbfcal{L}\cup\mathbfcal{S}$ is faithful to the DAG $\GV$. If the conditional independence relations among all variables in $\mathbfcal{O}$ given $\mathbfcal{S}$ is provided to L-MARVEL, the output of L-MARVEL is the PAG representing the Markov equivalence class of $\GV[\mathbfcal{O}\vert\mathbfcal{S}]$.
        \end{theorem}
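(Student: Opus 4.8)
The plan is strong induction on $|\mathbf{V}|$ for the recursive function \texttt{L-MARVEL}, carrying an invariant through the subroutines and then reducing the PAG recovery to the completeness of the orientation rules. For every call of \texttt{L-MARVEL} on $(\mathbf{V},P_{\mathbf{V}|\mathbfcal{S}},\MB{\mathbf{V}},\A)$ reachable from the top-level call on $\mathbfcal{O}$, I would maintain three facts: (i) $\GV[\mathbf{V}|\mathbfcal{S}]$ is the induced subgraph of $\mathcal{M}$ over $\mathbf{V}$, so that adjacency in $\GV[\mathbf{V}|\mathbfcal{S}]$ coincides with adjacency in $\mathcal{M}$ and, by iterating Definition \ref{def: removable}, m-separation relative to any $\mathbf{Z}\subseteq\mathbf{V}$ is the same in $\GV[\mathbf{V}|\mathbfcal{S}]$ and in $\mathcal{M}$; (ii) $\MB{\mathbf{V}}$ is the correct Markov-boundary information of $\GV[\mathbf{V}|\mathbfcal{S}]$; and (iii) for every pair of vertices already processed, $\A$ correctly records whether the pair is adjacent in $\mathcal{M}$ and, if not, a valid separating set. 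Faithfulness of $\PV$ to $\GV$ together with \eqref{eq: msep iff dsep} guarantees that each CI query in $P_{\mathbf{V}|\mathbfcal{S}}$ returns the m-separation answer of $\GV[\mathbf{V}|\mathbfcal{S}]$, so every subroutine behaves as its graphical specification dictates. The base case $|\mathbf{V}|=1$ is immediate, and when the recursion unwinds $\A$ holds the skeleton of $\mathcal{M}$ and a separating set for each non-adjacent pair.

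For the inductive step I would verify that each subroutine preserves the invariant. By the non-emptiness clause of Theorem \ref{thm: test removability}, $\GV[\mathbf{V}|\mathbfcal{S}]$ has a removable vertex, so the \textbf{for}-loop of the function is guaranteed to reach one; by the equivalence in the same theorem, \texttt{IsRemovable} returns \texttt{true} on $X_i$ exactly when $X_i$ is removable in $\GV[\mathbf{V}|\mathbfcal{S}]$ -- this is the point at which the chordality assumption on the undirected part of $\mathcal{M}$ enters, since that property is inherited by the induced subgraph $\GV[\mathbf{V}|\mathbfcal{S}]$. For each visited $X_i$, \texttt{FindAdjacent} returns $\Adj{X_i}$ correctly: any vertex adjacent to $X_i$ in $\GV[\mathbf{V}|\mathbfcal{S}]$ lies in $\Mb{X_i}{\mathbf{V}}$, and if $Y\in\Mb{X_i}{\mathbf{V}}$ is non-adjacent to $X_i$ then a subset of $\Mb{X_i}{\mathbf{V}}\setminusA\{Y\}$ separates them \cite{pellet2008finding}, which the brute-force search finds and stores in $\A$; combined with the separating sets recorded at initialization and by earlier \texttt{UpdateMb} calls, $\A$ thereby holds a valid separating set for every pair non-adjacent in $\mathcal{M}$ that involves $X_i$. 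By invariant (i) all these conclusions are valid in $\mathcal{M}$ as well.

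Once a removable $X$ is found, Proposition \ref{prp: removable-subgraph} gives that $\GV[\mathbf{V}\setminusA\{X\}|\mathbfcal{S}]$ is the induced subgraph of $\GV[\mathbf{V}|\mathbfcal{S}]$ over $\mathbf{V}\setminusA\{X\}$, which re-establishes invariant (i) for the recursive call; together with Definition \ref{def: removable} this also shows that all adjacencies and separating sets over $\mathbf{V}\setminusA\{X\}$ are unchanged, so recursing on $\mathbf{V}\setminusA\{X\}$ and appending the information about $X$ gathered by \texttt{FindAdjacent} is exactly what the problem requires. It then remains to check that \texttt{UpdateMb} restores invariant (ii). Using the characterization of the Markov boundary as the set of vertices joined by a collider path in the current latent projection, deleting $X$ produces only the two effects described in Section \ref{sec: Mb}: $X$ leaves every boundary, and a pair $Y,Z\in\Mb{X}{\mathbf{V}}$ leaves each other's boundary precisely when every collider path between them ran through $X$, which is detected by the single test $\notCI{Y}{Z}{\Mb{Z}{\mathbf{V}}\setminusA\{X,Y,Z\}}{}$. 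Hence \texttt{UpdateMb} outputs the Markov-boundary information $\MB{\mathbf{V}\setminusA\{X\}}$ of $\GV[\mathbf{V}\setminusA\{X\}|\mathbfcal{S}]$, and the induction hypothesis applies to the recursive call, closing the loop.

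At termination $\A$ determines the skeleton of $\mathcal{M}$ exactly -- no FCI-style skeleton-pruning pass is needed because the adjacency search runs inside the true Markov boundaries -- together with a separating set for every non-adjacent pair; this is precisely the input on which the orientation rules 0--10 of \cite{zhang2008causal} are sound and complete, so applying them in the last line of Algorithm \ref{alg: L-MARVEL} yields the PAG of the Markov equivalence class of $\mathcal{M}=\GV[\mathbfcal{O}|\mathbfcal{S}]$. I expect the main obstacle to be the correctness of \texttt{UpdateMb}: one must show that deleting a removable vertex changes the Markov boundaries only through the two enumerated effects and that the second effect is captured \emph{exactly} by a single CI query, which forces one to combine the collider-path description of Markov boundaries with the full strength of removability (equality of m-separation relations, not merely of skeletons, as supplied by Proposition \ref{prp: removable-subgraph} and Definition \ref{def: removable}). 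A secondary subtlety is that restricting the adjacency search to $\Mb{X_i}{\mathbf{V}}$ at every recursion depth is legitimate only because invariant (ii) is preserved exactly, not approximately.
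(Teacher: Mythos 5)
Your proposal is correct and follows essentially the same route as the paper's proof: an induction over the recursive calls maintaining the invariant that $\GV[\mathbf{V}|\mathbfcal{S}]$ equals the induced subgraph of $\mathcal{M}$ (re-established via Theorem \ref{thm: test removability}, the non-emptiness of removable vertices, and Proposition \ref{prp: removable-subgraph}), soundness of all recorded separating sets, and finally the completeness of the orientation rules of \cite{zhang2008causal}. You are somewhat more explicit than the paper about the Markov-boundary invariant and the correctness of \textbf{UpdateMb}, which the paper asserts with little elaboration, but the overall argument is the same.
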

        Let $\deltaplus{\mathcal{H}}$ denote the maximum size of $\PaP{\cdot}$ (defined in \eqref{eq: parentplus}) in a MAG $\mathcal{H}$, i.e.,
        \begin{equation} \label{eq: def deltaplus}
            \deltaplus{\mathcal{H}}=\max_{X\in\mathcal{H}}{\left\vert \PaP{X}\right\vert}.
        \end{equation}
        Next, we provide an upper bound on the size of the Markov boundary of a removable variable.
        \begin{proposition}\label{prp: mb bound}
            If $X$ is a removable variable in MAG $\mathcal{H}$ with vertices $\mathbf{V}$, then $\vert\Mb{X}{\mathbf{V}}\vert\leq\deltaplus{\mathcal{H}}$.
        \end{proposition}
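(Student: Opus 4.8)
The plan is to prove the stronger statement that $\Mb{X}{\mathbf{V}}\subseteq\PaP{Z}$ for some single vertex $Z\in\mathbf{V}$; since $|\PaP{Z}|\leq\deltaplus{\mathcal{H}}$ by definition, this yields the claim. First I would record a generic inclusion that does \emph{not} use removability. Recalling that $\Mb{X}{\mathbf{V}}$ equals the set of vertices joined to $X$ by a collider path in $\mathcal{H}$, take such a path $(X,V_1,\dots,V_m,Y)$. Because every interior vertex is a collider, the $X$–$V_1$ edge has an arrowhead at $V_1$ (hence it is $X\to V_1$ or $X\leftrightarrow V_1$), the interior edges $V_i$–$V_{i+1}$ are bidirected, and the $V_m$–$Y$ edge has an arrowhead at $V_m$ (hence it is $Y\to V_m$ or $Y\leftrightarrow V_m$). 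Tracing the two cases: if $X\leftrightarrow V_1$ then $V_1,\dots,V_m\in\Dis{X}$ and $Y\in\Dis{X}\cup\Pa{\Dis{X}}$; if $X\to V_1$ then $V_1\in\Ch{X}$, $V_1,\dots,V_m\in\Dis{V_1}$, and $Y\in\Dis{V_1}\cup\Pa{\Dis{V_1}}$. Length–one paths just contribute $\Adj{X}$, which is covered the same way (parents into $\Pa{\Dis{X}}$, spouses into $\Dis{X}$, neighbors into $\N{X}$, children $C$ into $\Dis{C}$). Therefore, for every $X$,
\[
    \Mb{X}{\mathbf{V}}\ \subseteq\ \PaP{X}\ \cup\ \bigcup_{C\in\Ch{X}}\PaP{C}.
\]

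Next I would invoke removability — in the form of the graphical characterization of Theorem~\ref{thm: graph-rep}, together with ancestrality — to collapse this union to a single term. If $\Ch{X}=\emptyset$ we are done with $Z=X$. Otherwise I would choose $C^\star\in\Ch{X}$ to be (a representative of) a ``sink'' child: by Condition~1 of Theorem~\ref{thm: graph-rep} the children of $X$ are pairwise adjacent; two of them joined by a bidirected edge lie in a common district and contribute the same set above, while two of them in distinct districts cannot be joined by an undirected edge (an endpoint of an undirected edge has no parent, yet $X$ is a parent of each child), hence are joined by a directed edge, and ancestrality makes this relation acyclic, so a sink $C^\star$ exists. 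It then remains to show $\Dis{C}\cup\Pa{\Dis{C}}\subseteq\PaP{C^\star}$ for every child $C$ of $X$, and that $\Dis{X}\cup\Pa{\Dis{X}}\cup\N{X}$ (the part of $\PaP{X}$ actually reached from $X$) also sits inside $\PaP{C^\star}$. For the easy pieces: each parent, spouse, or neighbor of $X$ is adjacent to $C^\star$ by Condition~1, and ancestrality forbids $C^\star$ from being its child (a parent or spouse of $X$ that were a child of $C^\star$ closes a directed or almost-directed cycle through $X\to C^\star$; a neighbor of $X$ has no parent), so it lands in $\Adj{C^\star}\setminusA\Ch{C^\star}\subseteq\PaP{C^\star}$. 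For the deeper pieces, a vertex of $\Dis{C}\cup\Pa{\Dis{C}}$ (or of $\Dis{X}\cup\Pa{\Dis{X}}$) lies on a collider path from $X$ through the relevant district, and I would use Condition~2 of Theorem~\ref{thm: graph-rep} with $C^\star$ playing the role of $Z$ — or, equivalently, a direct appeal to the definition of removability — to force such a vertex into $\PaP{C^\star}$. Combining, $\Mb{X}{\mathbf{V}}\subseteq\PaP{C^\star}$, and taking $Z=C^\star$ finishes the proof.

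The hard part will be this last step: converting the global, purely combinatorial removability condition into the local containments for vertices reached through a child's (or $X$'s own) district. Condition~2 of Theorem~\ref{thm: graph-rep} only applies when \emph{all} interior vertices of the collider path, together with $X$, are common parents of a single vertex, so for vertices reached through long bidirected chains inside a district it cannot be used verbatim; instead, for each hypothetical violation one must produce an explicit m-connecting path routed through $X$ and an explicit separating set avoiding $X$, and check that deleting $X$ genuinely alters the m-separation relations, contradicting removability. Getting the bookkeeping right when $X$ has children scattered over several districts — so that the sink child $C^\star$ must be selected with care and $\PaP{X}$ as well as $\PaP{C}$ for every $C\in\Ch{X}$ must all be shown to fit inside $\PaP{C^\star}$ — is where most of the effort will go.
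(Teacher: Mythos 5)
Your outer scaffolding is correct and genuinely different from the paper's argument: the generic inclusion $\Mb{X}{\mathbf{V}}\subseteq\PaP{X}\cup\bigcup_{C\in\Ch{X}}\PaP{C}$ holds, the existence of a sink child $C^\star$ follows from Condition 1 of Theorem~\ref{thm: graph-rep} together with ancestrality as you argue, and your absorption of $\Pa{X}$, the spouses and neighbors of $X$, and the remaining children into $\Pa{C^\star}\cup\Dis{C^\star}$ is sound. However, the step you yourself flag as ``the hard part'' --- showing $\Dis{C}\cup\Pa{\Dis{C}}\subseteq\PaP{C^\star}$ for every child $C$, and likewise $\Dis{X}\cup\Pa{\Dis{X}}\subseteq\PaP{C^\star}$ --- is exactly where the content of the proposition lies, and it is not carried out: you only note that Condition 2 ``cannot be used verbatim'' for long bidirected chains and gesture at constructing explicit m-connecting paths and separating sets, without doing so. As written, the proposal is a plan whose decisive step is missing, so it does not yet constitute a proof. (A minor additional point: your final count implicitly needs $C^\star\in\PaP{C^\star}$, i.e.\ a vertex belonging to its own district; the clean fix is the one the paper uses, namely to show $\{X\}\cup\Mb{X}{\mathbf{V}}\setminusA\{C^\star\}\subseteq\PaP{C^\star}$ and count that set.)

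For what it is worth, the gap is closable without any explicit m-separation construction, by induction along the chain: walk along a collider path $(X,W_0,W_1,W_2,\dots)$ whose interior is a bidirected chain, maintaining the invariant that every visited chain vertex lies in $\Pa{C^\star}\cup\Dis{C^\star}$. If some visited vertex is a spouse of $C^\star$, all later chain vertices are bidirected-connected to it, hence lie in $\Dis{C^\star}$, and a terminal parent lies in $\Pa{\Dis{C^\star}}$; otherwise all visited vertices are parents of $C^\star$, so Condition 2 of Theorem~\ref{thm: graph-rep} applies verbatim with $Z=C^\star$ to force the next vertex adjacent to $C^\star$, and ancestrality (an almost directed or directed cycle through $C^\star$) rules out the orientation $C^\star\!\to$ that vertex, preserving the invariant. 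The paper avoids this bookkeeping entirely by choosing the sink over the whole set rather than over $\Ch{X}$: it takes $Z$ to be a vertex of $\Mb{X}{\mathbf{V}}\cup\{X\}$ with no children inside that set (which exists by acyclicity and need not be a child of $X$), and invokes Lemma~\ref{lem: inclusion} --- when $X$ is removable, any two members of $\Mb{X}{\mathbf{V}}$ are joined by a collider path staying inside $\Mb{X}{\mathbf{V}}\cup\{X\}$ --- so that the edge of each such path incident to $Z$ cannot point out of $Z$, and the entire set $\{X\}\cup\Mb{X}{\mathbf{V}}\setminusA\{Z\}$ falls into $\PaP{Z}$ in one stroke.
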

        L-MARVEL processes variables in the ascending order of their Markov boundary size at each iteration and stops when the first removable variable is identified. 
        Therefore, Proposition \ref{prp: mb bound} guarantees that all the processed vertices at each iteration have Markov boundaries smaller than $\deltaplus{\GV[\mathbf{V}\vert\mathbfcal{S}]}$, where $\mathbf{V}$ is the set of remaining variables. 
        This number gets smaller during the algorithm, as L-MARVEL keeps only a subgraph of the input. 
        Note that this bound applies to the size of the conditioning sets of CI tests performed in functions \textbf{FindAdjacent} and \textbf{IsRemovable}, since the conditioning sets are a subset of the Markov boundary. 
        Furthermore, it results in the following upper bound on the number of CI tests.
        \begin{proposition}\label{prp: upper-bound}
            The number of conditional independence tests Algorithm \ref{alg: L-MARVEL} performs on a MAG $\mathcal{M}$ of order $n$, in the worst case, is upper bounded by
            \begin{equation}\label{eq: upper bound}
                \mathcal{O}(n^2 + n{\deltaplus{\mathcal{M}}}^2 2^{\deltaplus{\mathcal{M}}}).
            \end{equation}
        \end{proposition}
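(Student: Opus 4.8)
The plan is to partition the CI tests performed by Algorithm~\ref{alg: L-MARVEL} into three groups and bound each separately: (i)~the tests of the initial \textbf{ComputeMb} call; (ii)~the tests inside \textbf{FindAdjacent} and \textbf{IsRemovable} over the whole recursion; and (iii)~the tests inside \textbf{UpdateMb}. Group~(i) costs $\mathcal{O}(n^2)$ using any standard Markov-boundary routine (the TC test issues $\binom{n}{2}$ CI tests). The recursion has at most $n-1$ levels, since exactly one vertex is removed at each level; at each level \textbf{UpdateMb} performs one CI test per pair of vertices lying in the Markov boundary of the removed (hence removable) vertex, so — once the bound on such Markov boundaries described below is in hand — group~(iii) costs $\mathcal{O}(n\,{\deltaplus{\mathcal{M}}}^2)$.

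The heart of the argument is group~(ii), and the first step is to show that every vertex $X$ that the inner loop actually \emph{processes} — i.e.\ for which \textbf{FindAdjacent}$(X)$ and \textbf{IsRemovable}$(X)$ are invoked, with remaining vertex set $\mathbf{V}$ — satisfies $|\Mb{X}{\mathbf{V}}|\le\deltaplus{\mathcal{M}}$. This needs two ingredients. First, since only removable vertices are ever deleted, Proposition~\ref{prp: removable-subgraph} applied inductively shows that every graph $\GV[\mathbf{V}\vert\mathbfcal{S}]$ arising in the recursion is an \emph{induced subgraph} of $\mathcal{M}$; and $\deltaplus{\cdot}$ is non-increasing under taking induced subgraphs, because deleting vertices can only shrink parent sets, districts, parents of districts, and neighbourhoods. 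Hence $\deltaplus{\GV[\mathbf{V}\vert\mathbfcal{S}]}\le\deltaplus{\mathcal{M}}$ throughout. Second, by Theorem~\ref{thm: test removability} the set of removable vertices of $\GV[\mathbf{V}\vert\mathbfcal{S}]$ is non-empty, and by Proposition~\ref{prp: mb bound} one of them has a Markov boundary of size at most $\deltaplus{\GV[\mathbf{V}\vert\mathbfcal{S}]}\le\deltaplus{\mathcal{M}}$. Because the loop visits the vertices of $\mathbf{V}$ in non-decreasing order of Markov-boundary size and halts at the first removable vertex it meets, it never reaches a vertex whose Markov boundary exceeds that of this witness, which gives the claim.

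Next I would amortize group~(ii) over the $n$ vertices rather than over (vertex, level) pairs. Fix a vertex $X$, let $\ell_0$ be the first level at which it is processed, and put $M:=\Mb{X}{\mathbf{V}_{\ell_0}}$, so $|M|\le\deltaplus{\mathcal{M}}$ by the previous step. Since the graphs form a nested chain of induced subgraphs, $\Mb{X}{\mathbf{V}_\ell}\subseteq M$ at every later level $\ell$ at which $X$ is processed, and $\Adj{X}\subseteq\Mb{X}{\cdot}$ always. Consequently every CI test that \textbf{FindAdjacent}$(X)$ or \textbf{IsRemovable}$(X)$ can ever issue has the form $A\independent B\mid\mathbf{W}$ with $A,B\in M\cup\{X\}$ and $\mathbf{W}\subseteq M\cup\{X\}$ — for \textbf{IsRemovable} this is exactly the shape of Conditions~1 and~2 of Theorem~\ref{thm: test removability}, whose conditioning sets are subsets of the current Markov boundary, optionally with $X$ adjoined. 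The number of \emph{distinct} such triples is $\mathcal{O}(|M|^2\,2^{|M|})=\mathcal{O}({\deltaplus{\mathcal{M}}}^2\,2^{\deltaplus{\mathcal{M}}})$, so, counting distinct CI queries (equivalently, memoizing their outcomes), the entire contribution of $X$ to group~(ii) is $\mathcal{O}({\deltaplus{\mathcal{M}}}^2\,2^{\deltaplus{\mathcal{M}}})$. Summing over the $n$ vertices gives $\mathcal{O}(n\,{\deltaplus{\mathcal{M}}}^2\,2^{\deltaplus{\mathcal{M}}})$ for group~(ii), and adding the three bounds yields \eqref{eq: upper bound}.

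The main obstacle is precisely this amortization. Charging the exponential cost to each (vertex, level) pair is too lossy, because the loop may re-examine a given vertex at $\Theta(n)$ different levels, which would inflate the bound to $\mathcal{O}(n^2\,{\deltaplus{\mathcal{M}}}^2\,2^{\deltaplus{\mathcal{M}}})$; the point that makes the bound tight is that all CI tests a given vertex $X$ can ever trigger live inside the \emph{fixed} set $M\cup\{X\}$ of size at most $\deltaplus{\mathcal{M}}+1$, so its exponential cost is incurred only once. A second delicate point is the use of Proposition~\ref{prp: removable-subgraph}: it is essential that only \emph{removable} vertices are deleted, since for a general $\mathbf{V}\subseteq\mathbfcal{O}$ the latent projection $\GV[\mathbf{V}\vert\mathbfcal{S}]$ need not be an induced subgraph of $\mathcal{M}$ and can have a strictly larger $\deltaplus{\cdot}$ (cf.\ Figure~\ref{fig: rem}).
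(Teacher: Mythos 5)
Your proposal is correct and follows essentially the same route as the paper's proof: a subroutine-by-subroutine decomposition, the bound $|\Mb{X}{\mathbf{V}}|\le\deltaplus{\mathcal{M}}$ for every processed vertex via Proposition~\ref{prp: mb bound} (together with the fact that the recursion only ever produces induced subgraphs of $\mathcal{M}$), and an amortization over vertices that charges each vertex's exponential cost only once because all its possible CI queries live inside the fixed set $\Mb{X}{\mathbf{V}}\cup\{X\}$ and repeated queries are answered from the stored results in $\mathcal{A}$. Your treatment of the amortization (counting distinct triples, with the nested-induced-subgraph argument making the Markov boundary monotonically shrink) is a slightly more explicit rendering of the paper's "no duplicate tests" argument, but it is the same idea.
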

        The quadratic term in the upper bound of Equation \eqref{eq: upper bound} is for initial Markov boundary discovery.
        Note that algorithms such as GS, TC, IAMB, etc. discover the Markov boundary of each variable requiring only linear number of CI tests in $n$. 
        
        To the best of our knowledge, this is the tightest bound in the literature.
        The following lower bound on all of the constraint based structure learning algorithm demonstrates the efficiency of L-MARVEL.
        \begin{theorem} \label{thm: lwrBound}
    		The number of conditional independence tests of the form $\CI{X}{Y}{\mathbf{Z}}{}$ required by any constraint-based algorithm on a MAG ${\mathcal{M}}$ of order $n$, in the worst case, is lower bounded by
    		\begin{equation} \label{eq: lwrbound}
    		    \Omega(n^2+n{\deltaplus{\mathcal{M}}}2^{\deltaplus{\mathcal{M}}}).
    		\end{equation}
    	\end{theorem}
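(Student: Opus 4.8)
The plan is to prove each of the two summands by an adversary (indistinguishability) argument and to obtain their sum by placing the two constructions on disjoint vertex sets. Throughout, a \emph{constraint-based algorithm} is modeled as one whose only access to the distribution is through queries ``is $\CI{X}{Y}{\mathbf{Z}}{}$?'', and which must output the PAG of the Markov equivalence class of the ground-truth MAG. It then suffices, for every $n$ and every admissible value $\delta=\deltaplus{\mathcal{M}}$, to exhibit a collection $\mathcal{F}$ of MAGs on $n$ vertices, each with $\deltaplus{\cdot}=\delta$, such that any two members of $\mathcal{F}$ agree on ``most'' CI statements while some two members lie in different Markov equivalence classes; an adaptive adversary that, after each query, keeps at least two non-equivalent members of $\mathcal{F}$ consistent with the transcript so far forces any correct algorithm to make the claimed number of queries on the surviving instance.

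\emph{The $\Omega(n^2)$ term.} I would let $\mathcal{F}$ consist of the empty MAG on $\mathbf{V}$ together with, for each unordered pair $\{X,Y\}$, the MAG whose single edge joins $X$ and $Y$. Any two of these have different skeletons, hence lie in different equivalence classes; yet the empty MAG and the single-edge MAG on $\{X,Y\}$ return identical answers to every query that is \emph{not} of the form $\CI{X}{Y}{\mathbf{Z}}{}$ (no other pair of vertices is connected in either graph), while they disagree on \emph{every} query on the pair $\{X,Y\}$. Hence, run against the CI oracle of the empty MAG, a correct algorithm must issue at least one query on each of the $\binom{n}{2}$ pairs; otherwise its transcript would coincide with the one it produces on the corresponding single-edge MAG, which lies in a different class. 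This gives $\binom{n}{2}=\Omega(n^2)$ queries, and the family uses $\deltaplus{\cdot}=0$, so the term is genuinely additive to the one below.

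\emph{The $\Omega(n\,\delta\,2^{\delta})$ term.} The plan is to assemble the hard instance from $\Theta(n/\delta)$ vertex-disjoint copies of a ``hard gadget'' $\mathcal{H}$ on $\Theta(\delta)$ vertices with $\deltaplus{\mathcal{H}}=\delta$; disjointness guarantees that the union still has $\deltaplus{\cdot}=\delta$ and order $n$, and that any query mixing vertices of two copies is uninformative about either. The gadget $\mathcal{H}$ should contain a designated pair of non-adjacent vertices $X,Y$ and $\Theta(\delta)$ ``witness'' vertices whose wiring (realized, if necessary, through an explicit latent-variable DAG and its projection) can be chosen in $2^{\Theta(\delta)}$ ways so that: all choices yield the same skeleton and the same invariant edge marks, so the choice is not revealed by the output PAG; and $X,Y$ are m-separated by a set $\mathbf{Z}$ if and only if $\mathbf{Z}$, restricted to the witnesses, equals the choice-dependent ``separating pattern''. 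Against such a gadget an adversary answers $X\notindependent Y\mid\mathbf{Z}$ to every query on $\{X,Y\}$; this stays consistent with some not-yet-excluded choice (hence with a MAG where $X,Y$ are non-adjacent) until all $2^{\Theta(\delta)}$ witness-patterns have been probed, and it is always consistent with the variant in which $X,Y$ are adjacent, so the algorithm cannot terminate before $\Omega(2^{\delta})$ queries just to resolve this one pair. The final factor $\delta$ would come either from endowing each gadget with $\Theta(\delta)$ designated pairs whose separating patterns are governed by disjoint blocks of witnesses — so their $\Omega(2^{\delta})$-size query sets sit on disjoint vertex pairs and therefore add — or, equivalently, from a counting argument that any correct algorithm must perform a constant fraction of the $\Theta(\delta^{2}2^{\delta})$ CI tests supported on a gadget's vertices. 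Multiplying $\Theta(n/\delta)$ gadgets by the $\Omega(\delta\,2^{\delta})$ queries each forces, and adding the $\Omega(n^2)$ contribution, yields $\Omega(n^2+n\,\delta\,2^{\delta})$.

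\textbf{Main obstacle.} The delicate part is the construction and analysis of $\mathcal{H}$. First, $\mathcal{H}$ must be a genuine MAG realizing \emph{exactly} the prescribed m-separation pattern: ancestrality and maximality constrain how tails, arrowheads, and undirected (selection) edges coexist, so the witnesses cannot simply be declared ``common causes or common colliders of $X,Y$''; the construction must be done with directed/bidirected edges — or staged as a latent DAG and projected — while \emph{simultaneously} forcing $\deltaplus{\mathcal{H}}$ to equal $\delta$ up to lower-order terms (an exponent $c\delta$ with $c\neq 1$ would give a bound of the wrong form). Second, and more subtle, the separating pattern must be hidden not only from the edge marks of the PAG but from \emph{all} CI queries other than those on the designated pair(s): a naive wiring leaks the pattern through tests among the witnesses or through the Markov boundary of $X$, letting the algorithm recover it cheaply and defeat the adversary. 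Guaranteeing this ``pattern symmetry'' of the gadget with respect to all off-pair CI statements — equivalently, designing the adversary's answers to off-pair queries as a covering argument over patterns — is the heart of the proof; once it is in place, checking that the $\Theta(\delta)$ designated tasks inside a gadget demand disjoint query sets, that assembling disjoint copies creates no new inducing paths, and that the parameters line up, is routine.
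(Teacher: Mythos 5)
Your $\Omega(n^2)$ argument is fine and coincides with the paper's: one indistinguishability test per pair between the empty MAG and the single-edge MAG forces $\binom{n}{2}$ queries. The gap is in the second term. Everything hinges on the ``hard gadget'' $\mathcal{H}$ with $2^{\Theta(\delta)}$ hidden separating patterns, identical answers to all off-pair queries, and $\deltaplus{\mathcal{H}}=\delta$, plus the extra factor $\delta$ obtained either from $\Theta(\delta)$ designated pairs per gadget or an unspecified counting argument --- and none of this is actually constructed. You yourself flag it as the ``main obstacle'' and the ``heart of the proof,'' so what you have is a proof plan whose central object is missing; as written, the $\Omega(n\,\deltaplus{\mathcal{M}}\,2^{\deltaplus{\mathcal{M}}})$ term is not established. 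The obstacle is real: one must verify ancestrality/maximality, control $\deltaplus{\cdot}$ exactly (not up to a constant in the exponent), and ensure the pattern is not leaked through CI tests among the witnesses, and these are precisely the points your sketch defers.

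The paper sidesteps this construction entirely. Since DAGs are a subclass of MAGs and, for a DAG, $\deltaplus{\mathcal{M}}$ equals the maximum in-degree $\Delta_{in}(\mathcal{M})$, the known lower bound $\Omega(n\,\Delta_{in}\,2^{\Delta_{in}})$ for constraint-based learning of DAGs (from the MARVEL paper, proved via a partition of the vertices into $n/(\delta+1)$ complete clusters, where any algorithm performing too few tests must skip some query $\CI{X}{Y}{\mathbf{W}_1\cup\mathbf{W}_2}{}$ with $X,Y,\mathbf{W}_1$ inside a cluster, leaving a graph with $\mathbf{W}_1\subseteq\Pa{X}\cap\Pa{Y}$ and no $X$--$Y$ edge consistent with the transcript) already yields the claimed bound for MAGs; the paper then only remarks that replacing the remaining intra-cluster edges by bidirected edges gives a genuinely non-DAG worst case with $\deltaplus{\mathcal{M}}>\Delta_{in}$. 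If you want to salvage your route without citing that result, the cluster construction is essentially the gadget you are looking for: take $X,Y$ plus $\delta-1$ witnesses forming a complete graph with the witnesses as common parents of $X$ and $Y$; the separating sets of $X,Y$ are then forced to contain exactly the witness-parents, which plays the role of your ``choice-dependent separating pattern,'' and the factor $\delta$ comes from ranging the designated pair over the cluster rather than from disjoint witness blocks.
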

    	Comparing this lower bound with our achievable upper bound, we can see that the complexity of L-MARVEL in the worst case is merely different by a factor which is at most the number of observed variables in the worst case.
\section{Experiments} \label{sec: experiment}
    We report empirical results on both synthetic (random graphs) and real-world structures available in the Bayesian network repository\footnote{bnlearn.com/bnrepository/}, the benchmark for structure learning in the literature. 
    We evaluate and compare L-MARVEL\footnote{ The implementation of L-MARVEL is available in github.com/Ehsan-Mokhtarian/L-MARVEL.} to various algorithms, namely the constraint-based methods FCI \cite{spirtes2000causation}, RFCI \cite{colombo2012learning}, and MBCS* \cite{pellet2008finding}, and the hybrid method M3HC \cite{tsirlis2018scoring} in terms of both computational complexity and accuracy.
    Following the convention in \cite{colombo2012learning, tsirlis2018scoring, pellet2008finding, chobtham2020bayesian}, the data is generated according to a linear SEM with additive Gaussian noise, where all the variables of the system (including latent and selection) are generated as linear combinations of their parents plus a Gaussian noise.
    For each system, we simulate data from $\POS$, the data available to all the algorithms.
    We use TC \cite{pellet2008using} algorithm to learn the initial Markov boundaries. 
    To make a fair comparison among the algorithms, we feed the Markov boundary information to all the algorithms, that is, algorithms start from a graph where the edges between vertices that are not in each other's Markov boundary are already deleted. 
    This is similar to the ideas in \cite{pellet2008using}. 
    For CI tests, we use Fisher Z-transformation \cite{fisher1915frequency} with significance level $\alpha=0.01$ for all the algorithms, and $\alpha=2/n^2$ for TC \cite{pellet2008using}.
    In all the experiments, each point on the plots and each entry of the table represents an average of 50 datasets, where the latent and selection variables were chosen uniformly at random.
    \begin{figure*}[t] 
        \centering
        \begin{subfigure}[b]{1\textwidth}
            \centering
            \includegraphics[width=0.23\textwidth]{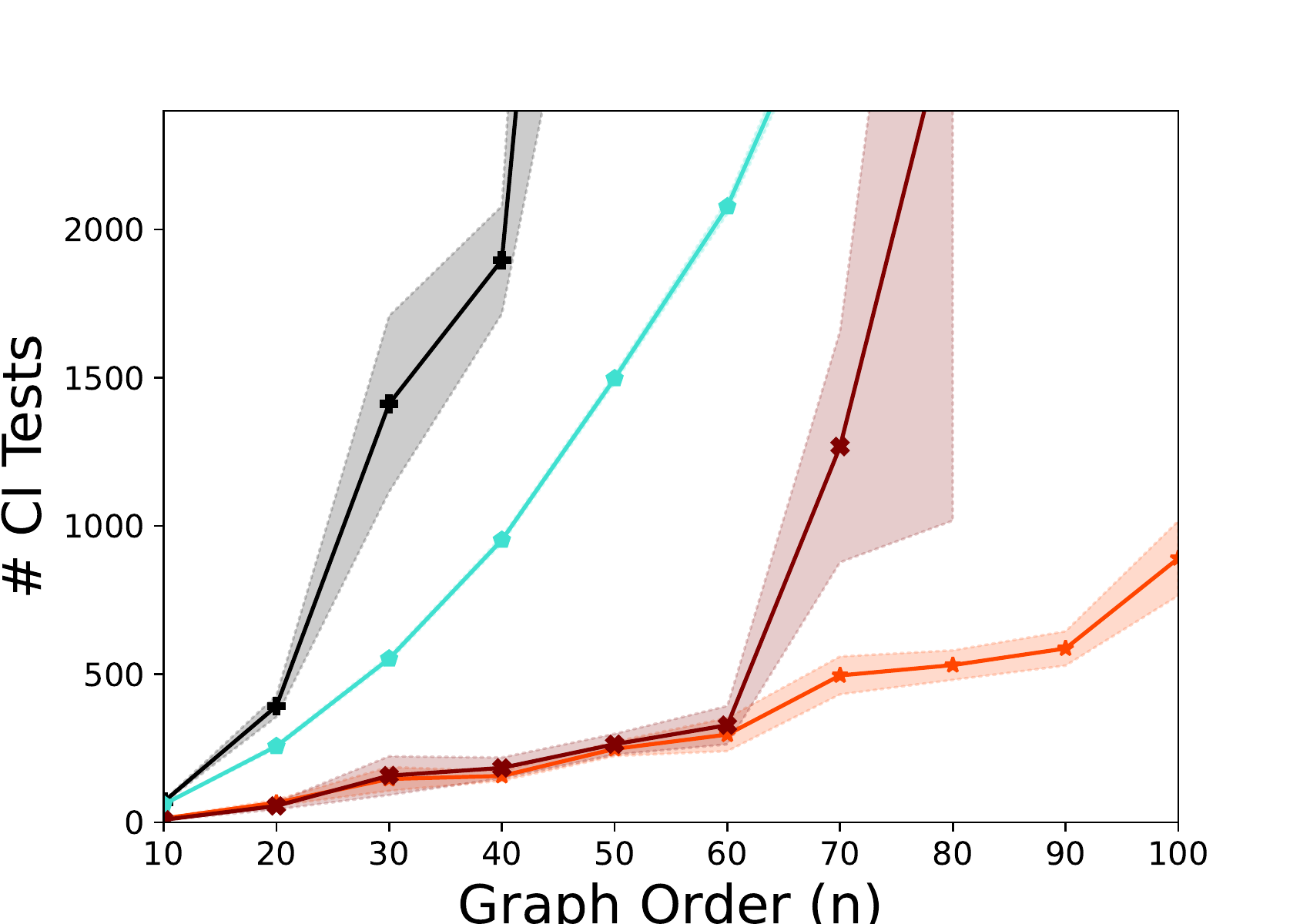}
            \hfill
            \includegraphics[width=0.23\textwidth]{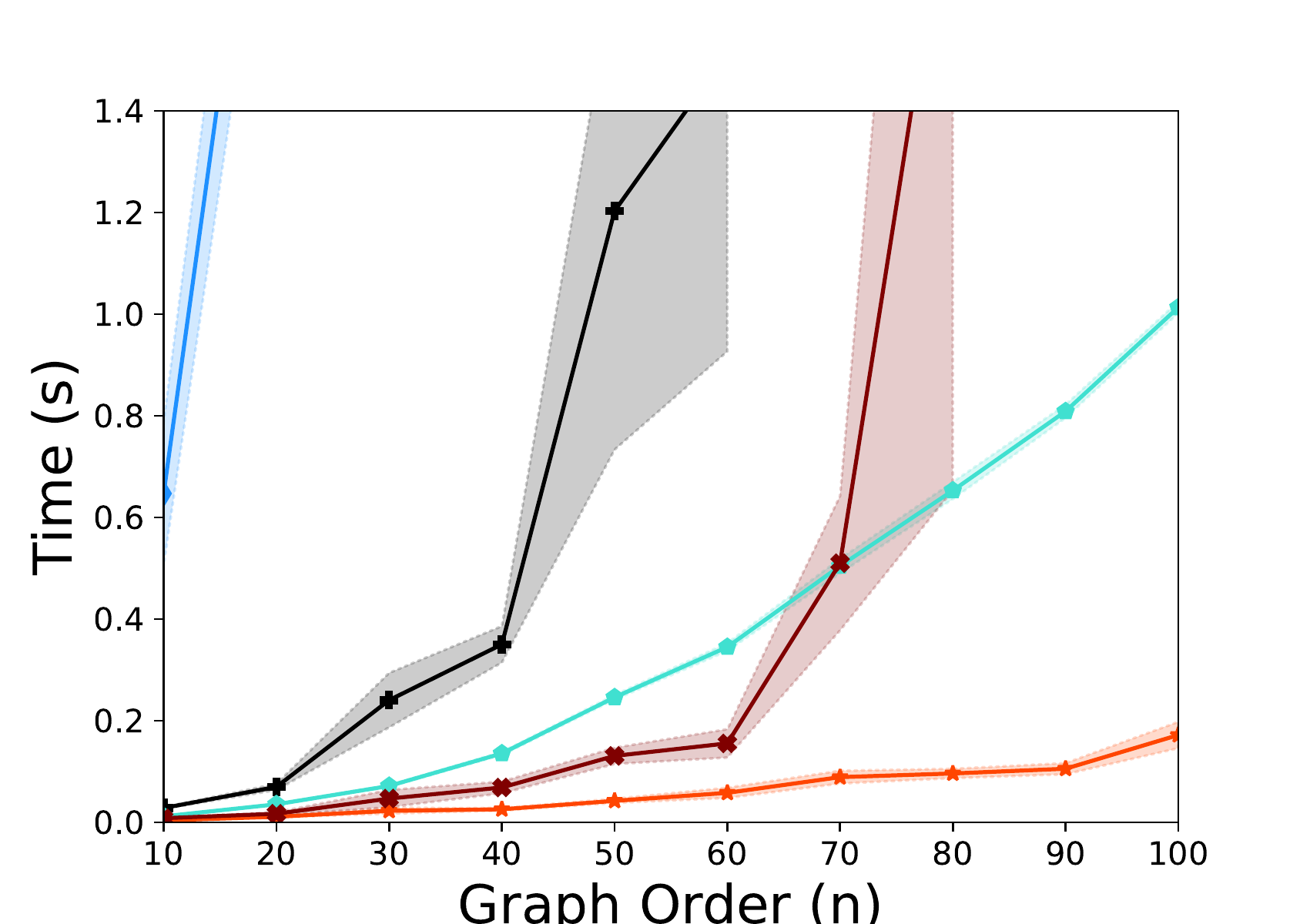}
            \hfill
            \includegraphics[width=0.23\textwidth]{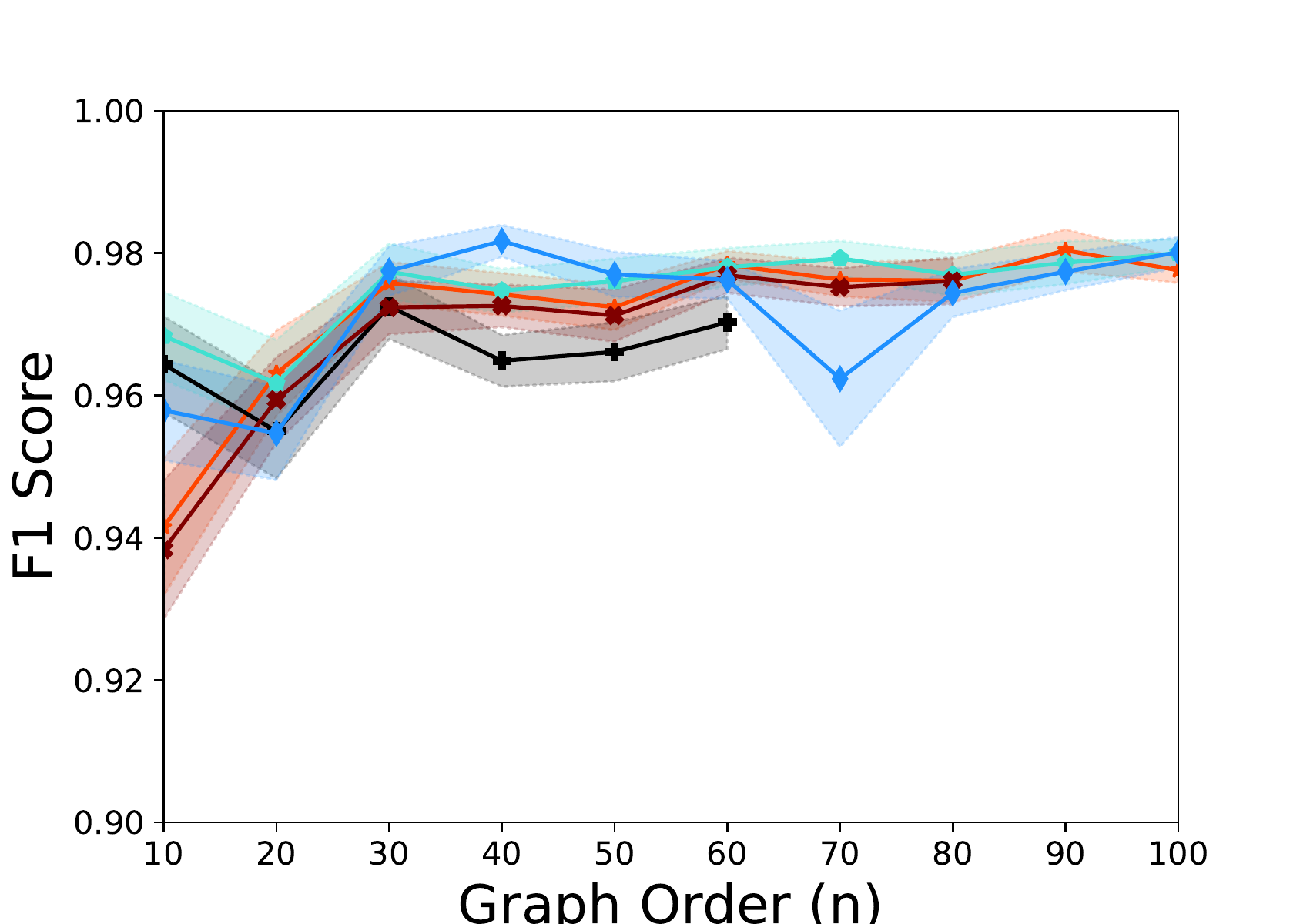}\hfill
            \includegraphics[width=0.23\textwidth]{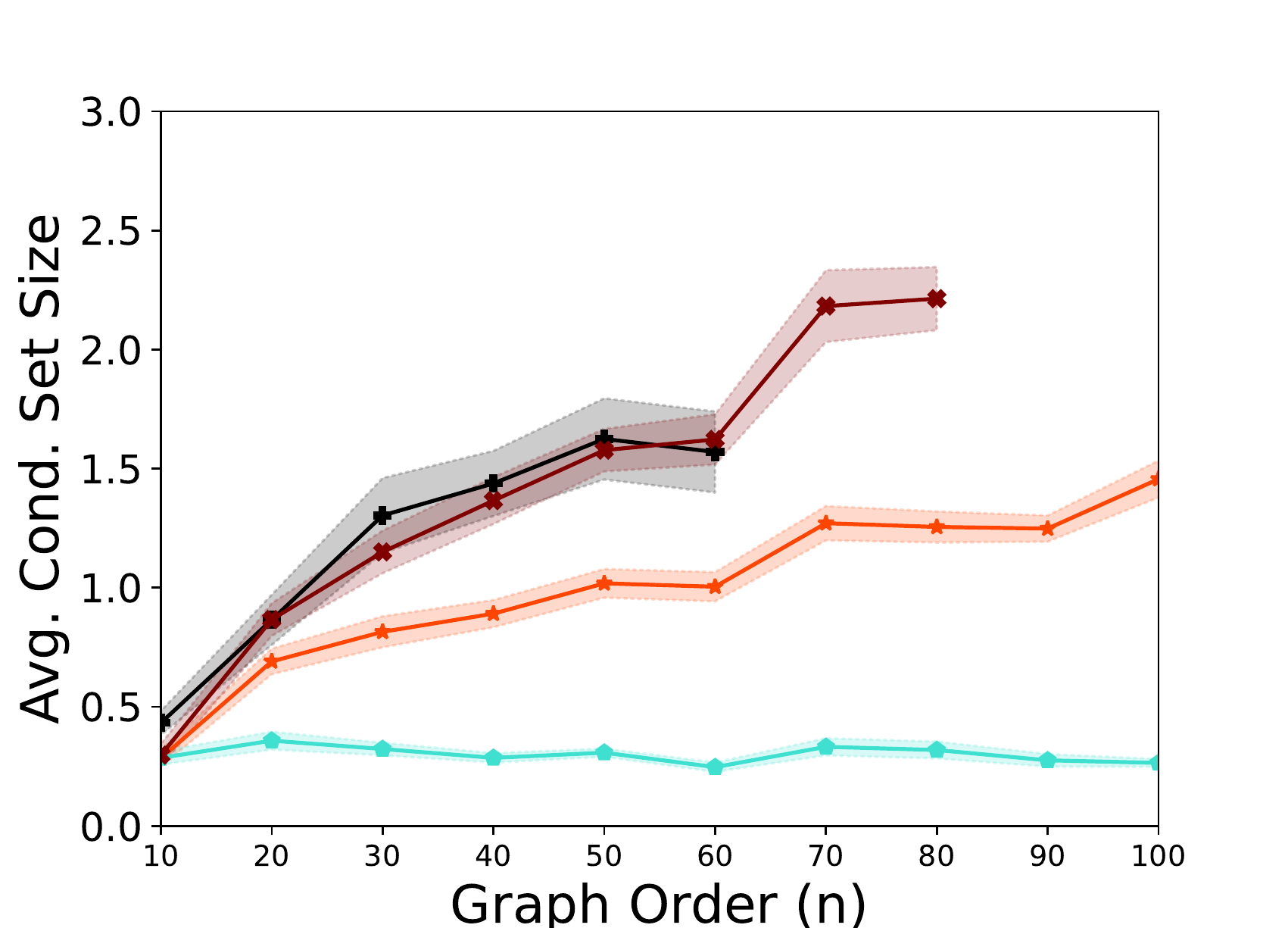}
            \caption{MAGs corresponding to $G(\Tilde{n},\frac{1}{\Tilde{n}^{0.9}})$ and latent rate = $10\%$.}
            \label{fig: ER 0.9 L=0.1}
        \end{subfigure}
        \hfill
        \begin{subfigure}[b]{1\textwidth}
            \centering
            \includegraphics[width=0.23\textwidth]{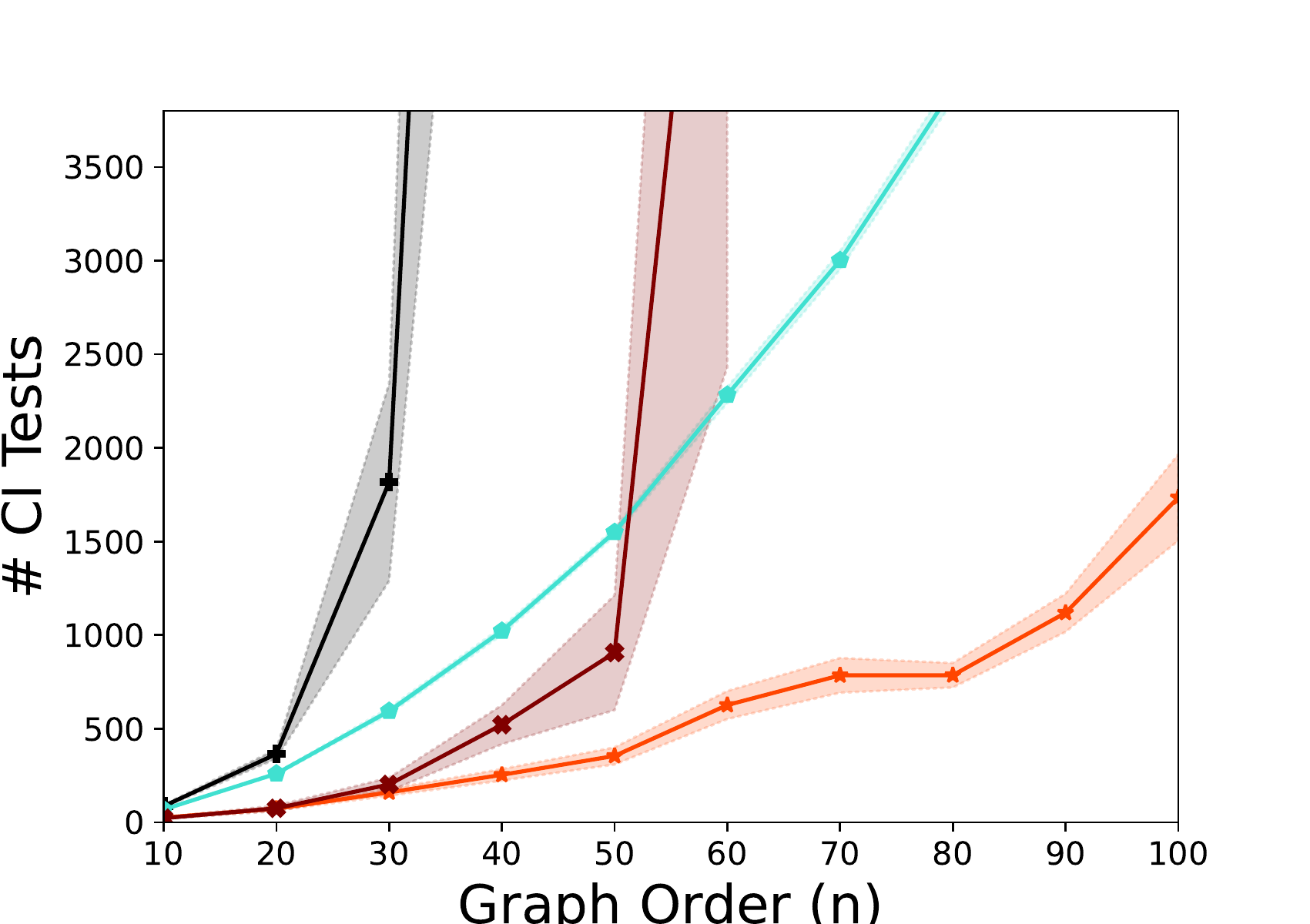}
            \hfill
            \includegraphics[width=0.23\textwidth]{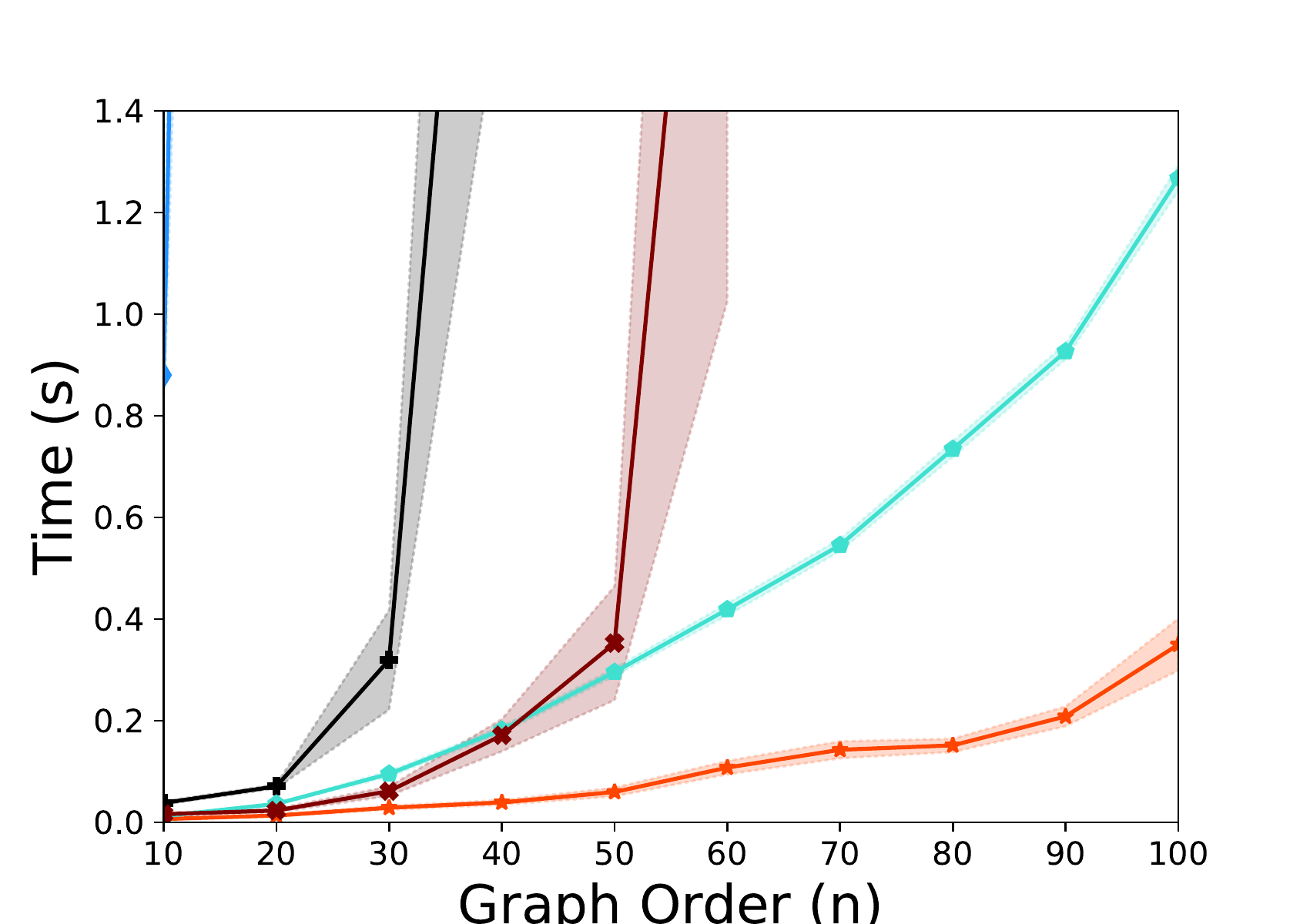}
            \hfill
            \includegraphics[width=0.23\textwidth]{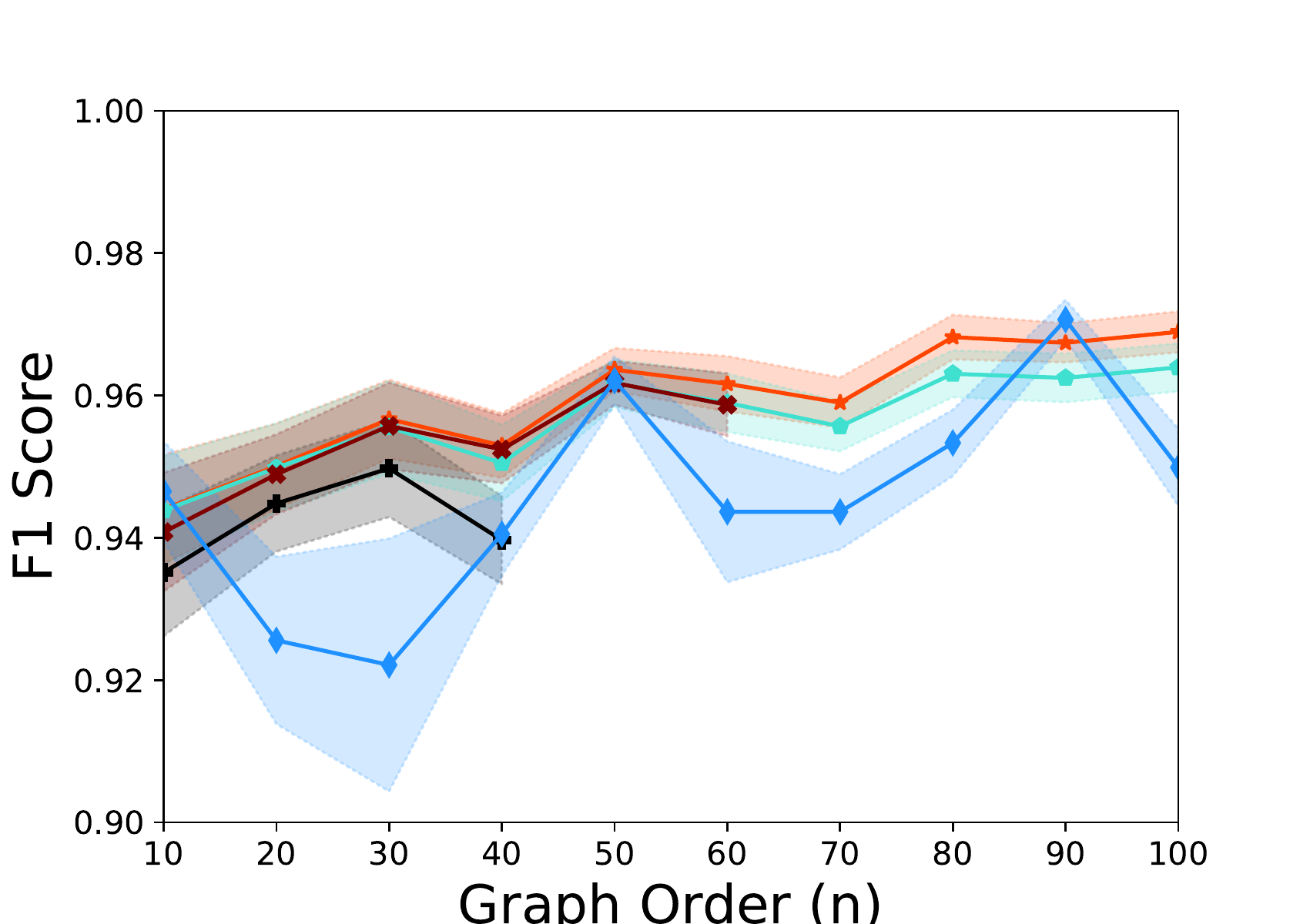}\hfill
            \includegraphics[width=0.23\textwidth]{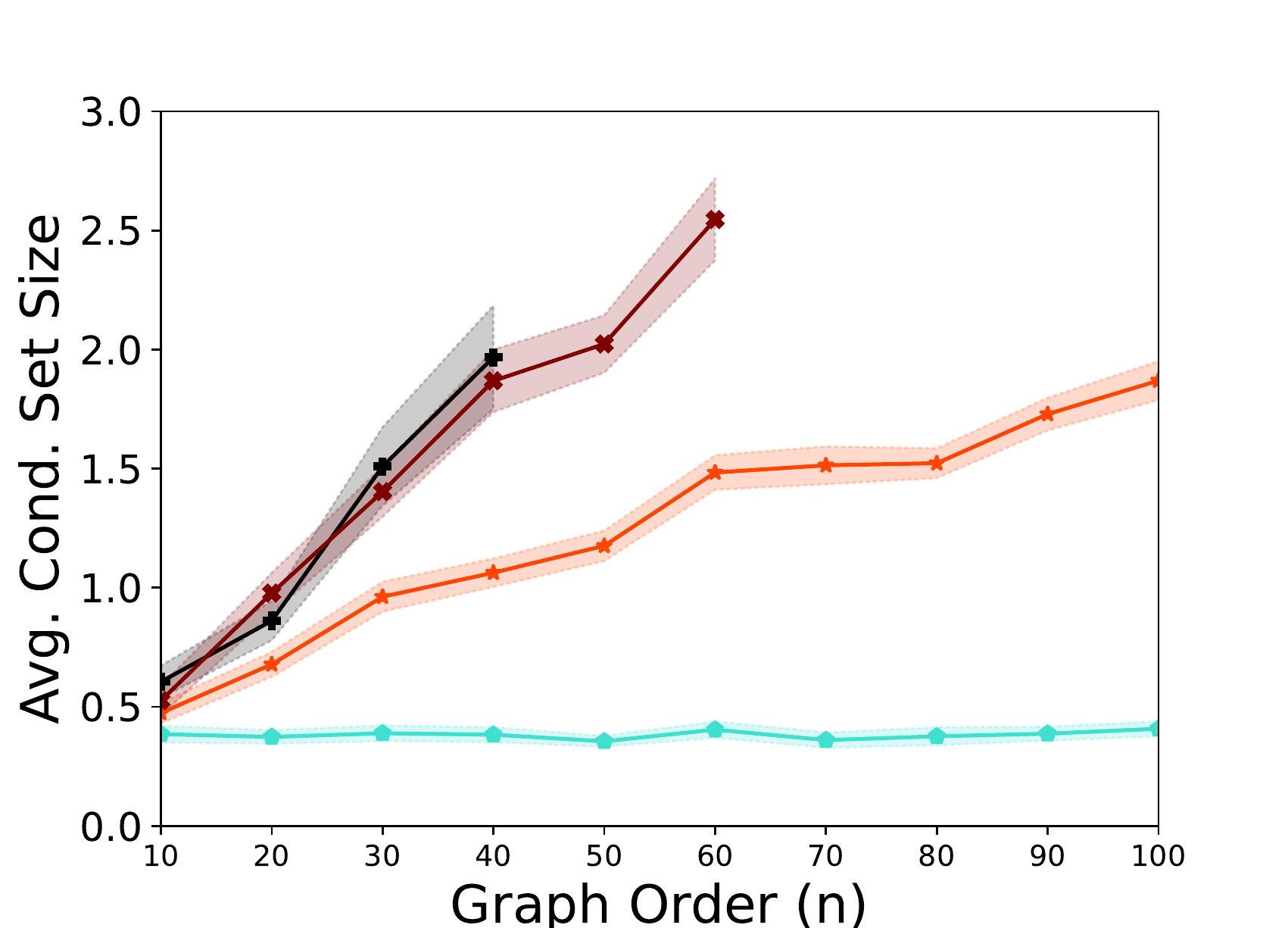}
            \caption{MAGs corresponding to $G(\Tilde{n},\frac{1}{\Tilde{n}^{0.9}})$ and latent rate = $20\%$.}
            \label{fig: ER 0.9 L=0.2}
        \end{subfigure}\hfill
        \begin{subfigure}[b]{1\textwidth}
            \centering
            \includegraphics[width=0.23\textwidth]{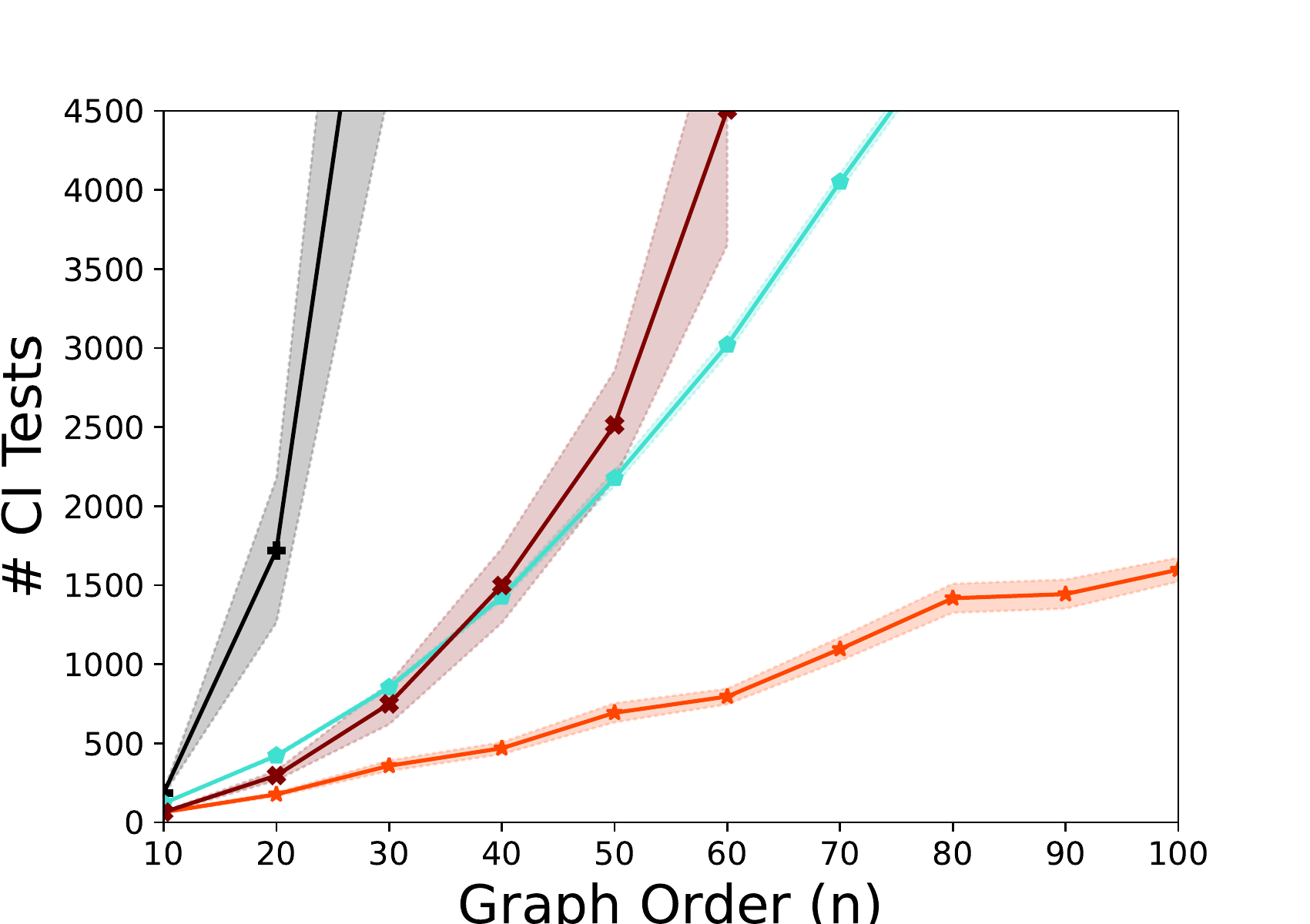}
            \hfill
            \includegraphics[width=0.23\textwidth]{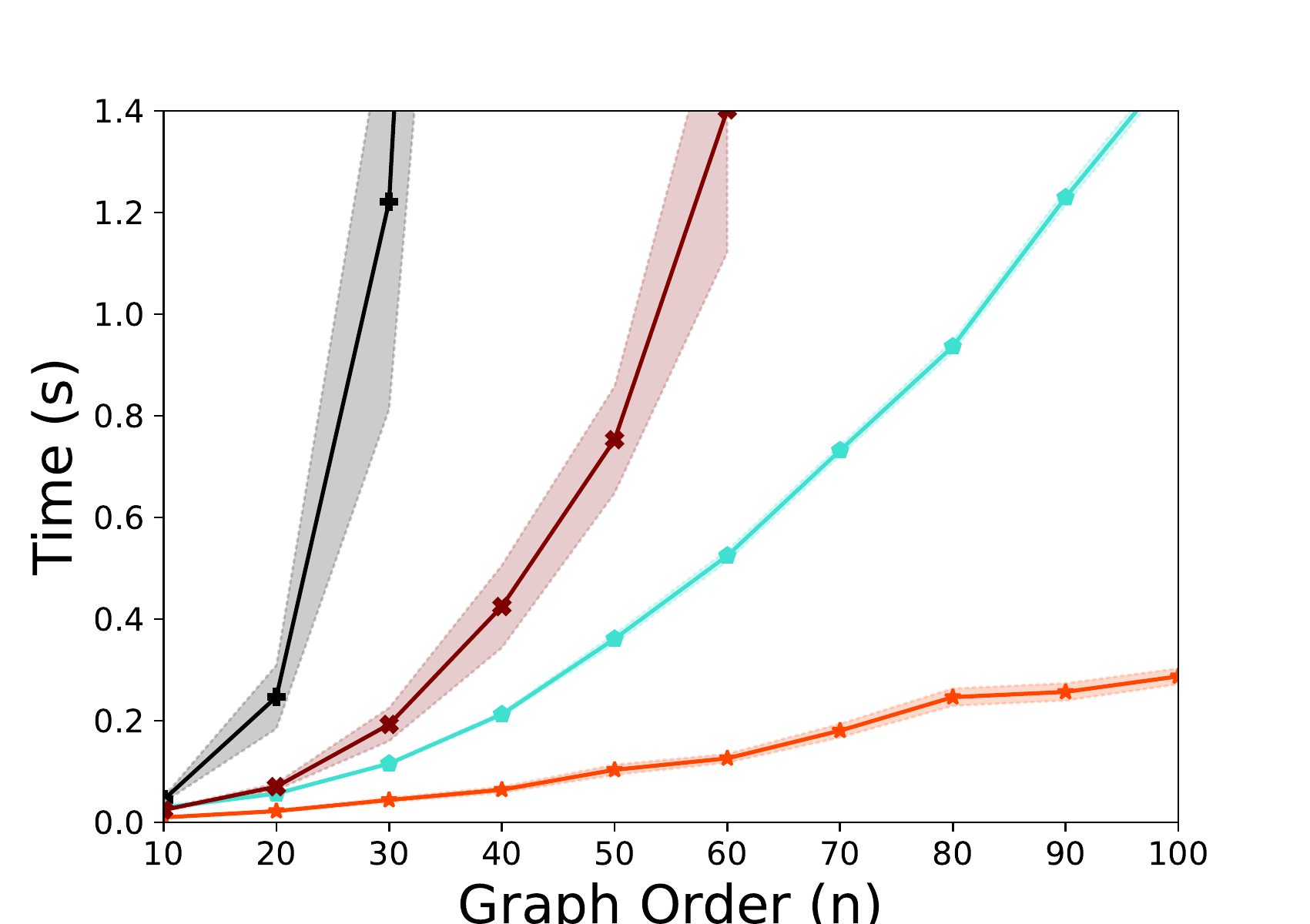}
            \hfill
            \includegraphics[width=0.23\textwidth]{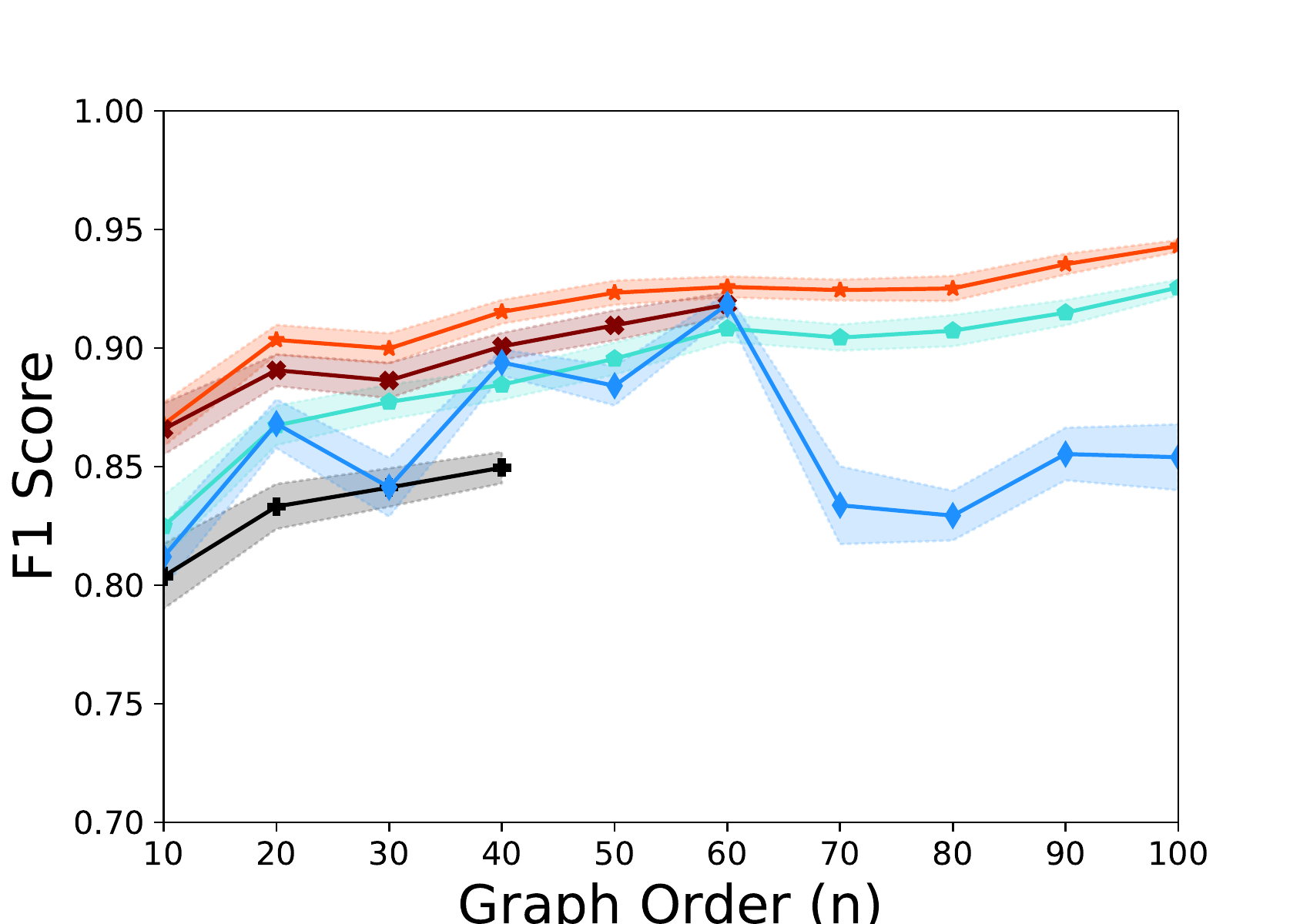}\hfill
            \includegraphics[width=0.23\textwidth]{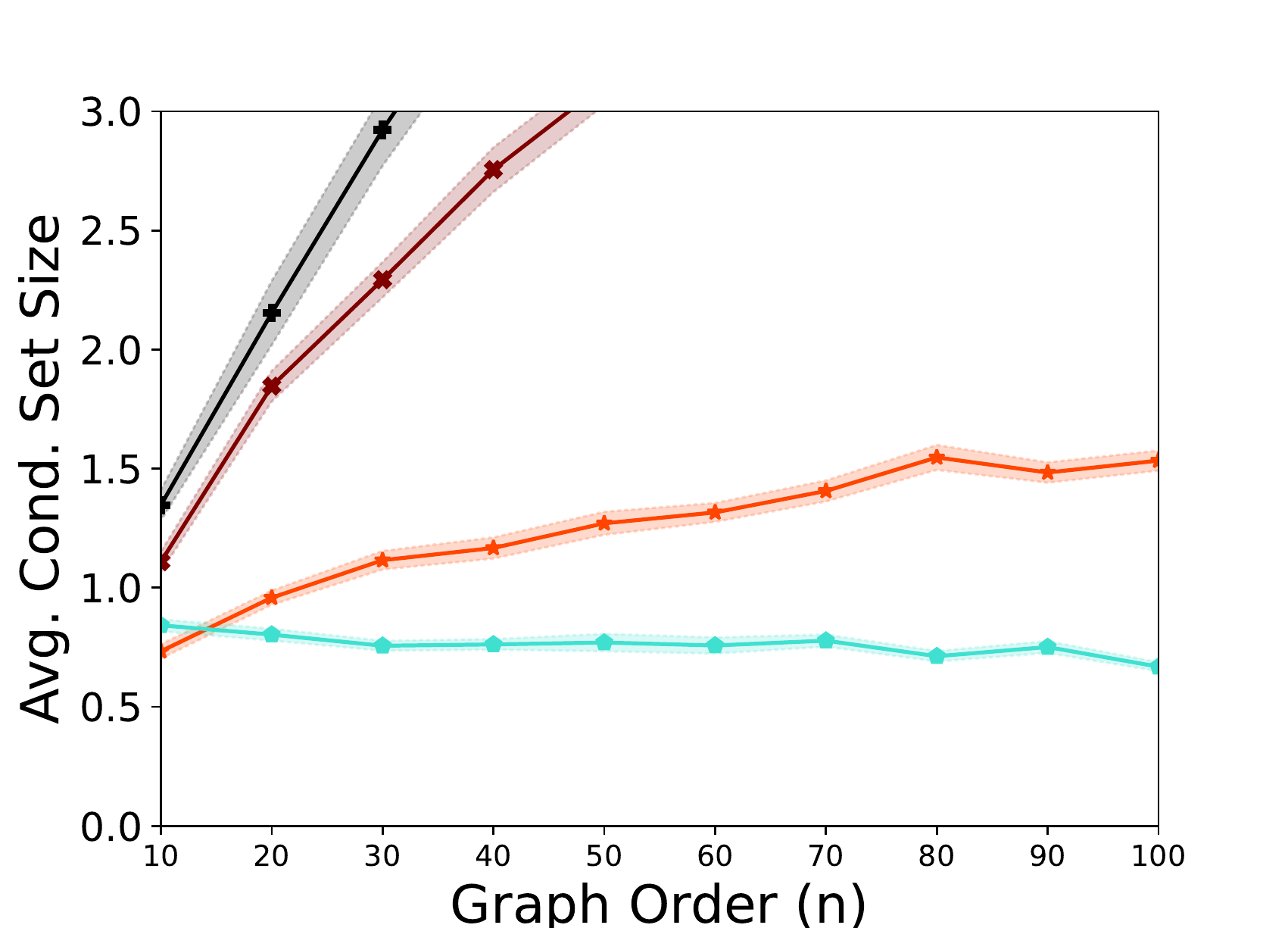}
            \caption{MAGs corresponding to random DAGs with a maximum of 3 parents for each variable, latent rate = $10\%$.}
            \label{fig: deltain=3 L=0.1}
        \end{subfigure}
        \hfill
        \begin{subfigure}[b]{1\textwidth}
            \centering
            \includegraphics[width=0.23\textwidth]{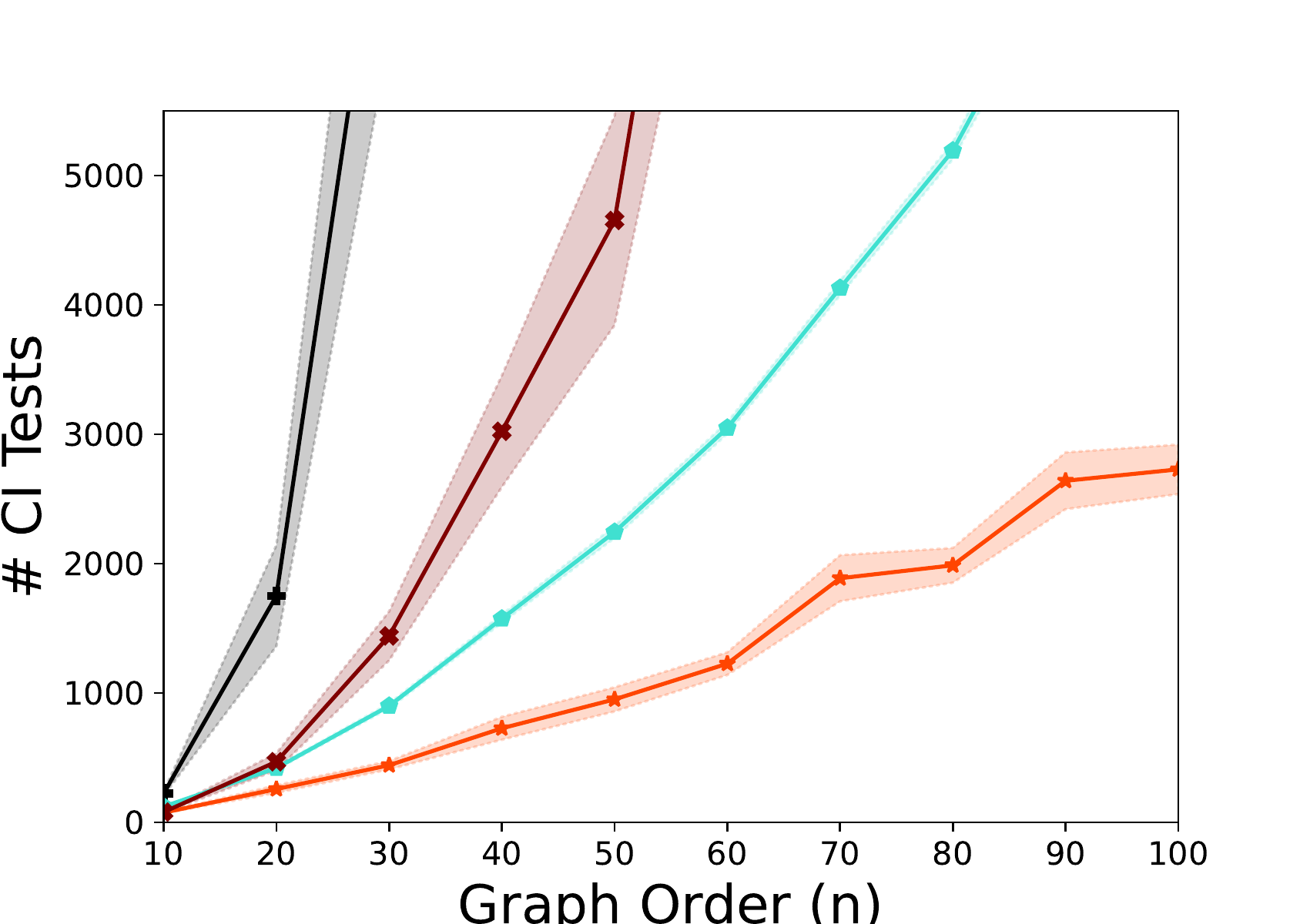}
            \hfill
            \includegraphics[width=0.23\textwidth]{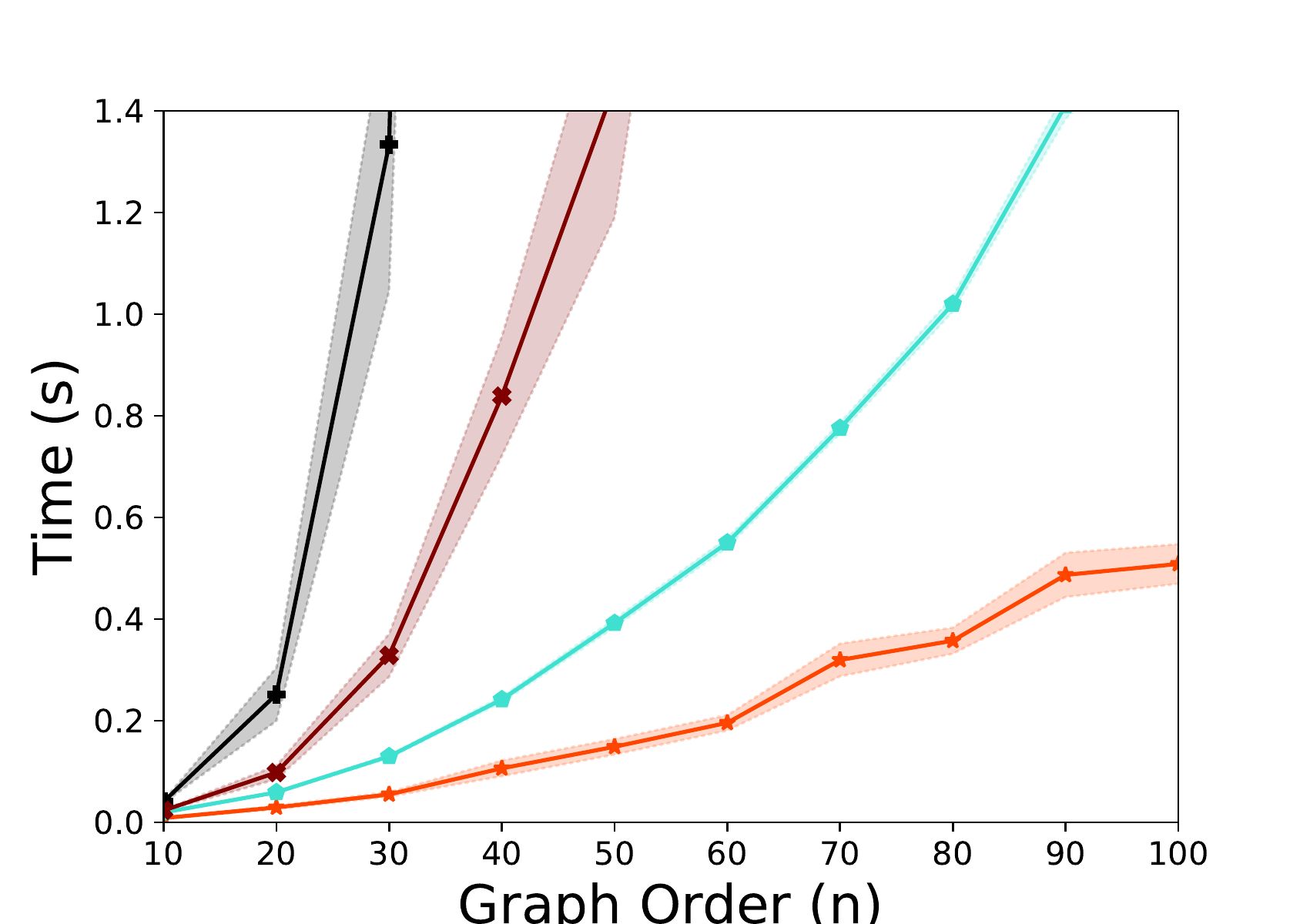}
            \hfill
            \includegraphics[width=0.23\textwidth]{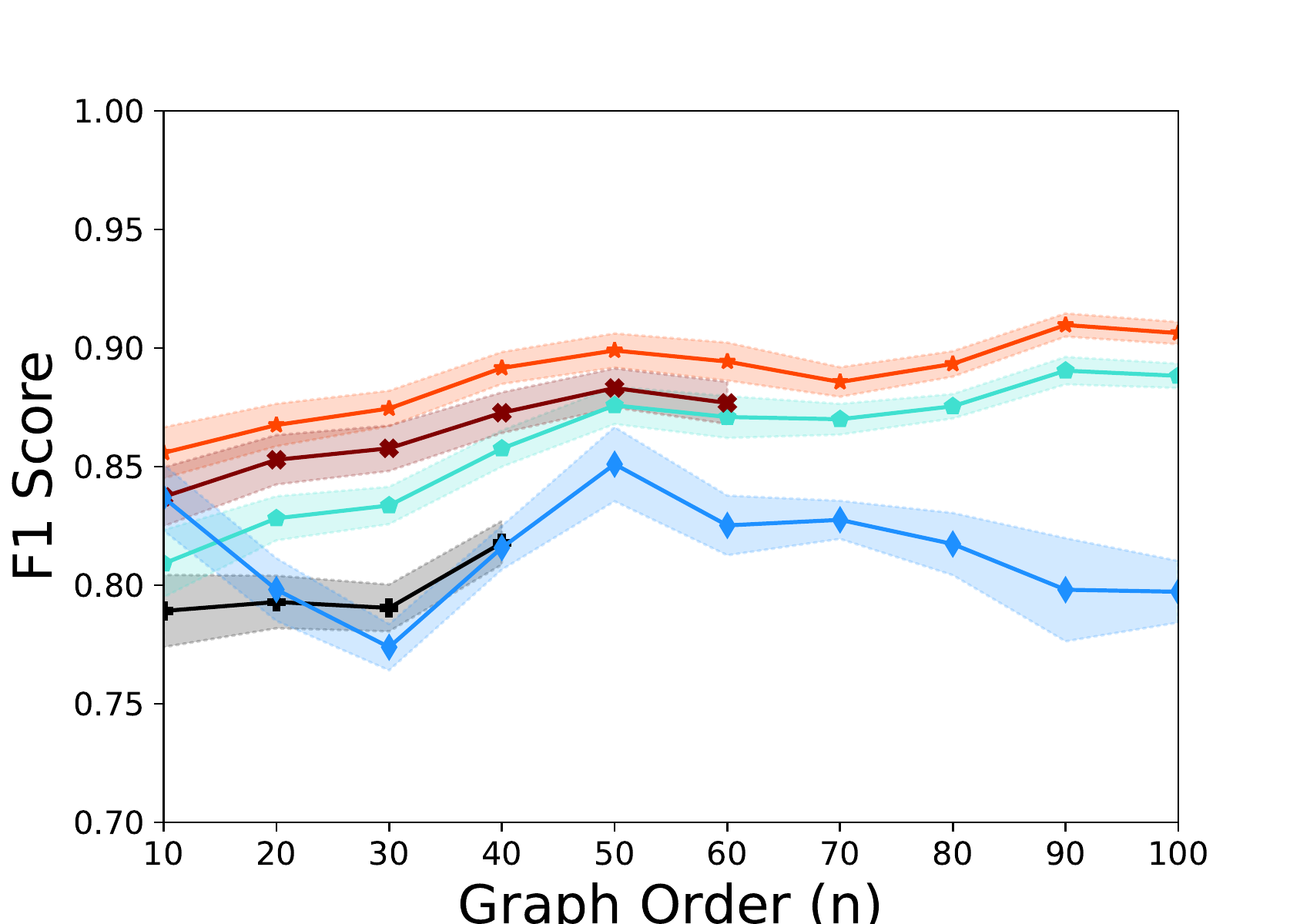}\hfill
            \includegraphics[width=0.23\textwidth]{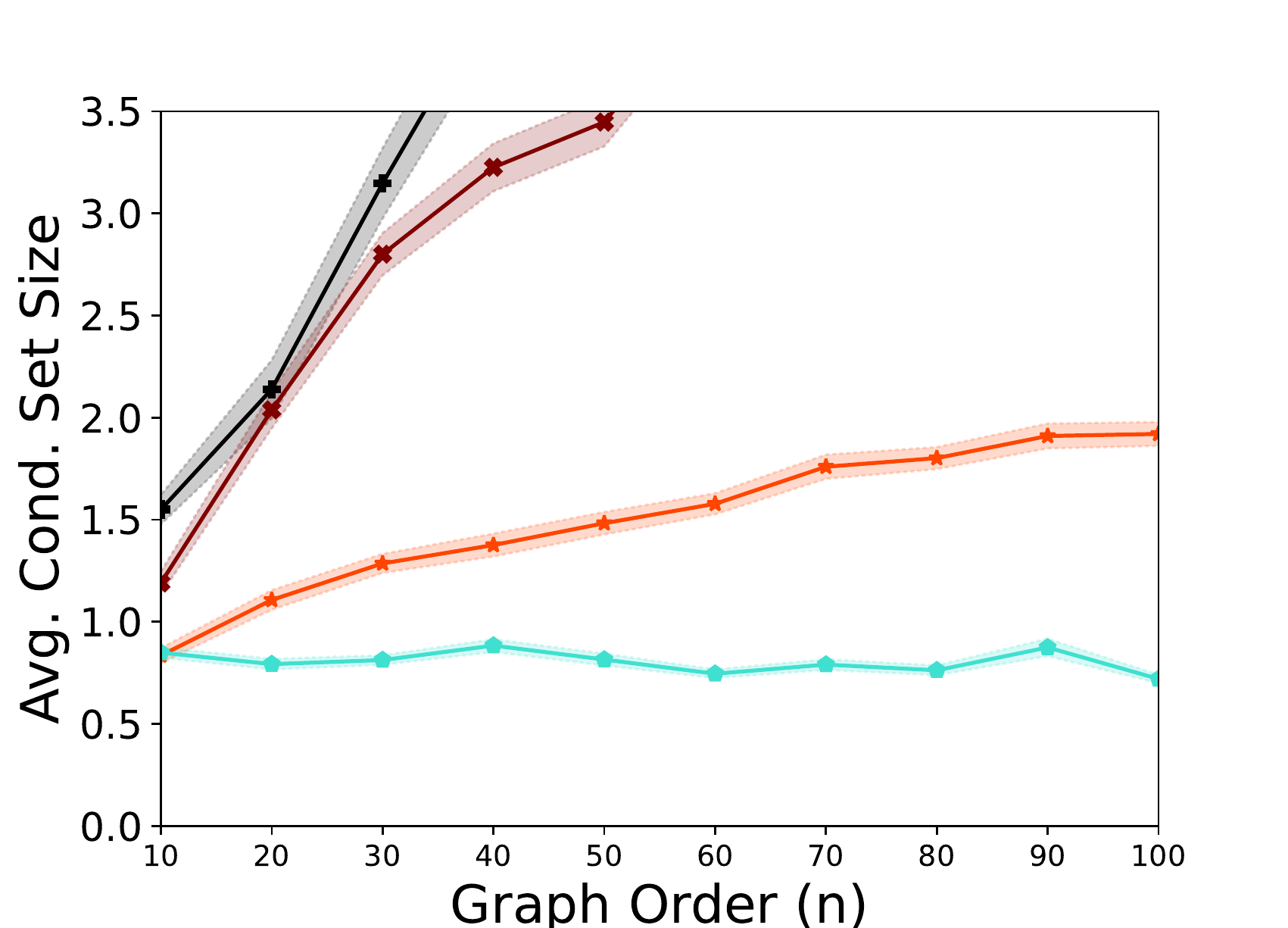}
            \caption{MAGs corresponding to random DAGs with a maximum of 3 parents for each variable, latent rate = $20\%$.}
            \label{fig: deltain=3 L=0.2}
        \end{subfigure}\hfill
        \begin{subfigure}[b]{1\textwidth}
            \centering
            \includegraphics[width=0.23\textwidth]{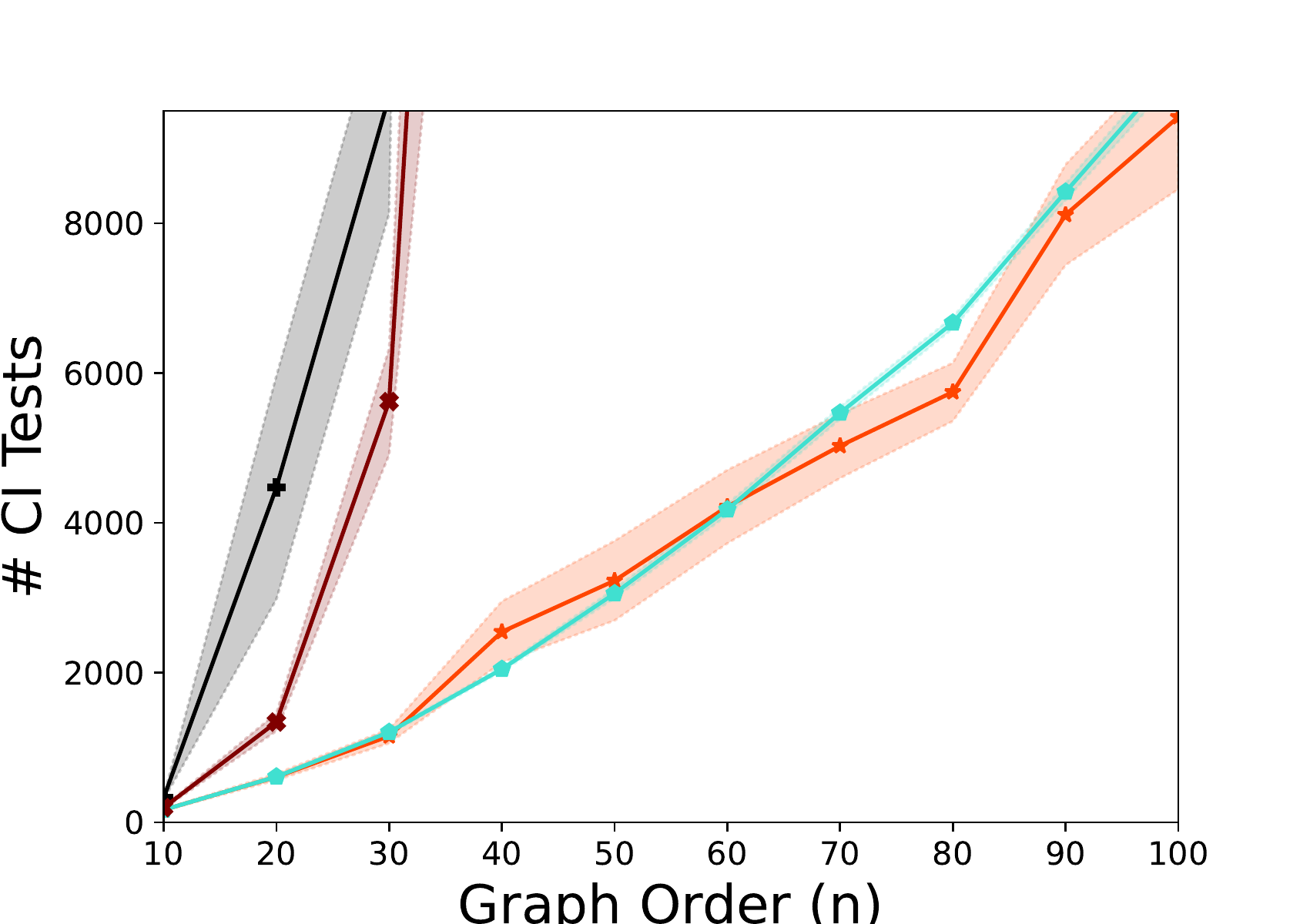}
            \hfill
            \includegraphics[width=0.23\textwidth]{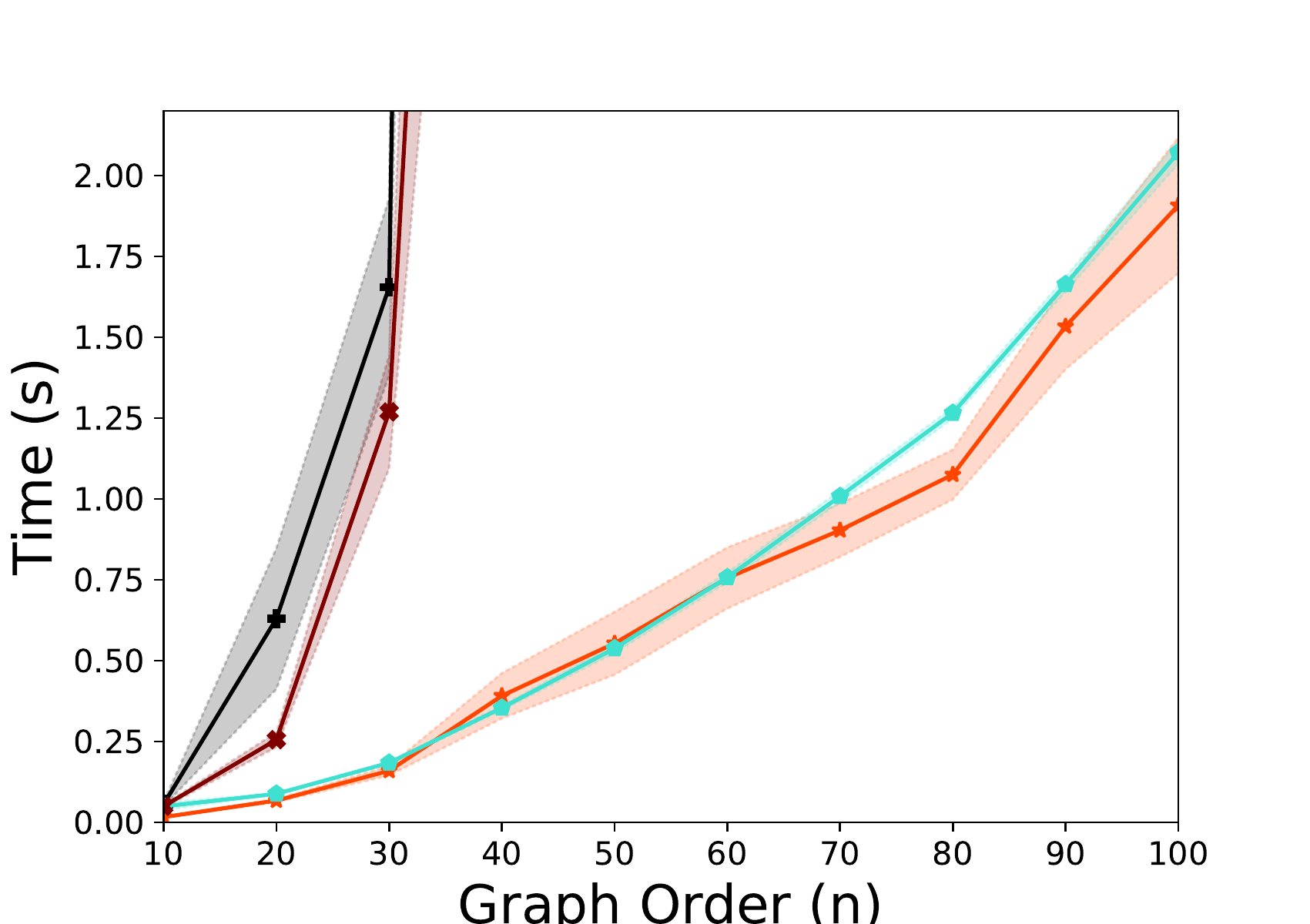}
            \hfill
            \includegraphics[width=0.23\textwidth]{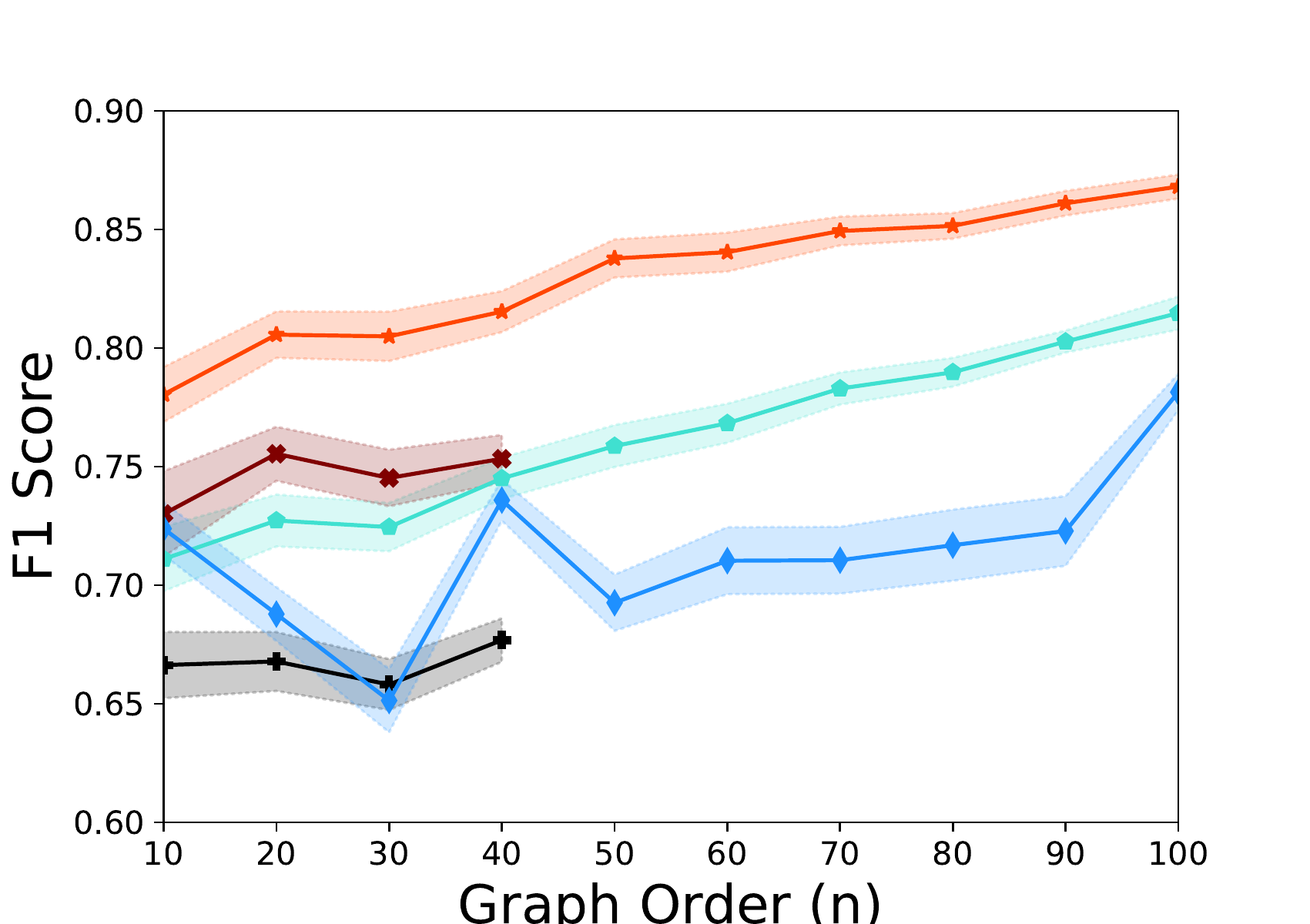}\hfill
            \includegraphics[width=0.23\textwidth]{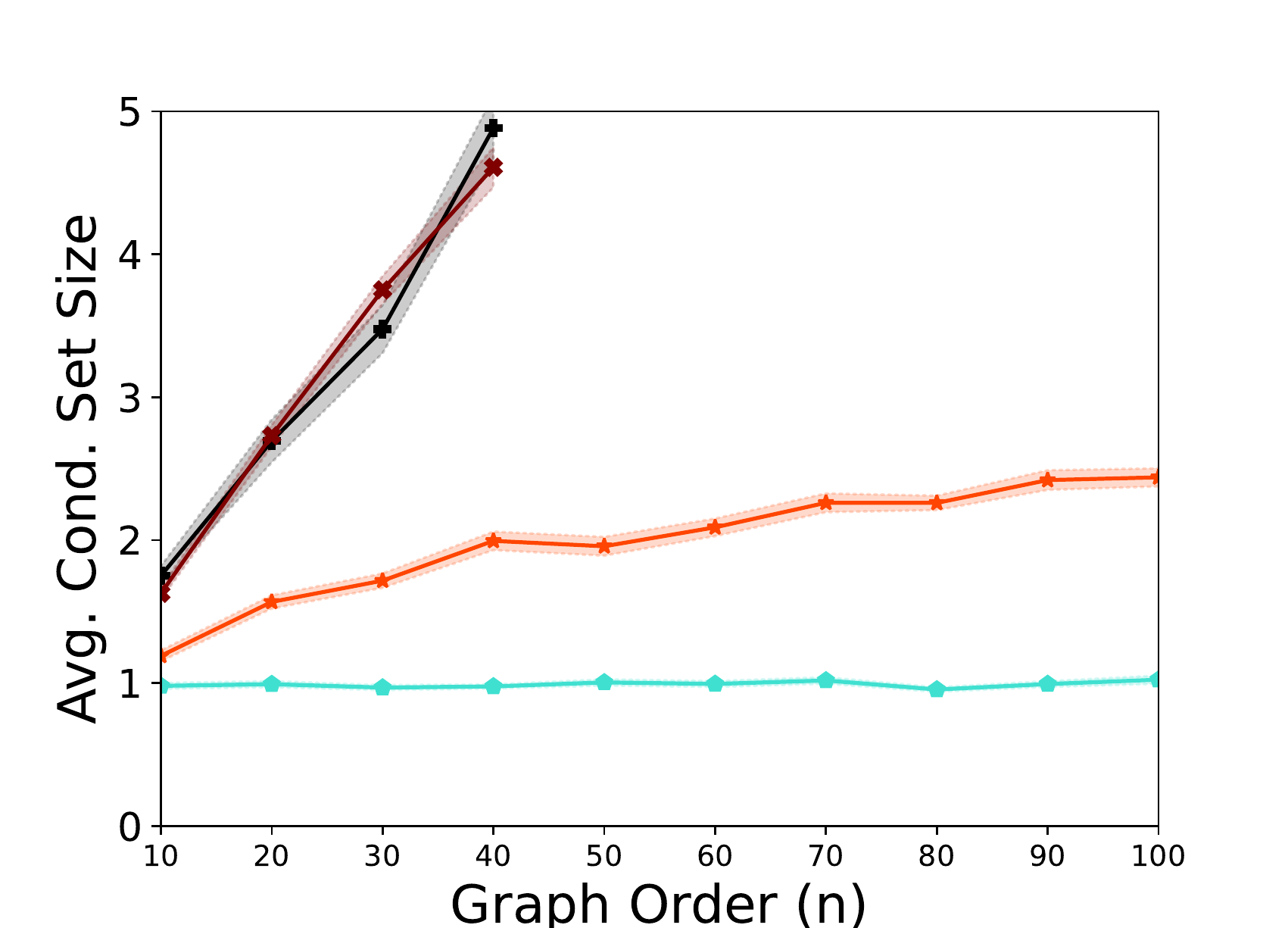}
            \caption{MAGs corresponding to random DAGs with a maximum of 4 parents for each variable, latent rate = $10\%$.}
            \label{fig: deltain=4 L=0.1}
        \end{subfigure}\hfill
        \begin{subfigure}[b]{0.8\textwidth}
            \centering
             \includegraphics[width=\textwidth]{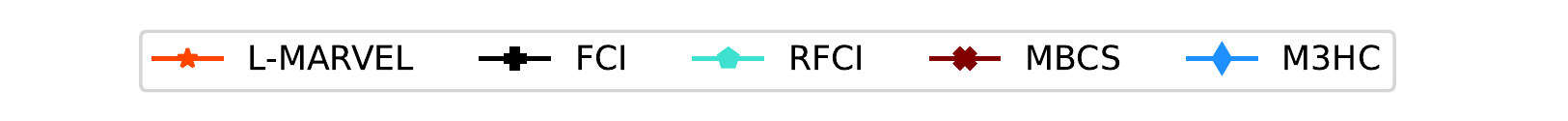}
        \end{subfigure}
        \caption{Performance of various algorithms on random graphs with significance level $\alpha=0.01$ and $50|\mathbfcal{O}|$ samples are available. Figures (a) and (b) demonstrate the evaluation over MAGs corresponding to Erdos-Renyi graphs, while (c), (d) and (e) represent the MAGs corresponding to random DAGs with bounded number of parents for each variable (sample size $=50|\mathbfcal{O}|$).}
        \label{fig: random graphs}
    \end{figure*}

    \textbf{Random Structures:}
    We used two different generating processes to obtain our random graphs. \begin{enumerate*}
        \item MAGs corresponding to DAGs generated by Erdos-Renyi model $G(\Tilde{n},p)$ \cite{erdHos1960evolution}, where $\Tilde{n}$ denotes the total number of the variables, and
        \item MAGs corresponding to random DAGs where each vertex has a maximum of 3 or 4 parents, similar to the setting in \cite{tsirlis2018scoring,mokhtarian2020recursive,chobtham2020bayesian}.
    \end{enumerate*}
    Figures \ref{fig: ER 0.9 L=0.1} and \ref{fig: ER 0.9 L=0.2} illustrate the performance of the algorithms on Erdos Renyi graphs, whereas Figures \ref{fig: deltain=3 L=0.1}, \ref{fig: deltain=3 L=0.2} and \ref{fig: deltain=4 L=0.1} represent the performance of these methods on the latter generative model.
    The coefficients of the linear model and the standard deviation of the exogenous noises are chosen uniformly at random from $\pm(0.5,2)$ and $(1,\sqrt{3})$, respectively. 
    We did not continue running algorithms that were not capable of keeping up with the cohort as the order of the graphs grew.
    Moreover, the runtime of M3HC is not reported in the plots as it does not fit into the scale of the plots. As seen in the plots, L-MARVEL demonstrates substantially lower computational complexity in terms of number of the CI tests and runtime compared to the other algorithms, while  maintaining high accuracy (the highest among the cohort in most of the cases). We also observed a low size for the conditioning sets in our CI tests for L-MARVEL. Only RFCI performs better than L-MARVEL in this metric\footnote{Note that RFCI avoids performing too many CI tests but with the caveat of lacking completeness.}.
    
    \textbf{Benchmark Structures:} Algorithms are evaluated on benchmark structures, where $5\%$ to $10\%$ of the variables are assumed to be latent, and $\sim 5\%$ of them are selection variables. 
    Latent and selection variables are chosen uniformly at random for each dataset.
    The coefficients of the linear model and the standard deviation of the noises are chosen uniformly at random from $\pm(0.5,1)$ and $(\sqrt{0.5},1)$, respectively. Our experiments, summarized in Table \ref{table: exp1}, demonstrate that L-MARVEL outperforms the other algorithms both in terms of computational complexity and the accuracy of the learned structure. NA entries for FCI demonstrate that the runtime exceeds a certain threshold.
    \begin{table*}[ht]
	    \caption{Performance of various algorithms on the benchmark structures, when $5\%$ to $10\%$ of the variables are latent and $\sim 5\%$ of them are selection variables (sample size $=50|\mathbfcal{O}|$).}
	    \fontsize{9}{10.5}\selectfont
	    \centering
	    \begin{tabular}{N M{0.3cm}|M{1.2cm}|M{1.2cm} M{1.1cm} M{1.1cm} M{1.1cm} M{1.1cm} M{1.1cm} M{1.1cm}}
    		\toprule
    		&\multicolumn{2}{c|}{Structure}
 			& Insurance
 			& Alarm
 			& Ecoli70
 			& Barley
 			& Hailfinder
 			& Carpo
 			& Arth150
			\\
			&\multicolumn{2}{c|}{$(|\mathbfcal{O}|, |\mathbfcal{L}|, |\mathbfcal{S}|)$}
 			& (22,3,2)
 			& (31,4,2)
 			& (40,3,3)
 			& (40,5,3)
 			& (50,3,3)
 			& (53,4,4)
 			& (95,6,6)
			\\
			\hline
			& \multirow{5}{*}{\rotatebox[origin=c]{90}{L-MARVEL}}
			& \#CI tests
			& \textbf{272} & \textbf{235} & \textbf{227} & \textbf{894} & \textbf{333} & \textbf{569} & \textbf{1185}
			\\
			& 
			& Runtime
			& \textbf{0.03} & \textbf{0.04} & \textbf{0.05} & \textbf{0.16} & \textbf{0.07} & \textbf{0.12} & \textbf{0.36}
			\\
			& 
			& F1-score
			 & \textbf{0.85} & \textbf{0.92} & \textbf{0.88} & \textbf{0.82} & \textbf{0.92} & \textbf{0.97} & \textbf{0.89}
			\\
			&
			& Precision
			 & 0.97 & 0.98 & 0.97 & 0.98 & 0.98 & 0.99 & 0.99
			\\
			&
			& Recall
			 & \textbf{0.76} & \textbf{0.87} & \textbf{0.81} & \textbf{0.72} & \textbf{0.87} & \textbf{0.96} & \textbf{0.82}
			\\
			\hline
			& \multirow{5}{*}{\rotatebox[origin=c]{90}{RFCI}}
			& \#CI tests
			 & 947 & 981 & 4314 & 2158 & 256754 & 11670 & 2644794
			\\
			& 
			& Runtime
			 & 0.14 & 0.20 & 0.86 & 0.44 & 62.22 & 2.59 & 1047.44
			\\
			& 
			& F1-score
			 & 0.76 & 0.89 & 0.85 & 0.73 & 0.88 & 0.94 & 0.87
			\\
			&
			& Precision
			& \textbf{0.99} & \textbf{1.00} & \textbf{1.00} & \textbf{1.00} & \textbf{1.00} & \textbf{1.00} & \textbf{1.00}
			\\
			&
			& Recall
			& 0.63 & 0.81 & 0.74 & 0.58 & 0.79 & 0.89 & 0.77
			\\
			\hline
			& \multirow{5}{*}{\rotatebox[origin=c]{90}{FCI}}
			& \#CI tests
			& 7117 & 6899 & 56781 & 117566 & NA & 123198 & NA
			\\
			& 
			& Runtime
			 & 1.13 & 1.25 & 13.22 & 25.78 & NA & 31.41 & NA
			\\
			& 
			& F1-score
			 & 0.75 & 0.88 & 0.83 & 0.70 & NA & 0.45 & NA
			\\
			&
			& Precision
			 & \textbf{0.99} & \textbf{1.00} & \textbf{1.00} & \textbf{1.00} & NA & 0.48 & NA
			\\
			&
			& Recall
			& 0.61 & 0.80 & 0.72 & 0.54 & NA & 0.42 & NA
			\\
			\hline
			& \multirow{5}{*}{\rotatebox[origin=c]{90}{MBCS*}}
			& \#CI tests
			& 640 & 335 & 499 & 2649 & 502 & 1221 & 3225
			\\
			& 
			& Runtime
			& 0.12 & 0.11 & 0.17 & 0.77 & 0.19 & 0.46 & 1.94
			\\
			& 
			& F1-score
			 & 0.80 & 0.90 & 0.86 & 0.76 & 0.89 & 0.96 & 0.86
			\\
			&
			& Precision
			& 0.98 & 0.98 & 0.98 & 0.99 & 0.99 & 0.99 & 0.99
			\\
			&
			& Recall
			& 0.68 & 0.84 & 0.77 & 0.62 & 0.82 & 0.94 & 0.76
			\\
			\hline
			& \multirow{5}{*}{\rotatebox[origin=c]{90}{M3HC}}
			& \#CI tests
			& 896 & 674 & 3033 & 1731 & 139788 & 8354 & 793754
			\\
			&
			& Runtime
			& 13.66 & 4.19 & 6.64 & 12.53 & 47.72 & 7.42 & 322.33
			\\
			& 
			& F1-score
			& 0.75 & 0.87 & 0.84 & 0.71 & 0.86 & 0.92 & 0.84
			\\
			&
			& Precision
			& \textbf{0.99} & \textbf{1.00} & \textbf{1.00} & 0.99 & \textbf{1.00} & \textbf{1.00} & 0.99
			\\
			&
			& Recall
			& 0.62 & 0.78 & 0.73 & 0.56 & 0.77 & 0.85 & 0.74
			\\
			\bottomrule
			\end{tabular}
	    \label{table: exp1}
    \end{table*}

    More comprehensive experimental results including the effect of the sample size, wider range of latent and selection rates, and assessments on different settings of parameters and structures, along with alternative metrics are reported in Appendix \ref{sec: apd experiments}.

\section{Concluding Remarks}
We proposed a recursive structure learning approach capable of handling latent and selection variables.
The recursive technique significantly reduced the number of required CI tests (and hence the time complexity). Also, since the order of the graph becomes smaller over the iterations, the recursive approach reduces the size of the conditioning sets in each CI test, which leads to an improved performance of the tests. We provided an upper bound on the complexity of the proposed method as well as a lower bound for any constraint-based method. The upper bound of our proposed approach and the lower bound at most differ by a factor equal to the number of variables in the worst case, which demonstrates the efficiency of the proposal.
We compared the performance of the proposed method with several state-of-the-art approaches on both synthetic and real-world structures. The results showed improvement in both performance and complexity on almost all the setups. We note that the performance of the proposed method is reliant on the accuracy of the Markov boundary information that is used in the algorithm.
Devising efficient and high accuracy approaches for learning the Markov boundary of the variables is left as an important direction for future work.

\ack{The work presented in this paper was in part supported by Office of Naval Research (ONR) under grant number W911NF-15-1-0479.}
\bibliographystyle{plainnat}
\bibliography{bibliography}

\begin{thebibliography}{32}
\providecommand{\natexlab}[1]{#1}
\providecommand{\url}[1]{\texttt{#1}}
\expandafter\ifx\csname urlstyle\endcsname\relax
  \providecommand{\doi}[1]{doi: #1}\else
  \providecommand{\doi}{doi: \begingroup \urlstyle{rm}\Url}\fi

\bibitem[Blair and Peyton(1993)]{blair1993introduction}
Jean~RS Blair and Barry Peyton.
\newblock An introduction to chordal graphs and clique trees.
\newblock In \emph{Graph theory and sparse matrix computation}, pages 1--29.
  Springer, 1993.

\bibitem[Chobtham and Constantinou(2020)]{chobtham2020bayesian}
Kiattikun Chobtham and Anthony~C Constantinou.
\newblock Bayesian network structure learning with causal effects in the
  presence of latent variables.
\newblock \emph{arXiv preprint arXiv:2005.14381}, 2020.

\bibitem[Claassen et~al.(2013)Claassen, Mooij, and
  Heskes]{claassen2013learning}
Tom Claassen, Joris Mooij, and Tom Heskes.
\newblock Learning sparse causal models is not np-hard.
\newblock \emph{arXiv preprint arXiv:1309.6824}, 2013.

\bibitem[Colombo et~al.(2012)Colombo, Maathuis, Kalisch, and
  Richardson]{colombo2012learning}
Diego Colombo, Marloes~H Maathuis, Markus Kalisch, and Thomas~S Richardson.
\newblock Learning high-dimensional directed acyclic graphs with latent and
  selection variables.
\newblock \emph{The Annals of Statistics}, pages 294--321, 2012.

\bibitem[Erd{\H{o}}s and R{\'e}nyi(1960)]{erdHos1960evolution}
Paul Erd{\H{o}}s and Alfr{\'e}d R{\'e}nyi.
\newblock On the evolution of random graphs.
\newblock \emph{Publications of the Mathematical Institute of the Hungarian
  Academy of Sciences}, 5:\penalty0 17--61, 1960.

\bibitem[Fisher(1915)]{fisher1915frequency}
Ronald~A Fisher.
\newblock Frequency distribution of the values of the correlation coefficient
  in samples from an indefinitely large population.
\newblock \emph{Biometrika}, 10\penalty0 (4):\penalty0 507--521, 1915.

\bibitem[Fulkerson and Gross(1965)]{fulkerson1965incidence}
Delbert Fulkerson and Oliver Gross.
\newblock Incidence matrices and interval graphs.
\newblock \emph{Pacific journal of mathematics}, 15\penalty0 (3):\penalty0
  835--855, 1965.

\bibitem[Hoyer et~al.(2009)Hoyer, Janzing, Mooij, Peters, and
  Sch{\"o}lkopf]{hoyer2009nonlinear}
Patrik~O Hoyer, Dominik Janzing, Joris~M Mooij, Jonas Peters, and Bernhard
  Sch{\"o}lkopf.
\newblock Nonlinear causal discovery with additive noise models.
\newblock In \emph{Advances in neural information processing systems}, pages
  689--696, 2009.

\bibitem[Jiang and Li(2016)]{jiang2016doubly}
Nan Jiang and Lihong Li.
\newblock Doubly robust off-policy value evaluation for reinforcement learning.
\newblock In \emph{International Conference on Machine Learning}, pages
  652--661. PMLR, 2016.

\bibitem[Margaritis and Thrun(1999)]{margaritis1999bayesian}
Dimitris Margaritis and Sebastian Thrun.
\newblock Bayesian network induction via local neighborhoods.
\newblock \emph{Advances in Neural Information Processing Systems},
  12:\penalty0 505--511, 1999.

\bibitem[Mokhtarian et~al.(2020)Mokhtarian, Akbari, Ghassami, and
  Kiyavash]{mokhtarian2020recursive}
Ehsan Mokhtarian, Sina Akbari, AmirEmad Ghassami, and Negar Kiyavash.
\newblock A recursive markov boundary-based approach to causal structure
  learning.
\newblock \emph{arXiv preprint arXiv:2010.04992}, 2020.

\bibitem[Murphy(2003)]{murphy2003optimal}
Susan~A Murphy.
\newblock Optimal dynamic treatment regimes.
\newblock \emph{Journal of the Royal Statistical Society: Series B (Statistical
  Methodology)}, 65\penalty0 (2):\penalty0 331--355, 2003.

\bibitem[Ogarrio et~al.(2016)Ogarrio, Spirtes, and Ramsey]{ogarrio2016hybrid}
Juan~Miguel Ogarrio, Peter Spirtes, and Joe Ramsey.
\newblock A hybrid causal search algorithm for latent variable models.
\newblock In \emph{Conference on Probabilistic Graphical Models}, pages
  368--379. PMLR, 2016.

\bibitem[Pearl(1988)]{pearl1988probabilistic}
Judea Pearl.
\newblock \emph{Probabilistic reasoning in intelligent systems: Networks of
  plausible inference}.
\newblock Morgan Kaufmann, 1988.

\bibitem[Pearl(2000)]{pearl2009causality}
Judea Pearl.
\newblock \emph{Causality}.
\newblock Cambridge university press, 2000.

\bibitem[Pellet and Elisseeff(2008{\natexlab{a}})]{pellet2008finding}
Jean-Philippe Pellet and Andr{\'e} Elisseeff.
\newblock Finding latent causes in causal networks: an efficient approach based
  on markov blankets.
\newblock \emph{Neural Information Processing Systems Foundation},
  2008{\natexlab{a}}.

\bibitem[Pellet and Elisseeff(2008{\natexlab{b}})]{pellet2008using}
Jean-Philippe Pellet and Andr{\'e} Elisseeff.
\newblock Using markov blankets for causal structure learning.
\newblock \emph{Journal of Machine Learning Research}, 9\penalty0
  (Jul):\penalty0 1295--1342, 2008{\natexlab{b}}.

\bibitem[Richardson(2003)]{richardson2003markov}
Thomas Richardson.
\newblock Markov properties for acyclic directed mixed graphs.
\newblock \emph{Scandinavian Journal of Statistics}, 30\penalty0 (1):\penalty0
  145--157, 2003.

\bibitem[Richardson et~al.(2002)Richardson, Spirtes,
  et~al.]{richardson2002ancestral}
Thomas Richardson, Peter Spirtes, et~al.
\newblock Ancestral graph markov models.
\newblock \emph{The Annals of Statistics}, 30\penalty0 (4):\penalty0 962--1030,
  2002.

\bibitem[Shimizu et~al.(2006)Shimizu, Hoyer, Hyv{\"a}rinen, and
  Kerminen]{shimizu2006linear}
Shohei Shimizu, Patrik~O Hoyer, Aapo Hyv{\"a}rinen, and Antti Kerminen.
\newblock A linear non-gaussian acyclic model for causal discovery.
\newblock \emph{Journal of Machine Learning Research}, 7\penalty0
  (Oct):\penalty0 2003--2030, 2006.

\bibitem[Spirtes et~al.(2000)Spirtes, Glymour, Scheines, and
  Heckerman]{spirtes2000causation}
Peter Spirtes, Clark~N Glymour, Richard Scheines, and David Heckerman.
\newblock \emph{Causation, prediction, and search}.
\newblock MIT press, 2000.

\bibitem[Strobl(2019)]{strobl2019constraint}
Eric~V Strobl.
\newblock A constraint-based algorithm for causal discovery with cycles, latent
  variables and selection bias.
\newblock \emph{International Journal of Data Science and Analytics},
  8\penalty0 (1):\penalty0 33--56, 2019.

\bibitem[Thomas and Brunskill(2016)]{thomas2016data}
Philip Thomas and Emma Brunskill.
\newblock Data-efficient off-policy policy evaluation for reinforcement
  learning.
\newblock In \emph{International Conference on Machine Learning}, pages
  2139--2148. PMLR, 2016.

\bibitem[Tillman et~al.(2008)Tillman, Danks, and
  Glymour]{tillman2008integrating}
Robert~E Tillman, David Danks, and Clark Glymour.
\newblock Integrating locally learned causal structures with overlapping
  variables.
\newblock In \emph{NIPS}, pages 1665--1672. Citeseer, 2008.

\bibitem[Tsamardinos et~al.(2003{\natexlab{a}})Tsamardinos, Aliferis, and
  Statnikov]{tsamardinos2003time}
Ioannis Tsamardinos, Constantin~F Aliferis, and Alexander Statnikov.
\newblock Time and sample efficient discovery of markov blankets and direct
  causal relations.
\newblock In \emph{Proceedings of the ninth ACM SIGKDD international conference
  on Knowledge discovery and data mining}, pages 673--678, 2003{\natexlab{a}}.

\bibitem[Tsamardinos et~al.(2003{\natexlab{b}})Tsamardinos, Aliferis,
  Statnikov, and Statnikov]{tsamardinos2003algorithms}
Ioannis Tsamardinos, Constantin~F Aliferis, Alexander~R Statnikov, and
  Er~Statnikov.
\newblock Algorithms for large scale markov blanket discovery.
\newblock In \emph{FLAIRS conference}, volume~2, pages 376--380,
  2003{\natexlab{b}}.

\bibitem[Tsirlis et~al.(2018)Tsirlis, Lagani, Triantafillou, and
  Tsamardinos]{tsirlis2018scoring}
Konstantinos Tsirlis, Vincenzo Lagani, Sofia Triantafillou, and Ioannis
  Tsamardinos.
\newblock On scoring maximal ancestral graphs with the max--min hill climbing
  algorithm.
\newblock \emph{International Journal of Approximate Reasoning}, 102:\penalty0
  74--85, 2018.

\bibitem[Verma and Pearl(1991)]{verma1991equivalence}
Thomas Verma and Judea Pearl.
\newblock \emph{Equivalence and synthesis of causal models}.
\newblock UCLA, Computer Science Department, 1991.

\bibitem[Yu et~al.(2018)Yu, Liu, Li, and Chen]{yu2018mining}
Kui Yu, Lin Liu, Jiuyong Li, and Huanhuan Chen.
\newblock Mining markov blankets without causal sufficiency.
\newblock \emph{IEEE transactions on neural networks and learning systems},
  29\penalty0 (12):\penalty0 6333--6347, 2018.

\bibitem[Zhang et~al.(2020)Zhang, Chen, and Pearl]{zhang2020simultaneous}
Chi Zhang, Bryant Chen, and Judea Pearl.
\newblock A simultaneous discover-identify approach to causal inference in
  linear models.
\newblock In \emph{Proceedings of the AAAI Conference on Artificial
  Intelligence}, volume~34, pages 10318--10325, 2020.

\bibitem[Zhang(2008)]{zhang2008causal}
Jiji Zhang.
\newblock Causal reasoning with ancestral graphs.
\newblock \emph{Journal of Machine Learning Research}, 9:\penalty0 1437--1474,
  2008.

\bibitem[Zhang et~al.(2017)Zhang, Sch{\"o}lkopf, Spirtes, and
  Glymour]{zhang2017learning}
Kun Zhang, Bernhard Sch{\"o}lkopf, Peter Spirtes, and Clark Glymour.
\newblock Learning causality and causality-related learning: Some recent
  progress.
\newblock \emph{National Science Review}, 5\penalty0 (1):\penalty0 26--29,
  2017.

\end{thebibliography}

% %%%%%%%%%%%%%%%%%%%%%%%%%%%%%%%%%%%%%%%%%%%%%%%%%%%%%%%%%%%%
\clearpage
\appendix
{\Large \textbf{Appendix}}
\section{Removable Variables}\label{sec: apd graphical of removable}

In this section, we first prove the proposed graphical representation for a removable variable in a MAG $\mathcal{M}$ (Theorem \ref{thm: graph-rep}).
Then, we discuss how this representation reduces to Theorem 5 of \cite{mokhtarian2020recursive} in the case of DAGs.

Throughout our proofs, we say a path between $X$ and $Y$ is \emph{blocked} by a set $\mathbf{W}$ if it is not m-connecting relative to $\mathbf{W}$. 
In this case, there exists a non-collider $W$ on the path which is a member of $\mathbf{W}$, or there exists a collider $W$ on the path such that $W\notin \Anc{\{X,Y\}\cup\mathbf{W}}$. 
In both cases we say $W$ blocks this path with respect to $\mathbf{W}$, or $W$ blocks the path in short when $\mathbf{W}$ is clear from the context. 
We say $X$ is a descendant of $Y$ if $Y\in\Anc{X}$, and we denote by $\textit{De}_\mathcal{M}(X)$ the set of descendants of $X$ in the MAG $\mathcal{M}$, and $\De{X}$ whenever the graph is clear from the context. 
\subsection{Graphical representation}
\begin{customthm}{\ref{thm: graph-rep}}
    Vertex $X$ is removable in a MAG $\mathcal{M}$ over the variables $\mathbf{V}$, if and only if 
    \begin{enumerate}
        \item for any $Y\in \Adj{X}$ and $Z\in \Ch{X}\cup \N{X}\setminus \{Y\}$, $Y$ and $Z$ are adjacent, and 
        \item for any collider path $u=(X,V_1,...,V_m,Y)$ and $Z\in\mathbf{V}\setminusA\{X,Y,V_1,...,V_m\}$ such that $\{X,V_1,...,V_m\}\subseteq\Pa{Z}$, $Y$ and $Z$ are adjacent.
    \end{enumerate}
\end{customthm}
\begin{proof} Let $\mathcal{H}$ denote the induced subgraph of $\mathcal{M}$ over $\mathbf{V}\setminusA\{X\}$.

\textbf{only if part:}
Suppose $Y\in\Adj{X}$ and $Z\in\Ch{X}\cup \N{X}$. For any $\mathbf{W}\subseteq \mathbf{V}\setminusA\{X,Y,Z\}$, $(Z,X,Y)$ is an m-connecting path relative to $\mathbf{W}$ in $\mathcal{M}$, as $X$ is a non-collider and $X\notin\mathbf{W}$.
That is, no such $\mathbf{W}$ can m-separate $Y$ and $Z$.
Since $X$ is removable in $\mathcal{M}$, by definition of removability,
\begin{equation}\label{eq: proof only if thm1}\msep{Y}{Z}{\mathbf{W}}{\mathcal{M}}
\iff
\msep{Y}{Z}{\mathbf{W}}{\mathcal{H}}.
\end{equation}
As a result, $Y$ and $Z$ have no m-separating sets in $\mathcal{H}$.
Hence, $Y$ is adjacent to $Z$ in $\mathcal{H}$, and therefore, in $\mathcal{M}$.

Now suppose $u=(X,V_1,...,V_m,Y)$ is a collider path and $\{X,V_1,...,V_m\}\subseteq\Pa{Z}$.
Again for any $\mathbf{W}\subseteq \mathbf{V}\setminusA\{X,Y,Z\}$, $(Z,X,V_1,...,V_m,Y)$ is an m-connecting path relative to $\mathbf{W}$ in $\mathcal{M}$ since I) every collider on this path is a parent (and therefore an ancestor) of $Z$, and II) $X\notin\mathbf{W}$ and $X$ is the only non-collider on this path.
That is, no such $\mathbf{W}$ can m-separate $Y$ and $Z$.
Since $X$ is removable in $\mathcal{M}$, Equation \ref{eq: proof only if thm1} implies that $Y$ and $Z$ have no m-separating sets in $\mathcal{H}$.
Hence, $Y$ is adjacent to $Z$ in $\mathcal{H}$, and therefore, in $\mathcal{M}$.

\textbf{if part:} We need to prove that for any $Y,Z\in \mathbf{V}\setminus \{X\}$ and any $\mathbf{W}\subseteq \mathbf{V}\setminus \{X,Y,Z\}$,
\[ 
    \msep{Y}{Z}{\mathbf{W}}{\mathcal{M}}
    \iff
    \msep{Y}{Z}{\mathbf{W}}{\mathcal{H}}.
\]
$\Rightarrow:$ 
Suppose $\msep{Y}{Z}{\mathbf{W}}{\mathcal{M}}$ and let $u$ be an arbitrary path in $\mathcal{H}$ between $Y$ and $Z$.
Since $\mathcal{H}$ is a subgraph of $\mathcal{M}$, $u$ is also a path in $\mathcal{M}$.
As $\msep{Y}{Z}{\mathbf{W}}{\mathcal{M}}$, $u$ is not m-connecting relative to $\mathbf{W}$ in $\mathcal{M}$, Lemma \ref{lem: path block} implies that $u$ is not m-connecting relative to $\mathbf{W}$ in $\mathcal{H}$.

$\Leftarrow:$ 
Suppose $\msep{Y}{Z}{\mathbf{W}}{\mathcal{H}}$, i.e., there is no m-connecting path between $Y$ and $Z$ in $\mathcal{H}$. 
It suffices to show that none of the paths between $Y$ and $Z$ in $\mathcal{M}$ are m-connecting. 
Take an arbitrary path $u=(Y,V_1,...,V_m,Z)$ in $\mathcal{M}$. 
We will show that $u$ is not m-connecting relative to $\mathbf{W}$ in $\mathcal{M}$. 
We consider the following cases separately.
\begin{enumerate}[leftmargin=*]
    \item $X\notin u$:
        In this case, $u$ is also a path in $\mathcal{H}$. 
        Since $u$ is not m-connecting relative to $\mathbf{W}$ in $\mathcal{H}$, Lemma \ref{lem: path block} implies that $u$ is not m-connecting relative to $\mathbf{W}$ in $\mathcal{M}$.
    \item $X$ is a non-collider on $u$: 
        Suppose $u = (Y,V_1,\dots,V_{i-1},V_i=X,V_{i+1},\dots, V_m,Z)$.
        We claim that a vertex other than $X$ blocks $u$ in $\mathcal{M}$. 
        Suppose not. 
        Since $X$ is a non-collider, at least one of $V_{i-1}$ and $V_{i+1}$ is a child or neighbor of $X$. From the assumption of the theorem, $V_{i-1}\in\Adj{V_{i+1}}$.
        Now consider the path $u'=(Y,V1,...,V_{i-1},V_{i+1}, ..., V_m,Z)$, which is a path in $\mathcal{H}$ and must not be m-connecting relative to $\mathbf{W}$ in $\mathcal{H}$. 
        Hence, Lemma \ref{lem: path block} implies that $u'$ is not m-connecting relative to $\mathbf{W}$ in $\mathcal{M}$. 
        If a vertex other than $\{V_{i-1},V_{i+1}\}$ blocks $u'$ in $\mathcal{M}$, the same vertex blocks $u$, which is a contradiction. 
        Suppose without loss of generality that $V_{i-1}$ blocks $u'$ in $\mathcal{M}$.
        If $V_{i-1}$ is a collider on both $u$ and $u'$ or a non-collider on both of them, $V_{i-1}$ blocks $u$ in $\mathcal{M}$ which is a contradiction.
        So suppose $V_{i-1}$ is a collider on one of $u$ and $u'$, and a non-collider on the other one.
        From Lemma \ref{lem: X non-collider V parent}, $V_{i-1},X\in\Pa{V_{i+1}}$.
        Also, $V_{i-1}$ is a collider on $u$ in this case, that is, $(V_{i-2},V_{i-1},X)$ is a collider path.
        From the assumption of the theorem, $V_{i-2}\in\Adj{V_{i+1}}$.
        The edge between $V_{i-2}$ and $V_{i+1}$ has an arrowhead at $V_{i+1}$, as otherwise an (almost) directed cycle is formed over $V_{i-2},V_{i-1},V_{i+1}$.
        Now define the path $u''$ as $u''=(Y,V1,...,V_{i-2},V_{i+1}, ..., V_m,Z)$.
        This path also exists in $\mathcal{H}$, and therefore, $u''$ is not m-connecting relative to $\mathbf{W}$ in $\mathcal{H}$.
        Hence, $u''$ is not m-connecting relative to $\mathbf{W}$ in $\mathcal{M}$.
        If a vertex other than $V_{i-2}$ blocks $u''$ in $\mathcal{M}$, it also blocks $u'$ in $\mathcal{M}$, which is a contradiction, since we assumed that only $V_{i-1}$ blocks this path.
        If $V_{i-2}$ is a collider on both $u'$ and $u''$, or a non-collider on both of them, $V_{i-2}$ blocks $u'$ in $\mathcal{M}$, which is a contradiction.
        Now applying Lemma \ref{lem: X non-collider V parent} implies that $V_{i-2}\in\Pa{V_{i+1}}$ and $(V_{i-3},V_{i-2},V_{i-1},X)$ is collider path.
        Continuing in this manner finally implies that $Y\in\Adj{V_{i+1}}$ and the edge between $Y$ and $V_{i+1}$ has an arrowhead at $V_{i+1}$.
        Now since the path $(Y,V_{i+1},...,V_m,Z)$ is not m-connecting relative to $\mathbf{W}$, there exists a vertex $T$ that blocks it in $\mathcal{M}$. 
        The same vertex must block $(Y,V_1,V_{i+1},...,V_m,Z)$, which is a contradiction.
        Note that now $T$ is either a collider on both of these paths, or a non-collider on both of them.
        Also note that the assumption that $V_{i-1}$ blocks $u'$ in $\mathcal{M}$ does not violate the generality of the proof as if we assumed that $V_{i+1}$ blocks $u'$, that would imply the same arguments for the paths $(Y,V_1,...,V_{i-1},V_j,V_{j+1},...,V_m,Z)$, with the only difference that $Y$ and $Z$ would be interchanged throughout the proof.
    \item $X$ is a collider on $u$: 
        Suppose $u = (Y,V_1,\dots,V_{i-1},V_i=X,V_{i+1},\dots, V_m,Z)$. 
        If a vertex other than $X$ blocks $u$ in $\mathcal{M}$, we are done. Otherwise, we claim that $X$ blocks $u$ in $\mathcal{M}$. 
        Since $X\notin\mathbf{W}$, it suffices to show that $\textit{De}_\mathcal{M}(X) \cap \big( \{Y,Z\} \cup \mathbf{W} \big) = \varnothing$. 
        Assume by contradiction that there exists a directed path from $X$ to a vertex in $\{Y,Z\}\cup\mathbf{W}$, and let $T\in\Ch{X}$ denote the first vertex next to $X$ on this path. 
        Note that $T\notin \{V_{i-1},V_{i+1}\}$.
        Since $(V_{i-1},X)$ and $(V_{i+1},X)$ are collider paths and $X\in\Pa{T}$, $V_{i-1},V_{i+1}\in\Adj{T}$ from the assumption. 
        Both of these edges must have arrows on the side of $T$, as otherwise, an (almost) directed cycle would occur. 
        Therefore, $T$ is a collider on $(V_{i-1},T,V_{i+1})$. 
        Now, consider the path $u'=(Y,V1,...,V_{i-1},T,V_{i+1}, ..., V_m,Z)$, which is a path in $\mathcal{H}$ and must not be m-connecting relative to $\mathbf{W}$ in $\mathcal{H}$. 
        Hence, Lemma \ref{lem: path block} implies that $u'$ is not m-connecting relative to $\mathbf{W}$ in $\mathcal{M}$. 
        If a vertex other than $\{V_{i-1},T,V_{i+1}\}$ blocks $u'$ in $\mathcal{M}$, the same vertex blocks $u$, which is a contradiction. 
        $T$ cannot block $u'$ in $\mathcal{M}$ as it is a collider on $u'$ and it has a descendant in $\{Y,Z\}\cup\mathbf{W}$.
        Thus, suppose without loss of generality that $V_{i-1}$ blocks $u'$ in $\mathcal{M}$. 
        If $V_{i-1}$ is a collider on both $u$ and $u'$ or a non-collider on both of them, $V_{i-1}$ blocks $u$ in $\mathcal{M}$ which is a contradiction.
        So suppose $V_{i-1}$ is a non-collider on $u'$ and a collider on $u$.
        Note that the other case is not possible because an (almost) directed cycle would occur over the vertices $V_{i-1},X,T$.
        As a result, $V_{i-1}\in\Pa{T}$.
        Now, consider the collider path $(V_{i-2},V_{i-1},X)$ in which $V_{i-1},X\in\Pa{T}$. 
        Therefore, $V_{i-2}\in\Adj{T}$.
        Again, this edge must have an arrowhead on the side of $T$, as otherwise an (almost) directed cycle is formed over $(V_{i-2},V_{i-1},T$.
        Now, consider the path $u''=(Y,V1,\dots,V_{i-2},T,V_{i+1},\dots, V_m,Z)$, which is a path in $\mathcal{H}$, and therefore, is not m-connecting relative to $\mathbf{W}$ in $\mathcal{H}$. 
        In this case, Lemma \ref{lem: path block} implies that $u''$ is not m-connecting relative to $\mathbf{W}$ in $\mathcal{M}$. 
        We can repeat the arguments above for this path, implying that either there exists a vertex that blocks $u$ in $\mathcal{M}$, or $V_{i-2}\in\Pa{T}$, and therefore, $V_{i-3}\in\Adj{T}$ (or alternatively, $V_{i+1}\in\Pa{T}$, and therefore, $V_{i+2}\in\Adj{T}$, which does not alter the proof.)
        Continuing in the same manner, either there exists a vertex that blocks $u$ in $\mathcal{M}$ which is a contradiction, or $Y,Z\in\Adj{T}$, where $T$ is a collider on $(Z,T,Y)$.
        Finally, $(Z,T,Y)$ is a path in $\mathcal{H}$ and must not be m-connecting relative to $\mathbf{W}$, but this is not possible because $\text{De}_\mathcal{M}(Y)\cap\Anc{\{Y,Z\}\cup\mathbf{W}}\neq\varnothing$. 
        This contradiction proves that $X$ cannot have a descendant in $\{Y,Z\}\cup\mathbf{W}$, which implies that $X$ blocks $u$ in $\mathcal{M}$.
\end{enumerate} 
In all of the cases, $u$ is not m-connecting relative to $\mathbf{W}$, which completes the proof.
\end{proof}
\subsection{Reduction to DAGs}
The notion of removability is first discussed in \cite{mokhtarian2020recursive} for the case of DAGs. 
Herein, we discuss how our definition of removability for MAGs (Definition \ref{def: removable}) and the provided graphical representation (Theorem \ref{thm: graph-rep}) can be reduced to their results when we restrict ourselves to the space of DAGs.
Note that our removability tests in Theorem \ref{thm: test removability} do not reduce to what they proposed for DAGs.
For instance, we directly test the removability of a vertex without identifying its so-called co-parents.
\begin{itemize}[leftmargin=*]
    \item Definition \ref{def: removable}: 
        In the case of DAGs, m-separation reduces to d-separation. Hence, Definition \ref{def: removable} is reduced to what \cite{mokhtarian2020recursive} proposed in the case of DAGs.
    \item Graphical representation:
        Suppose the ground-truth graph is a DAG.
        Note that collider paths in DAGs can be of length at most two and the vertices have no neighbors.
        In this case, our graphical representation of a removable variable in Theorem \ref{thm: graph-rep} is reduced to what is proposed in Theorem 5 of \cite{mokhtarian2020recursive}.
\end{itemize}
The removability test provided in \cite{mokhtarian2020recursive} fails in the case that causal sufficiency is violated. 
Consider for example the vertex $X$ in Figure \ref{fig: graph-rep not rem}. 
If the proposed tests of \cite{mokhtarian2020recursive} are performed for $X$, then $Z$ and $V_1$ are identified to be adjacent to $X$, and then the collider paths $X\to Z\gets V_1$, $X\to Z\gets V_2$, and $X\to V_1\gets V_2$ are identified. 
Then due to their removability tests, $X$ is decided to be removable since the pairs $(Z,V_1)$, $(Z,V_2)$ and $(V_1,V_2)$ cannot be m-separated. 
However, we know from Theorem \ref{thm: graph-rep} that $X$ is not removable in this MAG.
\section{Proofs} \label{sec: apd proof}
In this section, we first present fundamental lemmas used throughout our proofs. The proofs for the results of the main text is provided in Appendix \ref{sec: apd main results}.
\subsection{Preliminary lemmas}
\begin{lemma}\label{lem: preserveDescendants}
    Suppose $X$ is a vertex in a MAG $\mathcal{M}$ with vertex set $\mathbf{V}$ such that if $Y\in \Pa{X}$ and $Z\in \Ch{X}$, then $Y\in \Pa{Z}$.
    Let $\mathcal{H}$ be the induced subgraph of $\mathcal{M}$ over $\mathbf{V}\setminus \{X\}$.
    Note that $\mathcal{H}$ is also a MAG.
    In this case, for any $Y\in \mathbf{V}\setminus \{X\}$, 
    \[\text{De}_\mathcal{M}(Y)\setminusA\{X\}=\text{De}_\mathcal{H}(Y).\]
\end{lemma}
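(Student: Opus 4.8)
The plan is to establish the set equality by proving the two inclusions separately, with the harder direction being $\text{De}_{\mathcal{M}}(Y)\setminusA\{X\}\subseteq\text{De}_{\mathcal{H}}(Y)$.

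First I would dispatch the easy inclusion $\text{De}_{\mathcal{H}}(Y)\subseteq\text{De}_{\mathcal{M}}(Y)\setminusA\{X\}$: since $\mathcal{H}$ is an induced subgraph of $\mathcal{M}$, every directed path of $\mathcal{H}$ is also a directed path of $\mathcal{M}$, so $\text{De}_{\mathcal{H}}(Y)\subseteq\text{De}_{\mathcal{M}}(Y)$; and $X$ is not a vertex of $\mathcal{H}$, hence $X\notin\text{De}_{\mathcal{H}}(Y)$. Note also that $\mathcal{H}$ contains no directed cycle, as any such cycle would already be present in $\mathcal{M}$, which is ancestral; this is the only structural property of $\mathcal{H}$ the argument needs.

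For the reverse inclusion I would take $W\in\text{De}_{\mathcal{M}}(Y)\setminusA\{X\}$ together with a directed path $u=(Y=T_0,T_1,\dots,T_k=W)$ in $\mathcal{M}$, and perform a ``shortcutting'' step each time $X$ occurs on the path. If $X$ does not lie on $u$, then $u$ is already a directed path in $\mathcal{H}$ and we are done. Otherwise $X=T_i$ for the unique index $i$ with $0<i<k$ (indeed $i\neq 0$ since $T_0=Y\neq X$, and $i\neq k$ since $T_k=W\neq X$ by the choice of $W$). Because $u$ is directed, $T_{i-1}\in\Pa{X}$ and $T_{i+1}\in\Ch{X}$, so the hypothesis of the lemma yields $T_{i-1}\in\Pa{T_{i+1}}$; as neither $T_{i-1}$ nor $T_{i+1}$ equals $X$, the edge $T_{i-1}\to T_{i+1}$ is present in $\mathcal{H}$ as well. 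Replacing the subpath $(T_{i-1},X,T_{i+1})$ by this edge produces the sequence $u'=(T_0,\dots,T_{i-1},T_{i+1},\dots,T_k)$, every consecutive pair of which is a directed edge of $\mathcal{H}$.

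It then remains to verify that $u'$ is a genuine directed path from $Y$ to $W$. Its vertex set is that of $u$ with $X$ deleted, so its vertices are pairwise distinct; in particular $T_{i-1}\neq T_{i+1}$, since otherwise $T_{i-1}\to X\to T_{i-1}$ would be a directed cycle in $\mathcal{M}$, contradicting ancestrality. (Even setting this aside, a directed walk from $Y$ to $W$ in the directed-cycle-free graph $\mathcal{H}$ can always be contracted to a directed path.) Hence $W\in\text{De}_{\mathcal{H}}(Y)$, which finishes the inclusion and the proof. I expect no real obstacle: the argument is elementary, and the only delicate point is confirming that the shortcut creates neither a repeated vertex nor a cycle, which is precisely where the ancestrality of $\mathcal{M}$ — and the parent-to-child hypothesis on $X$ — are used.
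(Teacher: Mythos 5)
Your proof is correct and follows essentially the same route as the paper: the easy inclusion via induced subgraph, and for the reverse direction the same shortcut replacing $(T_{i-1},X,T_{i+1})$ by the edge $T_{i-1}\to T_{i+1}$ guaranteed by the hypothesis. The only difference is that you explicitly verify the shortcut yields a path with distinct vertices, a detail the paper leaves implicit.
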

\begin{proof}
    Suppose $Z\in\text{De}_\mathcal{M}(Y) \setminus \{X\}$, i.e., there exists a directed path from $Y$ to $Z\neq X$ in $\mathcal{M}$. 
    If this path does not pass through $X$, the same path exists in $\mathcal{H}$, and $Z\in\text{De}_\mathcal{H}(Y)$. 
    Otherwise, suppose this path is $(Y,U_1,\dots,U_i,X,U_{i+1},\dots,Z)$. 
    Since $U_i\in\Pa{X}$ and $U_{i+1}\in\Ch{X}$, $U_i\in \Pa{U_{i+1}}$. 
    Hence, $(Y,U_1,\dots,U_i,U_{i+1},\dots,Z)$ is a directed path in $\mathcal{H}$, and $Z\in\text{De}_\mathcal{H}(Y)$. 
    This implies that
    \[\text{De}_\mathcal{M}(Y)\setminusA\{X\}\subseteq\text{De}_\mathcal{H}(Y).\]
    Furthermore, if there exists a directed path from $Y$ to $Z$ in $\mathcal{H}$, the same path exists in $\mathcal{M}$, which implies that 
    \[\text{De}_\mathcal{H}(Y)\subseteq\text{De}_\mathcal{M}(Y)\setminusA\{X\}.\]
    This completes the proof.
\end{proof}
\begin{lemma}\label{lem: parent-plus}
    Let $X$ and $Y$ be two non-adjacent vertices in a MAG $\mathcal{M}$, where $X\notin\Anc{Y}$. 
    Then 
    \begin{equation}
        \msep{X}{Y}{\mathbf{W}\setminusA\{X,Y\}}{\mathcal{M}}, \hspace{5pt} \text{where } \mathbf{W}= N(X) \cup \big(\PaP{X}\cap\Anc{\{X,Y\}}\big).
    \end{equation}
\end{lemma}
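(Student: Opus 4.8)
The plan is to argue by contradiction. Write $\mathbf{Z} := \mathbf{W}\setminusA\{X,Y\}$, suppose there is an m-connecting path $p=(X=V_0,V_1,\dots,V_k=Y)$ relative to $\mathbf{Z}$ in $\mathcal{M}$, and derive a contradiction from a case analysis on the mark of the edge $V_0V_1$ at $X$. Since $X$ and $Y$ are non-adjacent, $k\ge 2$ and $V_1\notin\{X,Y\}$. The engine of the proof is a structural claim that I would establish first: $\De{X}\setminusA\{X\}$ is disjoint from $\Anc{\{X,Y\}\cup\mathbf{Z}}$ (and, more generally, the same holds for the proper descendants of any vertex that itself lies outside $\Anc{\{X,Y\}}$). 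Indeed, a proper descendant of $X$ is not in $\Anc{X}$ or $\Anc{Y}$ by acyclicity together with $X\notin\Anc{Y}$; and it is not in $\Anc{W}$ for any $W\in\mathbf{Z}$ either, since $W\in\PaP{X}\cap\Anc{\{X,Y\}}$ would put the descendant in $\Anc{\{X,Y\}}$, while $W\in\N{X}$ has no parents (an undirected-edge endpoint in a MAG), so $\Anc{W}=\{W\}$ and a proper descendant of $X$ cannot coincide with such a $W$. Hence every collider that $p$ meets inside $\De{X}\setminusA\{X\}$ is inactive and blocks $p$.

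Case: the mark at $X$ is a tail, i.e.\ $X-V_1$ or $X\to V_1$. If $X-V_1$, then $V_1\in\N{X}\subseteq\mathbf{Z}$, but the edge has a tail at $V_1$, so $V_1$ is a non-collider lying in the conditioning set, which blocks $p$ --- contradiction. If $X\to V_1$, then $V_1$ is a proper descendant of $X$; whenever $p$ continues through non-colliders, each receives an arrowhead from the previous step and therefore emits a tail, and, having a parent, carries no undirected edge, so $p$ proceeds along directed edges and stays inside $\De{X}$. It cannot reach $Y\notin\De{X}$, so it must meet a collider inside $\De{X}\setminusA\{X\}$, which is inactive by the structural claim --- contradiction.

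Case: the mark at $X$ is an arrowhead, i.e.\ $X\leftarrow V_1$ or $X\leftrightarrow V_1$. If $X\leftarrow V_1$, then $V_1\in\Pa{X}\subseteq\Anc{\{X,Y\}}$, so $V_1\in\mathbf{Z}$ while the edge has a tail at $V_1$ --- again a non-collider in the conditioning set, contradiction. So suppose $X\leftrightarrow V_1$; here $X$ has a spouse, hence $\N{X}=\varnothing$ and $\mathbf{Z}\subseteq\Anc{\{X,Y\}}$, so in this branch $\Anc{\mathbf{Z}}\subseteq\Anc{\{X,Y\}}$. If $V_1\notin\Anc{\{X,Y\}}$, then $V_1\notin\mathbf{Z}$; if $V_1$ is a collider it is inactive (not in $\Anc{\{X,Y\}}\supseteq\Anc{\mathbf{Z}}$) and blocks $p$, and if $V_1$ is a non-collider then, having a spouse, it emits a directed edge and $p$ descends into $\De{V_1}$, where colliders lie outside $\Anc{\{X,Y\}}$ and hence block $p$ --- contradiction. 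Otherwise $V_1\in\Dis{X}\cap\Anc{\{X,Y\}}\subseteq\mathbf{W}$ and $V_1\notin\{X,Y\}$, so $V_1\in\mathbf{Z}$; then $V_1$ must be a collider on $p$, so the next edge carries an arrowhead at $V_1$ and is $V_1\leftarrow V_2$ or $V_1\leftrightarrow V_2$. If $V_1\leftarrow V_2$, then $V_2\in\Pa{\Dis{X}}\cap\Anc{\{X,Y\}}\subseteq\mathbf{W}$ with $V_2\notin\{X,Y\}$ (the case $V_2=Y$ is impossible: $Y\to V_1$ and $X\leftrightarrow V_1$ force $V_1\notin\Anc{\{X,Y\}}$ by acyclicity and absence of almost directed cycles), so $V_2\in\mathbf{Z}$ is a non-collider in the conditioning set --- contradiction. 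If $V_1\leftrightarrow V_2$, then $V_2\in\Dis{X}$ and we are exactly in the situation just analyzed for $V_1$; iterating, every way of leaving this "bidirected collider" regime was shown to be contradictory, so it can terminate only at $V_k=Y$, making $p$ a bidirected collider path $X\leftrightarrow V_1\leftrightarrow\cdots\leftrightarrow Y$ all of whose internal vertices lie in $\Anc{\{X,Y\}}$, i.e.\ an inducing path between $X$ and $Y$, which by maximality of $\mathcal{M}$ forces them to be adjacent --- contradiction. Since all cases are impossible, $\msep{X}{Y}{\mathbf{Z}}{\mathcal{M}}$.

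I expect the main obstacle to be precisely the collider bookkeeping: ruling out "activated" colliders reached along descending sub-paths. The structural claim of the first paragraph is what makes this clean, and its verification is where the ancestral-graph axioms (no directed or almost directed cycles; undirected-edge endpoints have no parents or spouses) and the exact shape of $\mathbf{W}$ --- in particular the intersection with $\Anc{\{X,Y\}}$ --- are genuinely used. The remaining sub-case exclusions (such as $V_1\ne Y$, $V_2\ne Y$, or a neighbor of $X$ coinciding with a proper descendant of $X$) are routine once the skeleton of $p$ above is fixed, so I would relegate them to one-line remarks.
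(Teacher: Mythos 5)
Your proof follows essentially the same route as the paper's: a case analysis on how the putative m-connecting path leaves $X$, using the ancestral axioms, the composition of $\mathbf{W}$, a descent-to-an-inactive-collider argument, and the fact that a non-adjacent pair in a MAG admits no inducing path (maximality). Your structural claim about proper descendants and the neighbor/parent/child cases are correct as written.

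The one step that is genuinely under-justified is the iteration in the spouse case. You assert that at every later vertex $V_j$ of the bidirected stretch you are ``exactly in the situation just analyzed for $V_1$,'' and in particular that the path terminating with a directed edge $Y\to V_j$ into the last collider is excluded by the same acyclicity/almost-directed-cycle argument you gave for $V_2=Y$ at the first step. That argument used the edge $X\leftrightarrow V_1$ directly (to rule out $V_1\in\Anc{X}$), and it does not transfer verbatim to $V_j$ with $j\ge 2$, since $V_j$ need not be a spouse of $X$; a priori the path could end as $X\leftrightarrow V_1\leftrightarrow\cdots\leftrightarrow V_j\leftarrow Y$ with all internal vertices in $\Anc{\{X,Y\}}$, every internal vertex an active collider, and nothing you wrote blocks it. The fix is immediate and stays inside your own framework: such a terminal path is again an inducing path (every internal vertex is a collider and an ancestor of an endpoint), so the same maximality contradiction you invoke for the all-bidirected termination applies; alternatively, chain ancestors: $X\leftrightarrow V_1$ forces $V_1\in\Anc{Y}$, while $Y\to V_j$ forces $V_j\in\Anc{X}$, whence $V_1\in\Anc{Y}\subseteq\Anc{V_j}\subseteq\Anc{X}$, an almost directed cycle with $X\leftrightarrow V_1$. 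With that sentence added your argument is complete and coincides in substance with the paper's, which organizes the same cases by the maximal bidirected prefix of the path rather than by iterating vertex by vertex.
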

\begin{proof}
    Let $u = (X=V_0,V_1,\dots,V_m,Y=V_{m+1})$ be an arbitrary path between $X$ and $Y$.
    It suffices to show that $\mathbf{W}\setminusA\{X,Y\}$ blocks $u$. 
    Let $i$ be the largest index such that all the edges on $(V_0,V_1,...,V_i)$ are bidirectional. We consider the following cases separately.
    \begin{enumerate}[leftmargin=*]
        \item $i\geq m$: In this case, all the vertices $V_1,...,V_m$ on the path are colliders that belong to $\PaP{X}$. 
        Since $X$ and $Y$ are non-adjacent, $u$ is not an inducing path. 
        Hence, there exists $j$ such that $V_j\notin\Anc{\{X,Y\}}$ and therefore, $V_j\notin\Anc{\mathbf{W}\cup\{X,Y\}}$. 
        Hence, $V_j$ blocks $u$.
        \item $i=0$: If $V_1\in\Pa{X}\cup N(X)$, then $V_1\in\mathbf{W}\setminusA\{X,Y\}$ is a non-collider on $u$ that blocks $u$. 
        Otherwise, $V_1\in\Ch{X}$. 
        Continuing the path $u$ from $V_1$, let $V_j$ be the first collider on $u$. 
        Note that such a collider exists as $X\notin\Anc{Y}$ and therefore, $u$ is not a directed path. 
        $V_j$ is a descendant of $X$ and therefore, $V_j\notin\Anc{X,Y}$.
        Hence, $V_j\notin\Anc{\mathbf{W}\cup\{X,Y\}}$ blocks $u$ as a collider.
        \item $1\leq i<m$: The edge between $V_i$ and $V_{i+1}$ is either $V_i\to V_{i+1}$, or $V_i\gets V_{i+1}$ (it cannot be undirected by definition of MAGs.) 
        Let $Z$ be the parent among these two vertices, and $T$ be the child, i.e., if $V_i\to V_{i+1}$, then $Z$ and $T$ denote $V_i$ and $V_{i+1}$, respectively. 
        Note that $Z\in\PaP{X}$. 
        If $Z\in\Anc{\{X,Y\}}$, then $Z\in\mathbf{W} \setminusA\{X,Y\}$ blocks $u$ as a non-collider. 
        Suppose otherwise that $Z\notin\Anc{\{X,Y\}}$. 
        Continuing the path $u$ from $Z$ towards the side of $T$, let $V_j$ be the first collider. 
        Such a collider exists as $Z\notin\Anc{\{X,Y\}}$. 
        $V_j$ is a descendant of $Z$, and therefore $V_j\notin\Anc{\{X,Y\}}$. Hence, $V_j\notin\Anc{\mathbf{W}\cup\{X,Y\}}$ blocks $u$ as a collider.
    \end{enumerate}
    In all of the above cases, $\mathbf{W}\setminusA\{X,Y\}$ blocks $u$, which completes the proof.
\end{proof}

\begin{lemma}\label{lem: inclusion}
    If $X\in\mathbf{V}$ is a removable vertex, then for any $Y,Z\in\Mb{X}{}$,
    \[Z\in\Mb{Y}{}\text{ and }Y\in\Mb{Z}{}.\]
    Moreover, there exists at least one collider path between $Y$ and $Z$ that passes through only the vertices in $\Mb{X}{}\cup\{X\}$.
\end{lemma}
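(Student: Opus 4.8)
The plan is to exploit the characterization of Markov boundaries stated earlier in the excerpt, namely that under faithfulness $\Mb{X}{}$ consists exactly of the vertices having a collider path to $X$ in the (sub)MAG, together with Proposition \ref{prp: removable-subgraph} which tells us that removing a removable $X$ does not change the induced MAG over the remaining vertices. So I would work entirely in the MAG $\mathcal{G}=\GV[\mathbf{V}\vert\mathbfcal{S}]$ and reason about collider paths.

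\textbf{Step 1 (concatenating collider paths through $X$).} Take $Y,Z\in\Mb{X}{}$ with $Y\neq Z$. By the Markov-boundary characterization there is a collider path $p_Y=(Y,\dots,X)$ and a collider path $p_Z=(X,\dots,Z)$. Concatenating them at $X$ gives a walk $q=(Y,\dots,X,\dots,Z)$ in which $X$ is a collider (both incident edges on $q$ have arrowheads at $X$, since $X$ is an endpoint of each of $p_Y,p_Z$ and the edge incident to an endpoint of a collider path has an arrowhead there — this uses that the path has length $\geq 1$ on each side, i.e. $Y,Z\neq X$ and $Y,Z$ are not themselves $X$), and every intermediate vertex of $q$ is a collider. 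The subtlety is that $q$ may not be a \emph{path}: $p_Y$ and $p_Z$ may share vertices other than $X$. I would handle this by a standard shortcutting argument: if a vertex $V$ appears on both $p_Y$ and $p_Z$, delete the loop between the two occurrences; one checks that the resulting shorter walk is still a collider walk (the vertices at the splice still have arrowheads on both sides because colliders only have arrowheads pointing in), and iterate until no repeated vertex remains. This yields an honest collider path between $Y$ and $Z$, all of whose vertices lie in $\Mb{X}{}\cup\{X\}$ (every intermediate vertex either was an intermediate vertex of $p_Y$ or $p_Z$, hence has a collider path to $X$ and so lies in $\Mb{X}{}$, or is $X$ itself). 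This establishes the ``moreover'' part.

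\textbf{Step 2 (membership in the Markov boundaries).} The collider path produced in Step 1 is, in particular, a collider path between $Y$ and $Z$ in $\mathcal{G}$, so by the Markov-boundary characterization $Z\in\Mb{Y}{\mathbf{V}}$ and $Y\in\Mb{Z}{\mathbf{V}}$. Here I have not even needed removability: removability of $X$ is what makes this statement \emph{useful} later (it guarantees the collider path survives the removal of $X$ in the sense of Proposition \ref{prp: removable-subgraph}, so the updated Markov boundaries behave as claimed in Section \ref{sec: Mb}), but the displayed inclusion itself follows purely from $Y,Z\in\Mb{X}{}$. If the intended statement additionally requires that the two updated sets agree after removing $X$, that is exactly Proposition \ref{prp: removable-subgraph} applied to $X$.

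\textbf{Main obstacle.} The only non-routine point is making the concatenation-and-shortcutting in Step 1 rigorous: one must verify that after deleting an internal loop, the two endpoints of the deleted segment are still colliders on the resulting path, i.e. that the ``incoming'' edges at a repeated vertex carry arrowheads on the retained sides. This is immediate once one notes that every non-endpoint vertex of a collider path has arrowheads on \emph{both} its incident path-edges, so any edge of $p_Y\cup p_Z$ that survives the shortcut already had an arrowhead at each of its endpoints that is internal to the new path. Everything else is a direct invocation of the Markov-boundary characterization cited right after the definition of Markov boundary and of Proposition \ref{prp: removable-subgraph}.
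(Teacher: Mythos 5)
There is a genuine gap, and it sits exactly where the lemma actually needs removability. Your Step 1 asserts that the edge incident to an endpoint of a collider path must carry an arrowhead at that endpoint; this is false. A collider path only constrains its \emph{non-endpoint} vertices, so a collider path from $Y$ to $X$ may perfectly well end with a tail at $X$ (e.g.\ $Y\to V_1\leftarrow X$, or the length-one path $X\to W_1$ showing that children of $X$ lie in $\Mb{X}{}$). Consequently, when you concatenate $p_Y$ and $p_Z$ at $X$, the vertex $X$ can be a \emph{non-collider} on the resulting walk, and no amount of shortcutting of repeated vertices repairs this: the walk simply is not a collider walk. This also refutes your Step 2 claim that removability is not needed. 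Take the MAG $Y\to V_1\leftarrow X\to W_1$ with no other edges: then $Y,W_1\in\Mb{X}{}$ (via the collider path $Y\to V_1\leftarrow X$ and the edge $X\to W_1$), yet $Y\notin\Mb{W_1}{}$ and there is no collider path between $Y$ and $W_1$ — indeed $Y\independent W_1\mid\{X,V_1\}$. Of course $X$ is not removable here ($V_1,W_1\in\Ch{X}$ are non-adjacent, violating Condition 1 of Theorem \ref{thm: graph-rep}), which is consistent with the lemma but shows your argument proves something false in general.

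The paper's proof takes the same concatenation as its starting point but then does the real work precisely in the case you dismissed: when $X$ is a non-collider at the junction, one of its path-neighbors is a child or neighbor of $X$, and the graphical characterization of removability (Theorem \ref{thm: graph-rep}) forces adjacencies ($V_i\in\Adj{W_1}$, then $V_{i-1}\in\Adj{W_1}$, and so on by induction, with orientation arguments ruling out (almost) directed cycles) that let one splice a genuine collider path between $Y$ and $Z$ through $\Mb{X}{}\cup\{X\}$, bypassing $X$ or re-entering it with the correct arrowheads. To fix your proposal you would need to add exactly this case analysis; the concatenation-plus-shortcutting alone cannot suffice, since the statement fails for non-removable $X$.
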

\begin{proof}
        Take two arbitrary vertices $Y,Z$ in $\Mb{X}{}$. 
        We will show that there exists a collider path between $Y$ and $Z$ that passes through only the vertices in $\Mb{X}{}\cup\{X\}$.
         
        Since $Y,Z\in\Mb{X}{}$, there exist collider paths $(Y,V_1,\dots,V_i,X)$ and $(X,W_1,\dots,W_j,Z)$, where $V_1,...,V_i,W_1,...,W_j\in\Mb{X}{}$. 
        Consider the path $(Y,V_1,\dots,V_i,X,W_1,\dots,W_j,Z)$. 
        If $X$ is a collider on this path, we are done. 
        Otherwise, without loss of generality, assume $W_1\in\Ch{X}\cup\N{X}$. Since $X$ is removable, $V_i\in \Adj{W_1}$. 
        We now consider the following two cases separately.
        \begin{enumerate}[leftmargin=*]
        \item $W_1 \in \Ch{X}$: If the edge between $V_i$ and $W_1$ is bidirected, then the path $(Y,V_1,\dots,V_i,W_1,\dots,W_j,Z)$ is a collider path. 
        Otherwise, again without loss of generality assume $V_i$ is a parent of $W_1$. 
        Note that a child of $X$ cannot be a parent of its spouse since this would create an almost directed cycle. 
        Now, since $X$ is removable, $V_{i-1}$ and $W_1$ are adjacent. 
        If the edge is bidirected, then $(Y,V_1,\dots,V_{i-1},W_1,\dots,W_j,Z)$ is a collider path. 
        Otherwise, we can continue the same argument as before by induction on $i$ and conclude that $Y$ is adjacent to $W_1$. 
        Since the structure is a MAG, $W_1 \notin \Pa{Y}$ and $W_1$ is a collider on $(Y,W_1,\dots,W_j,Z)$.
        Therefore, a collider path exists between $Y$ and $Z$ using only the vertices in $\Mb{X}{}\cup\{X\}$.
        
        \item $W_1\in\N{X}$: In this case, $W_1=Z$, since $W_1$ is not a collider. 
        Also, since $X$ has a neighbor, it cannot have a parent or a spouse. 
        As a result, $V_i\in\Ch{X}\cup\N{X}$. 
        If $V_i\in\N{X}$, then by the same argument, $V_i=Y$ and we already know that $Y$ and $Z$ are adjacent, which is the desired collider path. 
        Otherwise, $Z\in\Pa{V_i}$. 
        Now, the path $(Y,V_1,\dots,V_i,Z)$ is the desired path, which completes the proof.
        \end{enumerate}
    \end{proof}
\begin{lemma}\label{lem: removable to cond1,2}
    Suppose $\mathbf{V}\subseteq \mathbfcal{O}$ and let $\mathcal{G} = \GV[\mathbf{V}|\mathbfcal{S}]$.
    If $X\in\mathbf{V}$ is removable in $\mathcal{G}$, then for any $Y,Z\in\mathbf{V}\setminusA\{X\}$ and $\mathbf{W}\subseteq\mathbf{V}\setminusA\{X,Y,Z\}$,
    \[
        \msep{Y}{Z}{\mathbf{W}\cup \{X\}}{\mathcal{G}}
        \Longrightarrow
        \msep{Y}{Z}{\mathbf{W}}{\mathcal{G}}.
    \]
\end{lemma}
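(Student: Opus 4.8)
The plan is to prove the contrapositive: assuming $\notmsep{Y}{Z}{\mathbf{W}}{\mathcal{G}}$, I will produce an m-connecting path between $Y$ and $Z$ relative to $\mathbf{W}\cup\{X\}$ in $\mathcal{G}$. The guiding observation is that although enlarging a conditioning set by $X$ has a mixed effect in general (it blocks a path wherever $X$ sits on it as a non-collider, but it may unblock colliders that are ancestors of $X$), removability lets us select a witnessing path that avoids $X$ entirely, after which adding $X$ to the conditioning set can only help. So the proof will not need the graphical characterization of Theorem \ref{thm: graph-rep} nor the chordality hypothesis of Theorem \ref{thm: test removability}; the plain Definition \ref{def: removable} suffices.

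Concretely, let $\mathcal{H}$ be the induced subgraph of $\mathcal{G}$ over $\mathbf{V}\setminusA\{X\}$. Since $X$ is removable in $\mathcal{G}$ and $\mathbf{W}\subseteq\mathbf{V}\setminusA\{X,Y,Z\}$, Definition \ref{def: removable} applied with $\mathbf{W}$ in the role of the separating set yields $\notmsep{Y}{Z}{\mathbf{W}}{\mathcal{H}}$, so there is an m-connecting path $u$ between $Y$ and $Z$ relative to $\mathbf{W}$ in $\mathcal{H}$. Because $\mathcal{H}$ is an induced subgraph of $\mathcal{G}$, the path $u$ is also a path in $\mathcal{G}$, it does not pass through $X$, and each interior vertex of $u$ has the same collider/non-collider status in $\mathcal{G}$ as in $\mathcal{H}$ (the edges along $u$, together with their marks, are inherited unchanged).

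Next I would check that $u$ is m-connecting relative to $\mathbf{W}$ in $\mathcal{G}$ as well: no non-collider of $u$ lies in $\mathbf{W}$ (unchanged), and each collider $C$ of $u$ satisfies $C\in\Anc{\{Y,Z\}\cup\mathbf{W}}$ as computed in $\mathcal{H}$, hence also as computed in $\mathcal{G}$, since every directed path in the induced subgraph $\mathcal{H}$ is a directed path in $\mathcal{G}$ and therefore $\mathcal{H}$-ancestors are $\mathcal{G}$-ancestors. Finally, since $X\notin u$ and $X\notin\mathbf{W}$, passing from the conditioning set $\mathbf{W}$ to $\mathbf{W}\cup\{X\}$ leaves $u$ m-connecting in $\mathcal{G}$: no non-collider of $u$ becomes a member of the conditioning set (none of them is $X$), and the collider condition only gets easier because $\Anc{\{Y,Z\}\cup\mathbf{W}}\subseteq\Anc{\{Y,Z\}\cup\mathbf{W}\cup\{X\}}$. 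Thus $\notmsep{Y}{Z}{\mathbf{W}\cup\{X\}}{\mathcal{G}}$, which is exactly the contrapositive of the claim.

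I do not expect a substantive obstacle; the lemma is essentially a direct unwinding of the definition of removability. The two points that should be stated with care are (i) that invoking Definition \ref{def: removable} with $\mathbf{W}$ as the separating set is legitimate precisely because $X\notin\{Y,Z\}\cup\mathbf{W}$, and (ii) the monotonicity of the ancestor operator under taking an induced subgraph, which is what transports the witnessing path from $\mathcal{H}$ to $\mathcal{G}$ without destroying its m-connecting property.
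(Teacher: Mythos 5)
Your proof is correct and is essentially the paper's argument written in contrapositive form: both route the claim through the induced subgraph $\mathcal{H}$ over $\mathbf{V}\setminusA\{X\}$, use the same direction of Definition \ref{def: removable} to transfer the (non-)separation of $Y,Z$ given $\mathbf{W}$ between $\mathcal{G}$ and $\mathcal{H}$, and exploit that a witnessing path in $\mathcal{H}$ avoids $X$ together with the easy monotonicity of ancestors/descendants under taking induced subgraphs (which is all the paper extracts from its Lemma \ref{lem: preserveDescendants} here). No gap; the two writeups differ only in the direction of the implication being unwound.
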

\begin{proof}
    Suppose $\msep{Y}{Z}{\mathbf{W}\cup \{X\}}{\mathcal{G}}$. 
    We need to show that $\msep{Y}{Z}{\mathbf{W}}{\mathcal{G}}$.
    To this end, we first show that $\msep{Y}{Z}{\mathbf{W}}{\mathcal{H}}$, where $\mathcal{H}$ is the induced subgraph of $\mathcal{G}$ over $\mathbf{V}\setminusA\{X\}$.
    
    Note that all the paths between $Y$ and $Z$ are blocked by $\mathbf{W}\cup\{X\}$ in $\mathcal{G}$.
    Now, take an arbitrary path $u$ between $Y$ and $Z$ in $\mathcal{H}$.
    This path also exists in $\mathcal{G}$, and $X$ is not on the path. 
    We claim $\mathbf{W}$ blocks it in $\mathcal{H}$.
    Suppose $u$ is blocked by a vertex $T$ in $\mathcal{G}$ (note that $T\neq X$.) 
    If $T$ is a non-collider on $u$, then it also blocks $u$ in $\mathcal{H}$. 
    If it is a collider with no descendants in $\mathbf{W}\cup\{X\}$, then lemma \ref{lem: preserveDescendants} implies that $\text{De}_{\mathcal{H}}(T) \cap \mathbf{W}=\varnothing$, and $T$ blocks $u$ in $\mathcal{H}$. 
    Therefore, $\msep{Y}{Z}{\mathbf{W}}{\mathcal{H}}$.
    
    Finally, since $X$ is removable in $\mathcal{G}$ and $\msep{Y}{Z}{\mathbf{W}}{\mathcal{H}}$, Definition \ref{def: removable} implies that $\msep{Y}{Z}{\mathbf{W}}{\mathcal{G}}$.
\end{proof}
\begin{lemma}\label{lem: cond1 not hold}
    Suppose $(X,V_1,...,V_m,Y)$ is a collider path where $\{X,V_1,...,V_m\}\in\Pa{Z}$ for a vertex $Z$. If $\msep{Y}{Z}{\mathbf{W}}{}$ for a set $\mathbf{W}$, then $X\in\mathbf{W}$.
\end{lemma}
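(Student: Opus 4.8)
The plan is to exhibit a single concrete path between $Y$ and $Z$ on which $X$ is the only internal vertex capable of blocking it, so that blocking $Y$ from $Z$ by $\mathbf{W}$ forces $X\in\mathbf{W}$. Since $\{X,V_1,\dots,V_m\}\subseteq\Pa{Z}$, in particular $X\to Z$, so I would take $u$ to be the path obtained by traversing the given collider path in reverse and appending this directed edge: $u=(Y,V_m,V_{m-1},\dots,V_1,X,Z)$. The first thing to check is that $u$ is a genuine (simple) path: the vertices $X,V_1,\dots,V_m,Y$ are pairwise distinct because they lie on a path, $Z\notin\{X,V_1,\dots,V_m\}$ because each of these is a parent of $Z$ and a MAG has no self-loops, and $Y\neq Z$ because $\mathbf{W}$ m-separates $Y$ and $Z$ (so in particular they are distinct and non-adjacent). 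The same construction also covers the degenerate case $m=0$, where $u=(Y,X,Z)$.

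Next I would classify the internal vertices of $u$. For each $i\in\{1,\dots,m\}$, the vertex $V_i$ is a collider on the original collider path $(X,V_1,\dots,V_m,Y)$, which means both edges incident to $V_i$ on that path carry an arrowhead at $V_i$; these are exactly the two edges incident to $V_i$ on $u$, so $V_i$ is still a collider on $u$. Moreover $V_i\in\Pa{Z}\subseteq\Anc{Z}\subseteq\Anc{\{Y,Z\}\cup\mathbf{W}}$, so by the definition of an m-connecting path no $V_i$ can be the vertex that blocks $u$ relative to $\mathbf{W}$. On the other hand, the edge between $X$ and $Z$ on $u$ is $X\to Z$, which has a tail at $X$, so $X$ is a non-collider on $u$.

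Finally I would conclude: since $\msep{Y}{Z}{\mathbf{W}}{}$, the path $u$ is not m-connecting relative to $\mathbf{W}$, hence it is blocked by some internal vertex; having ruled out all of $V_1,\dots,V_m$, the blocking vertex must be $X$; and since $X$ is a non-collider on $u$, the only way for $X$ to block $u$ is $X\in\mathbf{W}$, as claimed. I do not expect a real obstacle here — the whole argument is a one-path construction — the only points needing care are verifying that $u$ is a simple path (via the no-self-loop property of MAGs and the non-adjacency of $Y$ and $Z$) and observing that reversing a collider path preserves the collider status of its internal vertices.
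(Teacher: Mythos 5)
Your proof is correct and follows essentially the same route as the paper: both consider the single path formed by joining the collider path to the edge $X\to Z$ (you traverse it from $Y$ to $Z$, the paper from $Z$ to $Y$), observe that each $V_i$ is a collider lying in $\Anc{Z}$ and hence cannot block it, and conclude that the non-collider $X$ must belong to $\mathbf{W}$. Your extra checks (simplicity of the path, the $m=0$ case) are fine but not substantively different from the paper's argument.
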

\begin{proof}
     Since $Y$ and $Z$ are m-separated by $\mathbf{W}$, $\mathbf{W}$ blocks all the paths between $Y$ and $Z$. Now consider the path $u=(Z,X,V_1,...,V_m,Y)$ which must be blocked by $\mathbf{W}$. 
     $\{V_1,...,V_m\}\subseteq\Anc{Z}$ are colliders on $u$. 
     As a result, if $X\notin\mathbf{W}$, then $u$ is m-connecting relative to $\mathbf{W}$, which is a contradiction. 
\end{proof}

\begin{lemma} \label{lem: path block}
    Suppose $\mathcal{G}$ is a MAG with the vertex set $\mathbf{V}$, and $X\in\mathbf{V}$ is removable in $\mathcal{G}$. 
    Let $\mathcal{H}$ denote the induced subgraph of $\mathcal{G}$ over $\mathbf{V}\setminusA\{X\}$. 
    For a path $u$ in $\mathcal{H}$ and a set $\mathbf{W}\subseteq\mathbf{V}\setminusA\{X\}$, 
    \begin{equation} \label{eq: msep iff}
        u \text{ is m-connecting w.r.t. } \mathbf{W} \text{ in } \mathcal{M} 
        \iff
        u \text{ is m-connecting w.r.t. } \mathbf{W} \text{ in } \mathcal{H}.
    \end{equation}
\end{lemma}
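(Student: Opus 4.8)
The plan is to reduce the claim to Lemma~\ref{lem: preserveDescendants}. First I would record an elementary observation: since $\mathcal{H}$ is the \emph{induced} subgraph of $\mathcal{G}$ over $\mathbf{V}\setminusA\{X\}$ and the path $u$ lies entirely in $\mathcal{H}$, the vertex $X$ does not appear on $u$ (in particular the two endpoints $A,B$ of $u$ are distinct from $X$), and every edge of $u$ — together with both of its marks — is literally the same edge in $\mathcal{G}$ and in $\mathcal{H}$. Consequently the set of colliders on $u$ and the set of non-colliders on $u$ do not depend on whether we regard $u$ as a path in $\mathcal{G}$ or in $\mathcal{H}$.

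Next I would split the definition of ``m-connecting relative to $\mathbf{W}$'' into its two clauses. Clause (i), that no non-collider of $u$ belongs to $\mathbf{W}$, is a statement purely about $u$ and $\mathbf{W}$, hence holds in $\mathcal{G}$ if and only if it holds in $\mathcal{H}$ by the previous paragraph. So it remains to handle clause (ii): that every collider $C$ on $u$ belongs to $\textit{Anc}(\{A,B\}\cup\mathbf{W})$. Write $\mathbf{T}:=\{A,B\}\cup\mathbf{W}$; since $A,B\neq X$ and $\mathbf{W}\subseteq\mathbf{V}\setminusA\{X\}$, we have $X\notin\mathbf{T}$, and also $C\neq X$. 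The key step is to invoke Lemma~\ref{lem: preserveDescendants} at $C$, which gives $\textit{De}_{\mathcal{G}}(C)\setminusA\{X\}=\textit{De}_{\mathcal{H}}(C)$; because $X\notin\mathbf{T}$, a vertex $D\in\mathbf{T}$ satisfies $D\in\textit{De}_{\mathcal{G}}(C)$ iff $D\in\textit{De}_{\mathcal{H}}(C)$, i.e.\ $C\in\textit{Anc}_{\mathcal{G}}(\mathbf{T})$ iff $C\in\textit{Anc}_{\mathcal{H}}(\mathbf{T})$. Applying this to each collider of $u$ shows clause (ii) holds in $\mathcal{G}$ iff it holds in $\mathcal{H}$, and together with clause (i) this is exactly the equivalence \eqref{eq: msep iff}.

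The one point requiring care — and the only place removability of $X$ is actually used — is verifying the hypothesis of Lemma~\ref{lem: preserveDescendants}, namely that every $Y\in\Pa{X}$ is a parent of every $Z\in\Ch{X}$. I would argue this directly from Definition~\ref{def: removable}: for such $Y,Z$ the triple $(Y,X,Z)$ is m-connecting relative to every $\mathbf{W}'\subseteq\mathbf{V}\setminusA\{X,Y,Z\}$ in $\mathcal{G}$ (as $X$ is a non-collider there and $X\notin\mathbf{W}'$), so $Y,Z$ have no separating set in $\mathcal{G}$; by removability they have none in $\mathcal{H}$ either, and maximality of $\mathcal{H}$ forces $Y$ and $Z$ to be adjacent, hence adjacent in $\mathcal{G}$; finally the ancestral/maximality constraints of a MAG rule out $Z\to Y$, $Y\leftrightarrow Z$, and $Y-Z$ (each would create a directed cycle, an almost-directed cycle via $Y\to X\to Z$, or an undirected edge whose endpoint $Z$ already has the parent $X$), leaving $Y\to Z$. (Equivalently, this is condition~1 of Theorem~\ref{thm: graph-rep}.) With that in hand, everything above is bookkeeping, so I do not expect a substantive obstacle.
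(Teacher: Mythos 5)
Your proof is correct and follows essentially the same route as the paper's: clause (i) of m-connection is unaffected because the edges and marks of $u$ are identical in $\mathcal{G}$ and $\mathcal{H}$, and clause (ii) is handled collider-by-collider via Lemma~\ref{lem: preserveDescendants}, exactly as in the paper's case analysis on a potential blocking vertex. The only difference is that you explicitly verify the hypothesis of Lemma~\ref{lem: preserveDescendants} (every parent of $X$ is a parent of every child of $X$) directly from Definition~\ref{def: removable} and the MAG axioms rather than invoking it implicitly as the paper does; this is a useful added detail, not a different approach.
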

\begin{proof}
    The proof of both sides of Equation \eqref{eq: msep iff} are the same. 
    Let $\mathcal{G}_1$ be $\mathcal{M}$ or $\mathcal{H}$, and $\mathcal{G}_2$ be the other one.
    Suppose $\mathbf{W}\subseteq\mathbf{V}\setminusA\{X\}$ and let $u=(Y,V_1,\dots,V_m,Z)$ be a path in $\mathcal{H}$ such that $u$ is m-connecting relative to $\mathbf{W}$ in $\mathcal{G}_1$.
    We need to show that $u$ is m-connecting relative to $\mathbf{W}$ in $\mathcal{G}_2$.
    Let $T$ be an arbitrary non-endpoint vertex on $u$.
    We need to show that $T$ does not block $u$ in $\mathcal{G}_2$.
    There are two possibilities. 
    \begin{enumerate}[leftmargin=*]
        \item $T$ is non-collider in $u$: 
            Since $T$ does not block $u$ in $\mathcal{G}_1$, $T\notin \mathbf{W}$.
            Hence, $T$ does not block $u$ in $\mathcal{G}_2$.
        \item $T$ is a collider on $u$:
            Since $T$ does not block $u$ in $\mathcal{G}_1$, $\textit{De}_{\mathcal{G}_1}(T) \cap \big(\mathbf{W}\cup \{Y,Z\} \big) \neq \varnothing$. 
            Hence, Lemma \ref{lem: preserveDescendants} implies that $\textit{De}_{\mathcal{G}_2}(T) \cap \big(\mathbf{W}\cup \{Y,Z\} \big) \neq \varnothing$ and $T$ does not block $u$ in $\mathcal{G}_2$.
    \end{enumerate}
    In both cases $T$ does not block $u$ in $\mathcal{G}_2$ and therefore, $u$ is m-connecting relative to $\mathbf{W}$ in $\mathcal{G}_2$.
\end{proof}
\begin{lemma}\label{lem: X non-collider V parent}
    Suppose $\mathcal{G}$ is a MAG and $u=(Y,...,V_0,V_1,X,V_2,...,Z)$ is a path in $\mathcal{G}$, where $X$ is a non-collider on $u$ and $V_1\in\Adj{V_2}$. 
    Define $\Tilde{u}=(Y,...,V_1,V_2,...,Z)$, which is a path in $\mathcal{G}$. If $V_1$ is a collider on $u$ and a non-collider on $\Tilde{u}$, or a non-collider on $u$ and a collider on $\Tilde{u}$, then $X,V_1\in\Pa{V_2}$.
\end{lemma}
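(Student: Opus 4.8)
The plan is to reduce the claim to a finite case analysis on the edge marks incident to $V_1$ and $X$. First I would record that $V_1$, $X$, $V_2$ are pairwise adjacent in $\mathcal{G}$: $V_1$–$X$ and $X$–$V_2$ are edges of $u$, and $V_1\in\Adj{V_2}$ is assumed. The edge of $u$ incident to $V_1$ other than $\{V_1,X\}$, namely $\{V_0,V_1\}$, is common to $u$ and $\tilde{u}$, so the collider status of $V_1$ changes between the two paths only if $\{V_0,V_1\}$ has an arrowhead at $V_1$ and exactly one of $\{V_1,X\}$, $\{V_1,V_2\}$ has an arrowhead at $V_1$. This yields two cases: (i) $\{V_1,X\}$ has an arrowhead at $V_1$ and $\{V_1,V_2\}$ has a tail at $V_1$ (so $V_1$ is a collider on $u$ and a non-collider on $\tilde{u}$); and (ii) the reverse.

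In Case (i), I would first get $V_1\to V_2$: the arrowhead at $V_1$ on $\{V_1,X\}$ makes $X$ a parent or spouse of $V_1$, so by ancestrality $V_1$ is not an endpoint of an undirected edge, and hence the tail-at-$V_1$ edge $\{V_1,V_2\}$ must be $V_1\to V_2$. To get $X\to V_2$ I would use that $X$ is a non-collider on $u$ together with the mark at $X$ on $\{V_1,X\}$: if that edge is $X\leftrightarrow V_1$, the non-collider condition forces a tail at $X$ on $\{X,V_2\}$, and $X$ then has a spouse so that edge is not undirected, giving $X\to V_2$; if instead $X\to V_1$, then $X\to V_1\to V_2$ shows $X\in\Anc{V_2}$, and since $X$ and $V_2$ are adjacent and $V_2$ already has the parent $V_1$, the only edge between $X$ and $V_2$ consistent with ancestrality (no directed or almost-directed cycle) is $X\to V_2$. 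Either way, $X,V_1\in\Pa{V_2}$.

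In Case (ii) I would argue that the hypotheses are contradictory, so the implication holds vacuously. The arrowhead at $V_1$ on $\{V_1,V_2\}$ gives $V_1$ a parent or spouse, so $\{V_1,X\}$ is not undirected; with its tail at $V_1$ this forces $V_1\to X$. Then $X$ has the parent $V_1$ and an arrowhead at $X$ on $\{V_1,X\}$, so the non-collider condition at $X$ forces a tail at $X$ on $\{X,V_2\}$ and (the undirected case being excluded since $X$ has a parent) $X\to V_2$. Hence $V_1\to X\to V_2$, i.e.\ $V_1\in\Anc{V_2}$; combined with the arrowhead at $V_1$ on $\{V_1,V_2\}$ this is either a directed cycle ($V_2\to V_1$) or an almost-directed cycle ($V_2\leftrightarrow V_1$), contradicting that $\mathcal{G}$ is ancestral.

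The argument contains no deep idea; the main obstacle is organizational — correctly enumerating the flip scenarios, tracking which end of which edge carries an arrowhead in each sub-case, and, above all, noticing that the ``non-collider on $u$ / collider on $\tilde{u}$'' scenario is in fact impossible, so that the target conclusion need only be established in the other scenario.
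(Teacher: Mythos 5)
Your proposal is correct and is essentially the paper's own argument: the same preliminary observation that the common edge $\{V_0,V_1\}$ must have an arrowhead at $V_1$, followed by the same mark-chasing case analysis at $V_1$ and $X$ using the ancestral properties (no undirected edge at a vertex with a parent or spouse, no directed or almost-directed cycles). The only cosmetic difference is that you dispose of the ``non-collider on $u$, collider on $\tilde{u}$'' configuration as vacuously impossible, whereas the paper derives $V_1\to V_2$ and $X\to V_2$ directly in the corresponding case; both handlings are valid.
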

\begin{proof}
    First note that the edge between $V_0$ and $V_1$ must have an arrowhead at $V_1$, since otherwise $V_1$ cannot be a collider on any of the paths.
    Now, two possibilities may occur.
    \begin{itemize}[leftmargin=*]
        \item The edge between $V_1$ and $X$ has a tail at $V_1$:
            Since $V_1$ has an arrowhead, it does not have any neighbors, i.e., $X\notin N(V_1)$. 
            Hence, $V_1\in\Pa{X}$. 
            As $X$ is not a collider on $u$, $X\in\Pa{V_2}$, i.e., $V_1\to X\to V_2$.
            Now, the edge between $V_1$ and $V_2$ can only be $V_1\to V_2$, as otherwise, an (almost) directed cycle is formed on $V_1,X,V_2$.
        \item The edge between $V_1$ and $X$ has an arrowhead at $V_1$:
            Since $V_1$ is a collider on $u$, it is a non-collider on $\Tilde{u}$.
            Also, $V_1$ does not have any neighbors by definition of MAGs, which implies that $V_1\in\Pa{V_2}$.
            Consider the edge between $X$ and $V_2$. 
            If this edge has an arrowhead at $X$, then $X\in\Pa{V_1}$ as $X$ is a non-collider on $u$. 
            Now, the triple $X,V_1,V_2$ forms an (almost) directed cycle, which is a contradiction.
            As a result, the edge between $X$ and $V_2$ has a tail at $X$.
            Note that $V_2$ has no neighbors because $V_1\to V_2$.
            This implies that $X\in\Pa{V_2}$, which competes the proof.
    \end{itemize}
\end{proof}
\subsection{Main Results}\label{sec: apd main results}
\begin{customprp} {\ref{prp: removable-subgraph}}
Suppose $\mathbf{V}\subseteq \mathbfcal{O}$ and $X\in \mathbf{V}$. $\GV[\mathbf{V}\setminusA \{X\} \vert \mathbfcal{S}]$ is equal to the induced subgraph of $\GV[\mathbf{V} \vert \mathbfcal{S}]$ over $\mathbf{V}\setminusA \{X\}$ if and only if $X$ is removable in $\GV[\mathbf{V} \vert \mathbfcal{S}]$.
\end{customprp}
\begin{proof}
    Denote $\GV[\mathbf{V} \vert \mathbfcal{S}]$, $\GV[\mathbf{V}\setminusA \{X\} \vert \mathbfcal{S}]$ and the induced subgraph of $\GV[\mathbf{V} \vert \mathbfcal{S}]$ over $\mathbf{V}\setminusA \{X\}$ by $\mathcal{G}$, $\mathcal{M}$ and $\mathcal{H}$, respectively.
    
    \textbf{only if:} Suppose $\mathcal{M}$ is equal to $\mathcal{H}$. Let $Y$ and $W$ be arbitrary vertices in $\mathbf{V}\setminusA\{X\}$ and $\mathbf{Z}$ be an arbitrary subset of $\mathbf{V}\setminusA\{X\}$. It suffices to show that Equation \eqref{eq: d-sepEquivalence} holds. 
    Since m-separation and conditional independence are equivalent in latent projections $\mathcal{G}$ and $\mathcal{M}$, 
    \[\msep{Y}{W}{\mathbf{Z}}{\mathcal{G}}\Leftrightarrow\CI{Y}{W}{\mathbf{Z}}{}\Leftrightarrow\msep{Y}{W}{\mathbf{Z}}{\mathcal{M}}\Leftrightarrow\msep{Y}{W}{\mathbf{Z}}{\mathcal{H}},\]
    where the last equivalence is due to the fact that $\mathcal{M}$ and $\mathcal{H}$ are equal.
    
    \textbf{if:} Suppose $X$ is removable. We first prove that the skeleton of $\mathcal{M}$ and $\mathcal{H}$ are equal. With similar arguments to the above case, CI relations and m-separation in $\mathcal{G}$ and $\mathcal{M}$ are equivalent. Therefore, 
    \[\msep{Y}{W}{\mathbf{Z}}{\mathcal{M}}\Leftrightarrow\CI{Y}{W}{\mathbf{Z}}{}\Leftrightarrow\msep{Y}{W}{\mathbf{Z}}{\mathcal{G}}\Leftrightarrow\msep{Y}{W}{\mathbf{Z}}{\mathcal{H}},\]
    where the last equivalence follows from Equation \eqref{eq: d-sepEquivalence}. Since $\mathcal{M}$ and $\mathcal{H}$ impose the same set of m-separations, that is they are Markov equivalent, they must have the same skeleton. Now for the edge marks, note that the edge marks of $\mathcal{H}$ are those of $\mathcal{G}$, as $\mathcal{H}$ is an induced subgraph of $\mathcal{G}$. Furthermore, edges in $\mathcal{G}$ and $\mathcal{M}$ are oriented by the same rules of Definition \ref{def: DAG to MAG} as they are the projections of the same DAG $\GV$. Therefore, both the skeleton and the edge marks of $\mathcal{M}$ and $\mathcal{H}$ are identical, which completes the proof.
\end{proof}

\begin{customthm}{\ref{thm: test removability}}
    Suppose the edge-induced subgraph of $\mathcal{M}$ over the undirected edges (i.e., the edges due to selection bias) is chordal. 
    Let $\mathcal{G} = \GV[\mathbf{V}|\mathbfcal{S}]$ for some $\mathbf{V}\subseteq \mathbfcal{O}$.
    $X\in\mathbf{V}$ is removable in $\mathcal{G}$ if and only if for every $Y\in \Adj{X}$ and $Z\in \Mb{X}{\mathbf{V}}$, at least one of the following holds.
    \begin{enumerate}
        \item[] \textbf{Condition 1:\:}
            $\exists \mathbf{W}\subseteq \Mb{X}{\mathbf{V}} \setminusA \{Y,Z\}\!:\: Y\independent Z\vert{\mathbf{W}}.$
        \item[] \textbf{Condition 2:\:}
            $\forall \mathbf{W}\subseteq \Mb{X}{\mathbf{V}} \setminusA \{Y,Z\}\!:\: Y\notindependent Z\vert{\mathbf{W}\cup \{X\}}.$
    \end{enumerate}
    Furthermore, the set of removable vertices in $\mathcal{G}$ is non-empty.
\end{customthm}
\begin{proof}
    We first prove the equivalence of removability and the two conditions.

    \textbf{only if:} Suppose $X$ is removable. 
    It suffices to show that if Condition 2 does not hold, then condition 1 holds. 
    Let $\mathbf{W_1}\subseteq\Mb{X}{\mathbf{V}}\setminusA\{Y,Z\}$ be such that $Y\independent Z\vert{\mathbf{W_1}\cup \{X\}}$. 
    Since m-separation is equivalent to conditional independence, $\msep{Y}{Z}{\mathbf{W_1}\cup \{X\}}{\mathcal{G}}$. 
    Now from lemma \ref{lem: removable to cond1,2}, $\msep{Y}{Z}{\mathbf{W_1}}{\mathcal{G}}$, which implies $\CI{Y}{Z}{\mathbf{W_1}}{}$, that is, Condition 1 holds.

    \textbf{if:} We show that the graphical representation of Theorem \ref{thm: graph-rep} is satisfied.
    To this end, we show $Y$ and $Z$ are adjacent in all of the following cases:
    \begin{enumerate}
        \item $u=(X,V_1,...,V_m,Y)$ is a collider path such that $\{X,V_1,...,V_m\}\subseteq \Pa{Z}$: 
        By definition of $\PaP{\cdot}$, $\PaP{Z}\subseteq\Mb{X}{\mathbf{V}}\cup\{X\}$.
        Lemma \ref{lem: parent-plus} indicates that \[\mathbf{W_1}=(\PaP{Z}\cap\Anc{\{Z,Y\}}\setminusA\{Z,Y\})\subseteq\Mb{X}{\mathbf{V}}\cup\{X\}\]
        m-separates $Y$ and $Z$. 
        Note that $\N{Z}=\varnothing$ since $Z$ has at least one parent.
        Since conditional independence is equivalent to m-separation, 
        \[\CI{Y}{Z}{\mathbf{W_1}}{},\]
        that is, Condition 2 does not hold. 
        If $Y$ and $Z$ are m-separated by some set $\mathbf{W_1}$, from Lemma \ref{lem: cond1 not hold}, $X\in\mathbf{W_1}$. 
        As a result, Condition 1 cannot hold for any $\mathbf{W}\subseteq\Mb{X}{\mathbf{V}}$ as these sets do not contain $X$, which is a contradiction.
        This proves that $Y$ and $Z$ are adjacent. 
        \item
        $Y\in\Adj{X}$ and $Z\in\Ch{X}$:
        The proof in this case is exactly the same as the previous one.
        \item
        $Z\in\N{X}$ and $Y\in\Adj{X}$:
        Since $X$ has a neighbor, by definition of MAG, $Y$ is either a child or a neighbor of $X$.
        If $Y\in\Ch{X}$, this case reduces to case 2 with $Y$ and $Z$ interchanged.
        So we only consider the case where $Y\in\N{X}$.
        Considering the path $(Y,X,Z)$, no set $\mathbf{W}$ can m-separate $Y$ and $Z$ if $X\notin\mathbf{W}$, i.e., Condition 1 does not hold. 
        We claim if $Y$ and $Z$ are not adjacent, Condition 2 does not hold either, which is a contradiction.
        To prove this, take $\mathbf{W}=\{X\}\cup N(X)\setminusA\{Y,Z\}$. 
        It is enough to show that $\msep{Y}{Z}{\mathbf{W}}{\mathcal{G}}$, i.e., $\mathbf{W}$ blocks all the paths between $Y$ and $Z$.
        Let $u$ be an arbitrary path of length at least 2 between $Y$ and $Z$.
        If $u$ contains a directed or bidirected edge, it also contains a collider, since $Y$ and $Z$ do not have any incoming edges incident to them and therefore no ancestors. 
        This collider blocks the path as it does not have any descendants in $\mathbf{W}$ (note that the vertices in $\mathbf{W}$ have at least one neighbor, and therefore by definition of MAG, they do not have any ancestors.) 
        Otherwise, $u$ is a path with only undirected edges.
        If $X$ is on $u$, $X$ itself blocks this path. 
        Otherwise, consider the cycle formed by adding the path $Y-X-Z$ to $u$. 
        Since the edge-induced subgraph of $\mathcal{M}=\GV[\mathbfcal{O}\vert \mathbfcal{S}]$ over its undirected edges is chordal, if $Y$ and $Z$ are not adjacent, there exists a chord which connects $X$ to a non-endpoint vertex on $u$.
        As a result, at least one of the neighbors of $X$ appears on $u$, and therefore blocks $u$ as a non-collider, as it belongs to $\mathbf{W}$.
    \end{enumerate}
    
    For a proof of the second part of the theorem, i.e., the set of removable vertices is non-empty, we refer the reader to Lemma \ref{lem: rem exists} in Appendix \ref{sec: apd chordal}.
\end{proof}
\begin{customthm}{\ref{thm: sound and complete}}
    Suppose the distribution $\PV$ over $\V = \mathbfcal{O}\cup\mathbfcal{L}\cup\mathbfcal{S}$ is faithful to the DAG $\GV$. If the conditional independence relations among all variables in $\mathbfcal{O}$ given $\mathbfcal{S}$ is provided to L-MARVEL, the output of L-MARVEL is the PAG representing the Markov equivalence class of $\GV[\mathbfcal{O}\vert\mathbfcal{S}]$.
\end{customthm}
\begin{proof}
    In order to prove this theorem, it is enough to show that the information stored in $\A$, i.e., the set of adjacencies and the separating sets for non-adjacent variables, is correct.
    L-MARVEL identifies that two variables are not adjacent, only if it finds a separating set for them.
    In this case, L-MARVEL adds that separating set to $\A$.
    Hence, all the separating sets found in $\A$ are correct, and the non-adjacent variables in $\A$ are non-adjacent in $\mathcal{M}$. 
    Note that even in the case that two variables are excluded from each other's Markov boundary, this is due to a found separating set for these two variables.
    It is left to show that L-MARVEL correctly finds all the adjacent variables in $\mathcal{M}$.
    
    Let $\mathcal{H}_\mathbf{V}$ denote the induced subgraph of $\mathcal{M}$ over $\mathbf{V}\subseteq \mathbfcal{O}$.
    We claim every time that L-MARVEL is called over a subset $\mathbf{V}\subseteq \mathbfcal{O}$ during the execution of the algorithm, $\mathcal{H}_\mathbf{V}$ is equal to $\GV[\mathbf{V}|\mathbfcal{S}]$.
    For the first time, we call L-MARVEL over $\mathbfcal{O}$ and the claim holds.
    Now, assume $\mathcal{H}_\mathbf{V} = \GV[\mathbf{V}|\mathbfcal{S}]$ in a recursion. 
    We need to show that our claim holds for the next recursion.
    First, note that Equation \eqref{eq: msep iff dsep} implies that $\mathcal{H}_\mathbf{V}$ satisfies faithfulness with respect to $P_{\mathbf{V}|\mathbfcal{S}}$. Theorem \ref{thm: test removability} implies that when the if condition in line 9 holds for the first $i=i^*$, then $X_{i^*}$ is removable in $\mathcal{H}_\mathbf{V}$. Note that by Lemma \ref{lem: rem exists}, there always exists a variable that satisfies the if condition in line 9.
    Hence, Proposition \ref{prp: removable-subgraph} implies that in the next recursion, $\mathcal{H}_{\mathbf{V}\setminus X_{i^*}} = \GV[\mathbf{V} \setminus X_{i^*} | \mathbfcal{S}]$, which proves our claim.
    
    So far we have shown that in each recursion, $\mathcal{H}_\mathbf{V} = \GV[\mathbf{V}|\mathbfcal{S}]$ and $\mathcal{H}_\mathbf{V}$ satisfies faithfulness with respect to $P_{\mathbf{V}|\mathbfcal{S}}$.
    Hence, Function \textbf{FindAdjacent} and $\textbf{UpdateMb}$ correctly learn the adjacent variables and update the Markov boundaries, respectively.
    Hence, L-MARVEL manages to terminate after $n$ recursion and correctly add all the edges of $\mathcal{M}$ to $\A$.
\end{proof}
\begin{customprp}{\ref{prp: mb bound}}
    If $X$ is a removable variable in MAG $\mathcal{H}$ with vertices $\mathbf{V}$, then $\vert\Mb{X}{\mathbf{V}}\vert\leq\deltaplus{\mathcal{H}}$.
\end{customprp}
\begin{proof}
    Consider the set of variables $\Mb{X}{}\cup\{X\}$. Since MAGs are acyclic, there exists a vertex in this set such that it has no children in $\Mb{X}{}\cup\{X\}$. Denote this vertex by $Z$. From Lemma \ref{lem: inclusion}, every vertex in $\{X\}\cup\Mb{X}{}\setminusA\{Z\}$ has a collider path to $Z$ such that it passes through only the vertices in $\{X\}\cup\Mb{X}{}$. Since $Z$ has no child in this set, the vertex adjacent to $Z$ on these collider paths is either a parent, or a spouse, or a neighbor of $Z$. Therefore, by definition,
     \[\{X\}\cup\Mb{X}{}\setminusA\{Z\}\subseteq \PaP{Z}.\]
     As a result, 
     \[\left\vert\Mb{X}{}\right\vert=\left\vert\{X\}\cup\Mb{X}{}\setminusA\{Z\}\right\vert\leq\left\vert \PaP{Z}\right\vert\leq\deltaplus{\mathcal{H}}.\]
\end{proof}
\begin{customprp}{\ref{prp: upper-bound}}
    The number of conditional independence tests Algorithm \ref{alg: L-MARVEL} performs on a MAG $\mathcal{M}$ of order $n$, in the worst case, is upper bounded by
    \begin{equation}\label{eq: upper bound proof}
        \mathcal{O}(n^2 + n{\deltaplus{\mathcal{M}}}^2 2^{\deltaplus{\mathcal{M}}}).
    \end{equation}
\end{customprp}
\begin{proof}
    Algorithm \ref{alg: L-MARVEL} performs CI tests throughout the following subroutines:
    \begin{itemize}[leftmargin=*]
        \item ComputeMb: This is the initial Markov boundary discovery, that can be performed using any of the existing quadratic algorithms such as GS, TC, IAMB, etc. as discussed in the main text, that is, $\mathcal{O}(n)$ CI tests are required for this task.
	    \item \textbf{FindAdjacent($X$):}
	        The performed CI tests are of the type $\CI{X}{Y}{\mathbf{W}}{}$, where $Y\in\Mb{X}{\mathbf{V}}$ and $\mathbf{W}\subseteq\Mb{X}{\mathbf{V}}\setminusA\{Y\}$. There are $\left\vert\Mb{X}{\mathbf{V}}\right\vert$ choices for $Y$ and $2^{(\left\vert\Mb{X}{\mathbf{V}}\right\vert-1)}$ choices for $\mathbf{W}$, that is, $\left\vert\Mb{X}{\mathbf{V}}\right\vert2^{(\left\vert\Mb{X}{\mathbf{V}}\right\vert-1)}$ total tests.
	    \item \textbf{IsRemovable($X$):} The performed CI tests are of the type $\CI{Y}{Z}{\mathbf{W}}{}$, where $Y\in\Adj{X}\cap\mathbf{V}$, $Z\in\Mb{X}{\mathbf{V}}\setminusA\{Y\}$ and $\mathbf{W}\subseteq\{X\}\cup\Mb{X}{\mathbf{V}}\setminusA\{Y,Z\}$. There are $\left\vert N(X)\right\vert$ choices for $Y$, at most $\left\vert\Mb{X}{\mathbf{V}}\right\vert$ choices for $Z$ and $2^{(\left\vert\Mb{X}{\mathbf{V}}\right\vert-1)}$ choices for $\mathbf{W}$, that is, at most $\left\vert\Mb{X}{\mathbf{V}}\right\vert\left\vert N(X)\right\vert2^{(\left\vert\Mb{X}{\mathbf{V}}\right\vert-1)}$ total tests.
	        
	    \item \textbf{UpdateMb($X$):} L-MARVEL performs a single CI test for any pair of vertices in $\Mb{X}{\mathbf{V}}$, that is $\binom{\left\vert\Mb{X}{\mathbf{V}} \right\vert}{2}$ tests.
    \end{itemize}
    Note that due to Proposition \ref{prp: mb bound}, the for loop in line 6 of Algorithm \ref{alg: L-MARVEL} only reaches vertices with maximum Markov boundary size of $\deltaplus{\mathcal{M}}$. Therefore, the number of CI tests performed for a single vertex $X$ is upper bounded by $\mathcal{O}({\deltaplus{\mathcal{M}}}^2 2^{\deltaplus{\mathcal{M}}})$.
    We shall next discuss why we do not need to perform each of the aforementioned tests more than once, which the yields the desired upper bound.
    
    \begin{itemize}[leftmargin=*]
	    \item \textbf{FindAdjacent($X$):}
	        The set of vertices adjacent to $X$ does not change throughout the algorithm. 
	        Therefore, the first time that \textbf{FindAdjacent} is called for $X$, the variables adjacent to $X$ are identified and saved in $\mathcal{A}$, and are used in later iterations without requiring further CI tests.
	    \item \textbf{IsRemovable($X$):} It might happen that L-MARVEL performs some CI tests to identify that $X$ is not removable, and therefore, it has to call \textbf{IsRemovable} for $X$ in a later iteration (note that every variable gets removed throughout the algorithm.) 
	    This is due to the fact that the removal of other variables can render $X$ removable in a later iteration. 
	    However, we claim that no duplicate CI tests are needed in later iterations where L-MARVEL calls \textbf{IsRemovable}.
	    To show this, note that for any pair $Y,Z$ where $Y\in\Adj{X}\cap\mathbf{V}$ and $Z\in\Mb{X}{\mathbf{V}}\setminusA\{Y\}$, all of the separating sets of $Y$ and $Z$ in $\Mb{X}{\mathbf{V}}\cup\{X\}$ are saved in $\mathcal{A}$ during the first call to \textbf{IsRemovable}. 
	    Since the Markov boundary of $X$ can only be reduced throughout the algorithm, in all the succeeding iterations, it suffices for L-MARVEL to query the found separating sets.

	    \item \textbf{UpdateMb($X$):} These CI tests are performed only before $X$ is removed from the set of variables, that is, they are performed exactly once for each variable.
    \end{itemize}
\end{proof}
\begin{customthm}{\ref{thm: lwrBound}}
	The number of conditional independence tests of the form $\CI{X}{Y}{\mathbf{Z}}{}$ required by any constraint-based algorithm on a MAG ${\mathcal{M}}$ of order $n$, in the worst case, is lower bounded by
	\begin{equation} \label{eq: lwrbound proof}
	    \Omega(n^2+n{\deltaplus{\mathcal{M}}}2^{\deltaplus{\mathcal{M}}}).
	\end{equation}
\end{customthm}
\begin{proof}
    First, suppose an algorithm does not query any CI test of the form $\CI{X}{Y}{\mathbf{W}}{}$ for a pair of vertices $(X,Y)$. 
    If all the queried CI tests yield independence, this algorithm cannot tell an empty graph and a graph where only $X$ and $Y$ are adjacent apart.
    Therefore, at least one CI test is required for any pair of vertices, which yields a lower bound of $\binom{n}{2}$.
    
    Furthermore, \cite{mokhtarian2020recursive} proposed a lower bound of the form $\Omega(n{\Delta_{in}(\mathcal{M}})2^{\Delta_{in}(\mathcal{M}}))$ for the case that $\mathcal{M}$ is a DAG, where $\Delta_{in}(\mathcal{M})$ is the maximum number of parents among the variables. 
    Note that in the case of a DAG, $\deltaplus{\mathcal{M}}=\Delta_{in}(\mathcal{M})$, which proves our claim.
    However, we briefly discuss how their worst-case example can be modified in a way that it is no longer a DAG, and also $\deltaplus{\mathcal{M}}$ is strictly larger than $\Delta_{in}(\mathcal{M})$.
    The provided example is as follows.
    The vertices of the ground truth graph is partitioned into $\frac{n}{\deltaplus{\mathcal{M}}+1}$ clusters, where each cluster is a complete graph and there is no edge between the variables of different clusters.
    They show that if fewer CI tests than the claimed lower bound are performed, then a CI test of the form $\CI{X}{Y}{\mathbf{W_1\cup W_2}}{}$ is not queried, where $X,Y,\mathbf{W_1}$ belong to a cluster $\mathbf{C}$, whereas $\mathbf{W_2}$ does not contain any vertex of $\mathbf{C}$.
    Then they show that the graph where $\mathbf{W_1}$ are parents of $X$ and $Y$, and the rest of the graph is exactly the same as $\mathcal{M}$ with the exception that there is no edge between $X$ and $Y$ is consistent with the performed CI tests.
    In this example, if the rest of the edges in the cluster $\mathbf{C}$, i.e., the edges other than those between $\mathbf{W}$ and $X,Y$, as well as all the edges in the other clusters are replaced by bidirectional edges, the same proof still works.
    Note that in this example, $\deltaplus{\mathcal{M}}=|\mathbf{C}|-1$, whereas $\Delta_{in}=|\mathbf{W_1}|$.
    Hence, we achieve the lower bound of Equation \eqref{eq: lwrbound proof}.
\end{proof}

\section{Additional experiments} \label{sec: apd experiments}
In this section, we provide further experimental results to assess the performance of L-MARVEL against the state of the art.

Figure \ref{fig: ins alarm} illustrates the effect of the sample size on the performance of various algorithms. 
It is seen that L-MARVEL has the lowest run time and the fewest number of performed CI tests, while it maintains high accuracy in the wide range of the sample size. 
Also note that on these benchmark structures, L-MARVEL beats RFCI in terms of the average number of CI tests, which was the only metric in which RFCI showed advantage on random graphs. 
The experimental setting in this part is exactly that of Table \ref{table: exp1}, except for the sample size, to observe only the effect of the sample size. 
Each point of these graphs represents 50 MAGs generated by selecting the latent and selection variables uniformly at random.

Table \ref{table: exp appendix} extends our experiments to two new benchmark structures, namely mildew and water. The number of latent and selection variables varies in different columns of this table, where the latent and selection variables are chosen uniformly at random.
The coefficients of the linear SEM are chosen uniformly at random from the interval $\pm(1,1.5)$, whereas the standard deviation of the noise variables is chosen uniformly at random from the interval $(1,\sqrt{2})$ to represent a set of parameters different than that of the main text.
The entries of the table represent an average of 20 runs. 
As observed in Table \ref{table: exp1}, L-MARVEL outperforms all the other algorithms in almost every comparison metric, except for the precision, where it still is competent to the state of the art.
\begin{figure*}[t] 
    \centering
    \begin{subfigure}[b]{1\textwidth}
        \centering
        \includegraphics[width=0.3\textwidth]{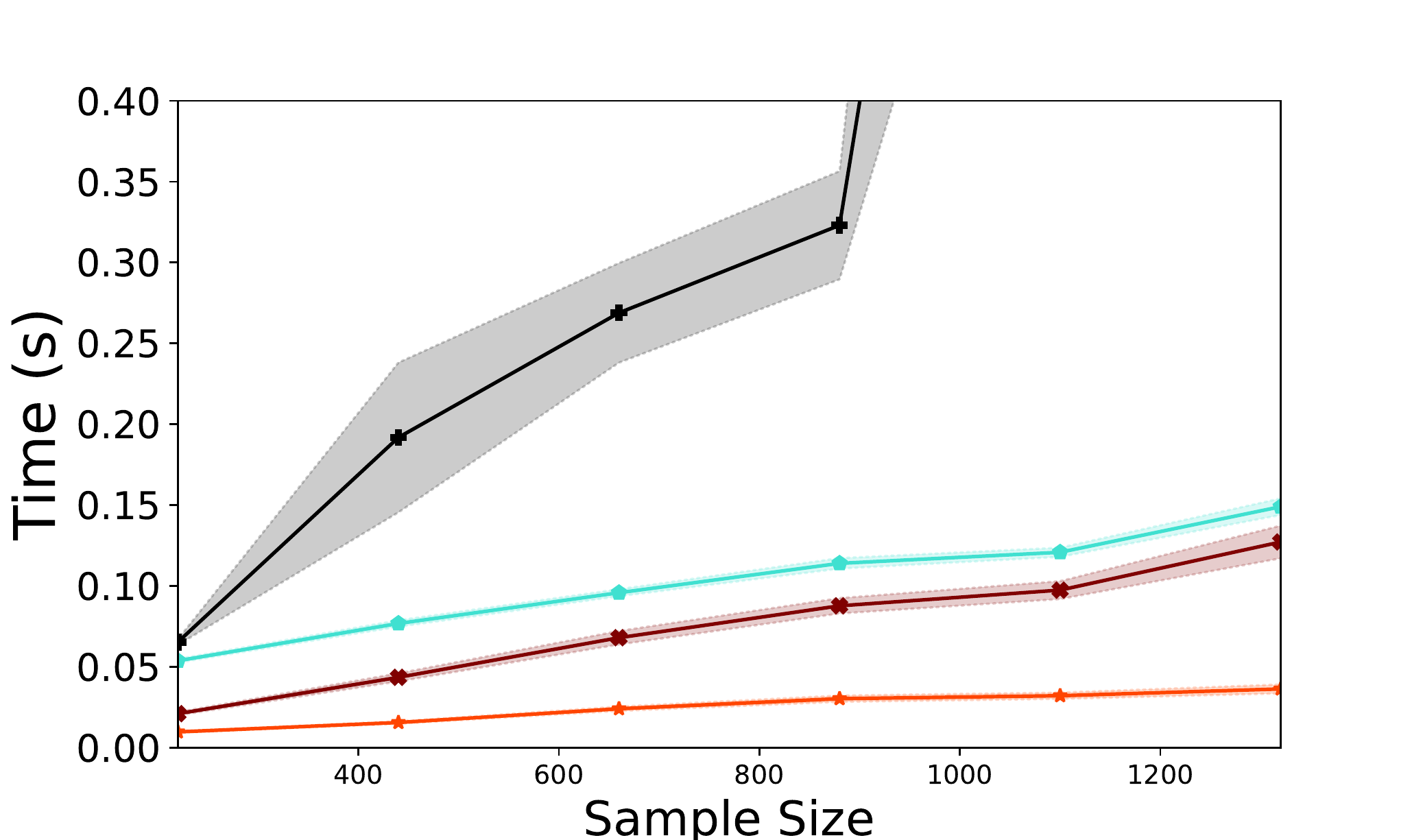}
        \hfill
        \includegraphics[width=0.3\textwidth]{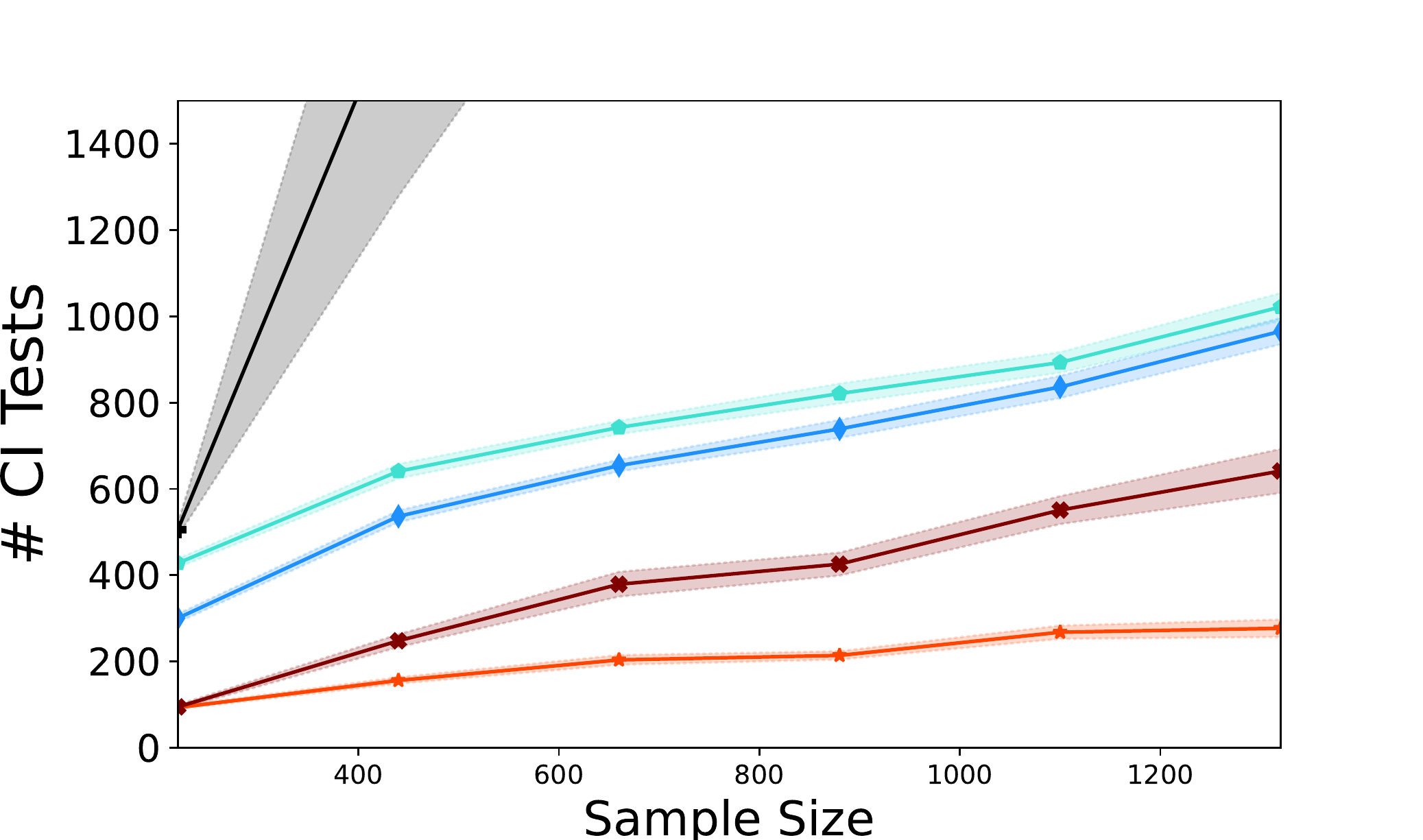}
        \hfill
        \includegraphics[width=0.3\textwidth]{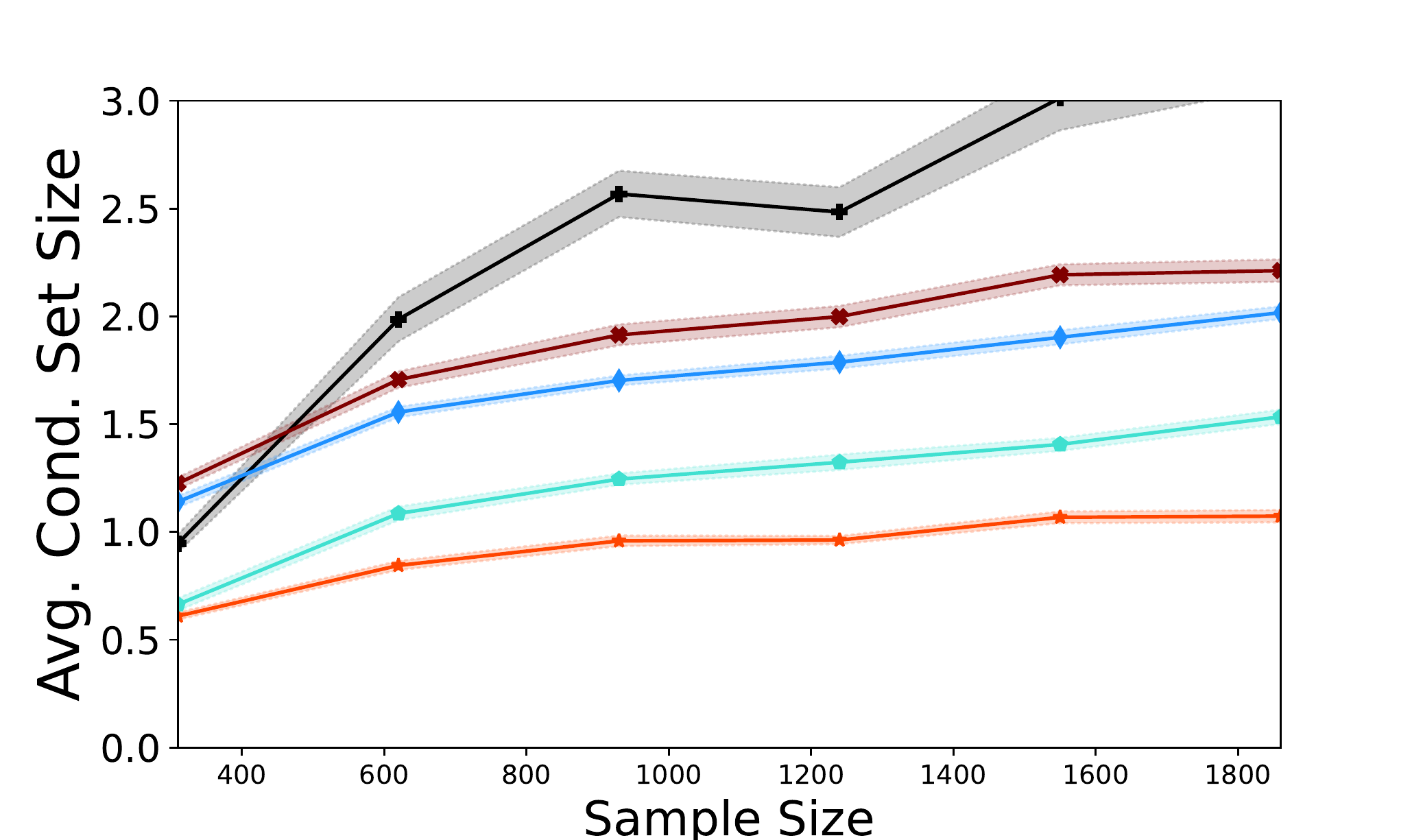}\hfill
        \caption{Performance (run time, number of CI tests, and the average conditioning size) of structure learning algorithms on the insurance network.}
        \label{fig: insurance time}
    \end{subfigure}
    \hfill
    \begin{subfigure}[b]{1\textwidth}
        \centering
        \includegraphics[width=0.3\textwidth]{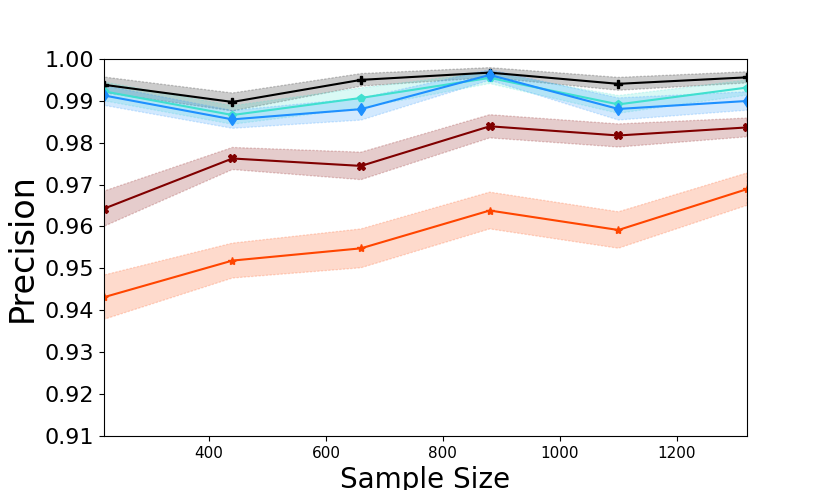}
        \hfill
        \includegraphics[width=0.3\textwidth]{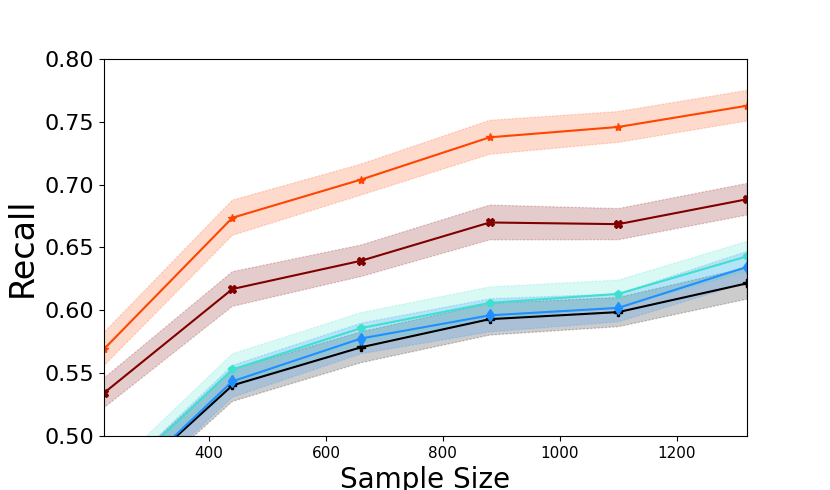}
        \hfill
        \includegraphics[width=0.3\textwidth]{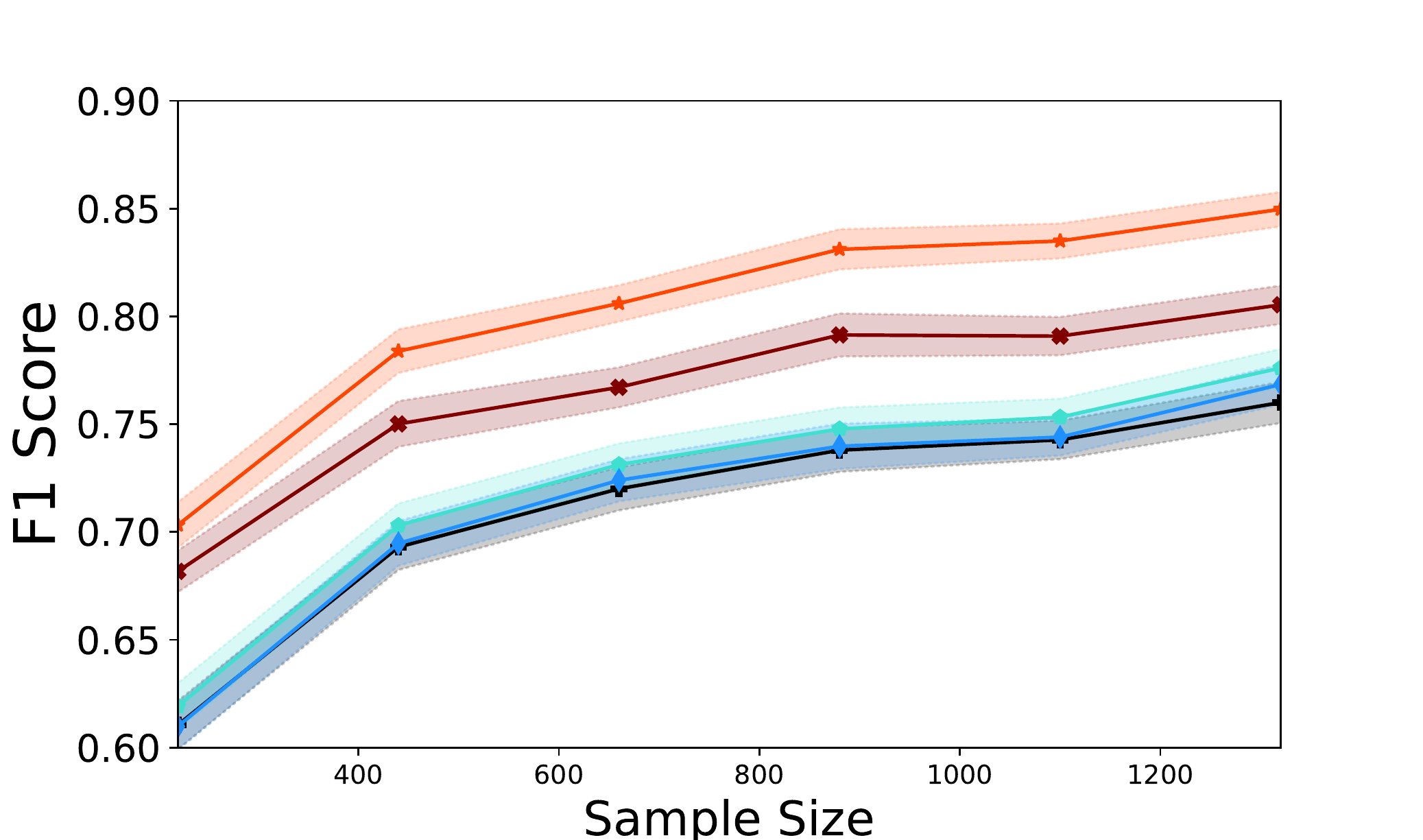}\hfill
        \caption{Performance (Precision, Recall, and F1 score) of structure learning algorithms on the insurance network.}
        \label{fig: insurance F1}
    \end{subfigure}
    \hfill
    \begin{subfigure}[b]{1\textwidth}
        \centering
        \includegraphics[width=0.3\textwidth]{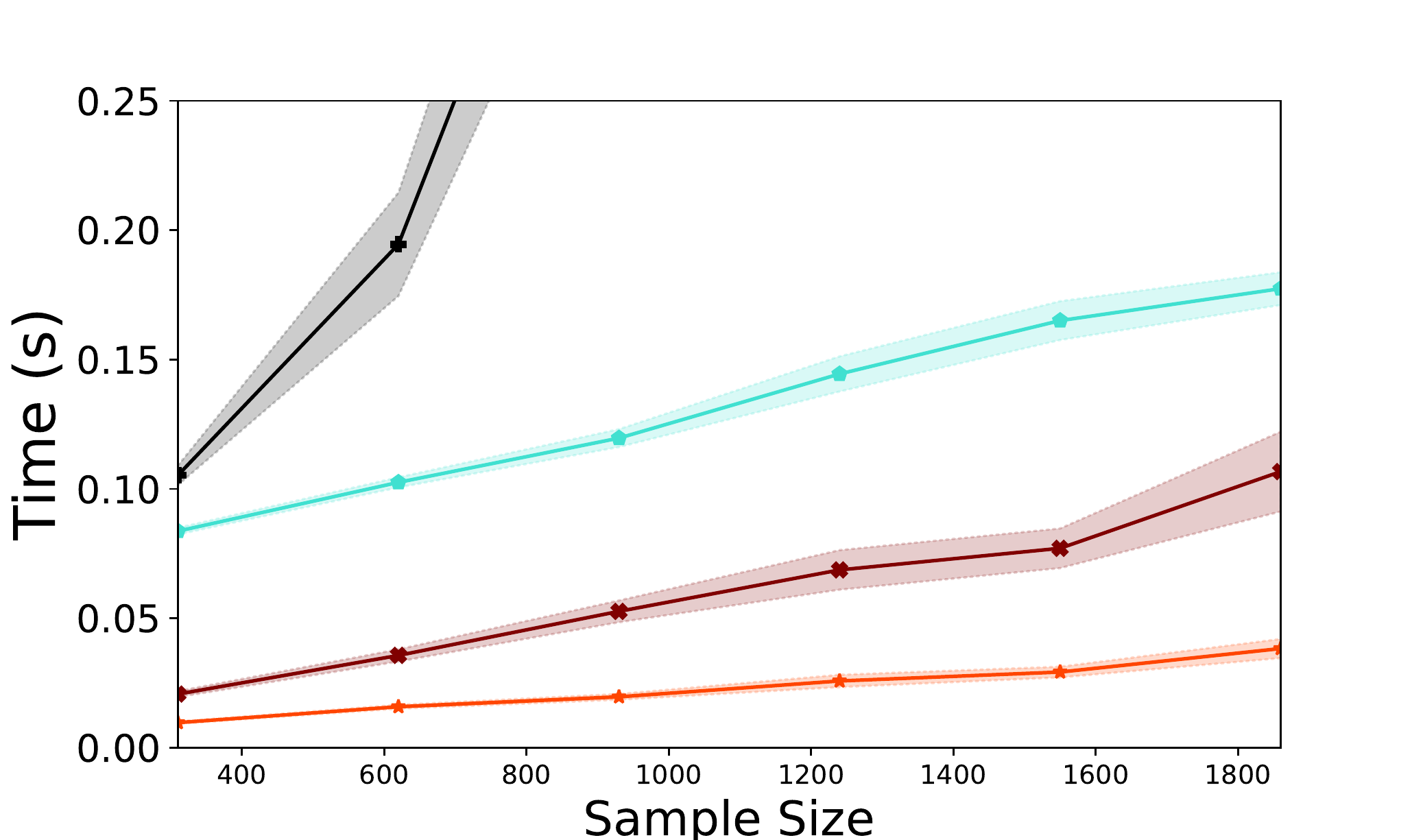}
        \hfill
        \includegraphics[width=0.3\textwidth]{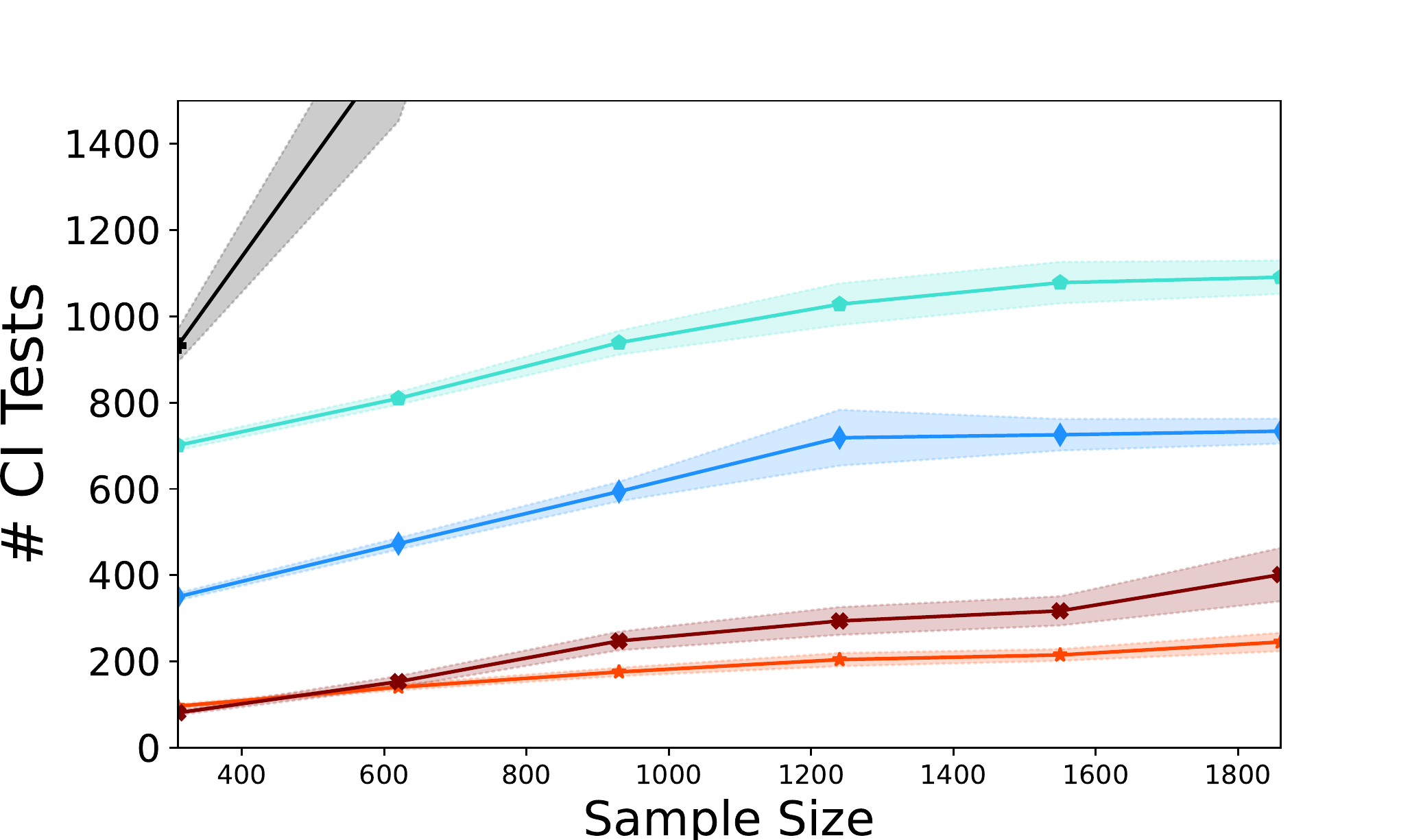}
        \hfill
        \includegraphics[width=0.3\textwidth]{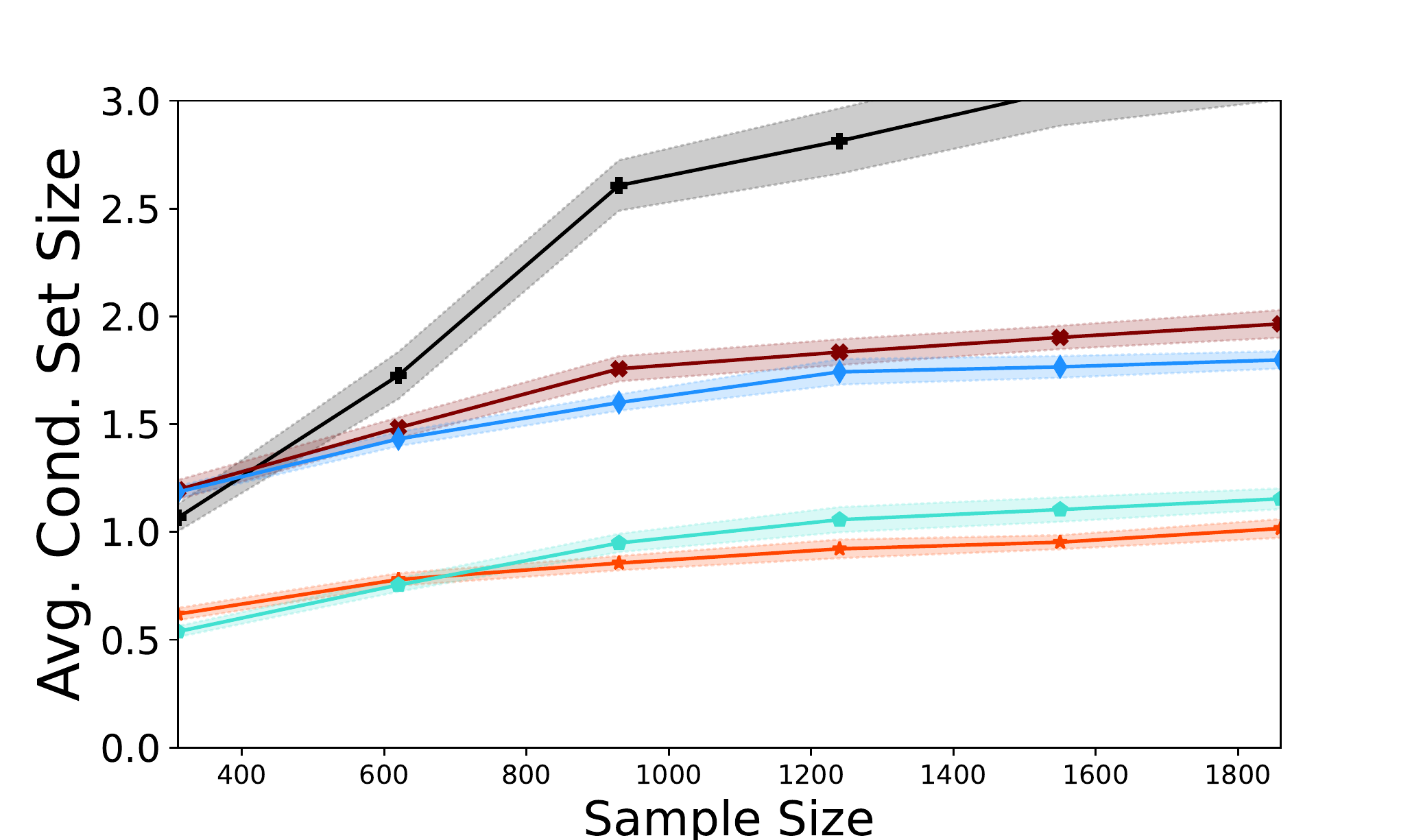}\hfill
        \caption{Performance (run time, number of CI tests, and the average conditioning size) of structure learning algorithms on the alarm network.}
        \label{fig: alarm time}
    \end{subfigure}
    \hfill
    \begin{subfigure}[b]{1\textwidth}
        \centering
        \includegraphics[width=0.3\textwidth]{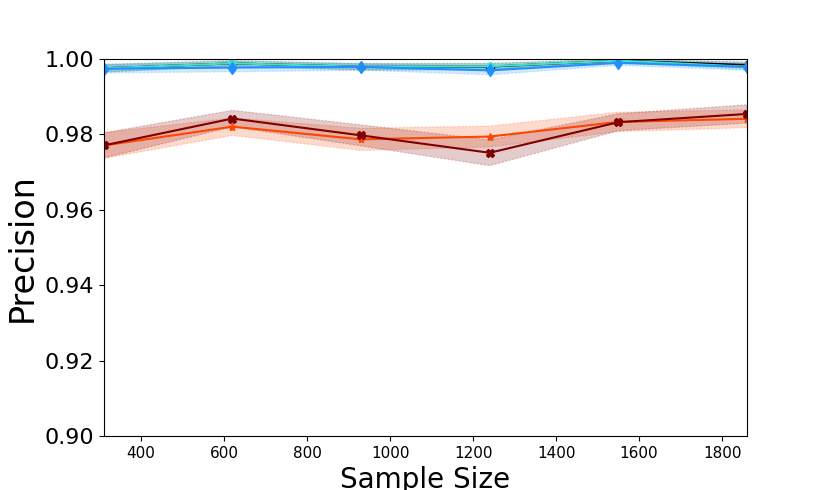}
        \hfill
        \includegraphics[width=0.3\textwidth]{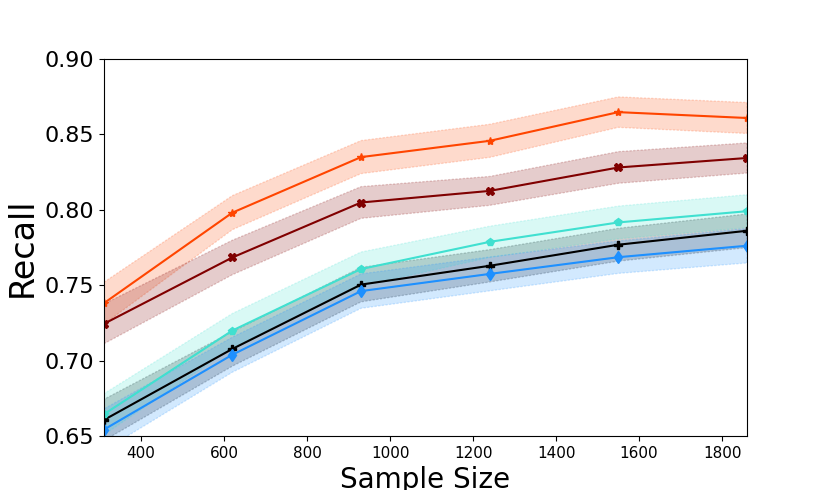}
        \hfill
        \includegraphics[width=0.3\textwidth]{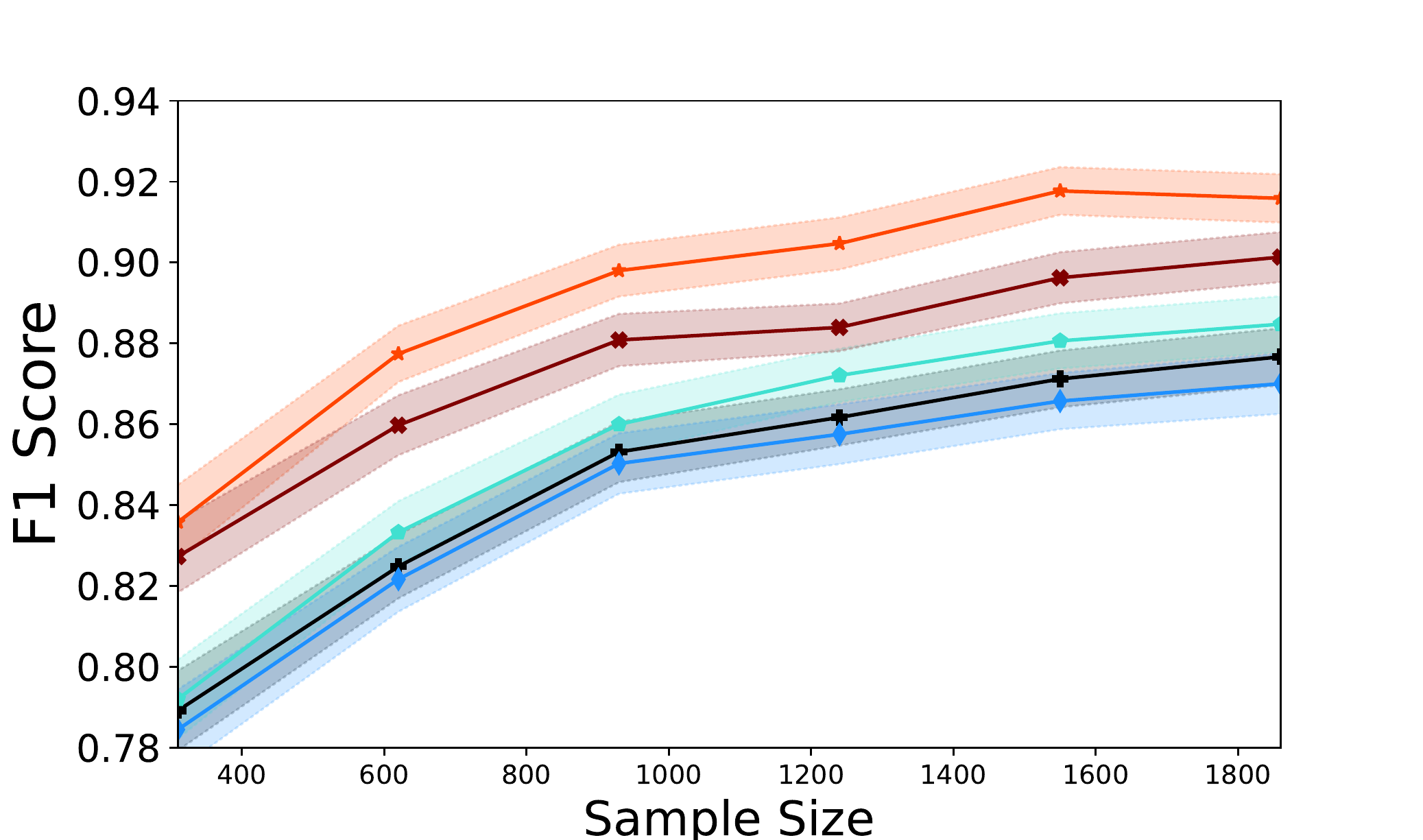}\hfill
        \caption{Performance (Precision, Recall, and F1 score) of structure learning algorithms on the alarm network.}
        \label{fig: alarm F1}
    \end{subfigure}
    \hfill
    \begin{subfigure}[b]{0.8\textwidth}
        \centering
         \includegraphics[width=\textwidth]{figures/legend.pdf}
    \end{subfigure}
    \caption{Effect of the sample size on the performance of structure learning algorithms on two benchmark structures, where the sample size varies from $=10|\mathbfcal{O}|$ to $=60|\mathbfcal{O}|$. The parameters of the experiments are preserved as in Table \ref{table: exp1}, except for the sample size.}
    \label{fig: ins alarm}
\end{figure*}
    
    \begin{table*}[ht]
	    \caption{Performance of various algorithms on the benchmark structures, when sample size $=50|\mathbfcal{O}|$.}
	    \fontsize{9}{10.5}\selectfont
	    \centering
	    \begin{tabular}{N M{0.3cm}|M{1.2cm}|M{1.2cm} M{1.1cm} M{1.1cm} M{1.1cm} M{1.1cm} M{1.1cm}}
    		\toprule
    		&\multicolumn{2}{c|}{Structure}
 			& Mildew
 			& Mildew
 			& Mildew
 			& Water
 			& Water
 			& Water
			\\
			&\multicolumn{2}{c|}{$(|\mathbfcal{O}|, |\mathbfcal{L}|, |\mathbfcal{S}|)$}
 			& (31,4,0)
 			& (31,0,4)
 			& (29,3,3)
 			& (29,3,0)
 			& (29,0,3)
 			& (26,3,3)
			\\
			\hline
			& \multirow{5}{*}{\rotatebox[origin=c]{90}{L-MARVEL}}
			& \#CI tests
            & \textbf{359} & \textbf{194} & \textbf{426} & 2365 & \textbf{1130} & 1368
			\\
			& 
			& Runtime
			& \textbf{0.06} & \textbf{0.04} & \textbf{0.08} & \textbf{0.32} & 0.21 & 0.25
			\\
			& 
			& F1-score
			& \textbf{0.90} & \textbf{0.92} & \textbf{0.89} & \textbf{0.82} & \textbf{0.87} & \textbf{0.73}
			\\
			&
			& Precision
			& 0.95 & 0.99 & 0.96 & 0.97 & 0.98 & 0.95
			\\
			&
			& Recall
			& \textbf{0.85} & \textbf{0.87} & \textbf{0.83} & \textbf{0.72} & \textbf{0.79} & \textbf{0.60}
			\\
			\hline
			& \multirow{5}{*}{\rotatebox[origin=c]{90}{RFCI}}
			& \#CI tests
			& 896 & 1085 & 937 & 1472 & 1398 & 1173
			\\
			& 
			& Runtime
			& 0.20 & 0.23 & 0.19 & 0.21 & \textbf{0.29} & \textbf{0.22}
			\\
			& 
			& F1-score
			& 0.77 & 0.84 & 0.79 & 0.67 & 0.69 & 0.60
			\\
			&
			& Precision
			& \textbf{0.98} & 1.00 & 0.99 & 0.97 & 0.98 & 0.97
			\\
			&
			& Recall
			& 0.64 & 0.73 & 0.66 & 0.51 & 0.53 & 0.44
			\\
			\hline
			& \multirow{5}{*}{\rotatebox[origin=c]{90}{FCI}}
			& \#CI tests
			& 1751 & 7251 & 10999 & 149674 & 12912 & 78903
			\\
			& 
			& Runtime
			& 0.33 & 1.57 & 2.26 & 29.99 & 2.99 & 19.00
			\\
			& 
			& F1-score
			& 0.72 & 0.81 & 0.74 & 0.57 & 0.61 & 0.50
			\\
			&
			& Precision
			& \textbf{0.98} & \textbf{1.00} & \textbf{1.00} & 0.98 & 0.98 & 0.98
			\\
			&
			& Recall
			& 0.57 & 0.69 & 0.59 & 0.41 & 0.45 & 0.34
			\\
			\hline
			& \multirow{5}{*}{\rotatebox[origin=c]{90}{MBCS*}}
			& \#CI tests
			& 1076 & 336 & 816 & 8300 & 3927 & 3946
			\\
			& 
			& Runtime
			& 0.28 & 0.12 & 0.25 & 1.98 & 1.07 & 1.12
			\\
			& 
			& F1-score
			& 0.81 & 0.89 & 0.82 & 0.68 & 0.74 & 0.61
			\\
			&
			& Precision
			& 0.97 & 0.99 & 0.98 & \textbf{1.00} & \textbf{0.99} & \textbf{0.99}
			\\
			&
			& Recall
			& 0.70 & 0.81 & 0.71 & 0.52 & 0.59 & 0.45
			\\
			\hline
			& \multirow{5}{*}{\rotatebox[origin=c]{90}{M3HC}}
			& \#CI tests
			& 708 & 747 & 808 & \textbf{1591} & 1501 & \textbf{1285}
			\\
			&
			& Runtime
			& 8.41 & 9.93 & 17.33 & 36.65 & 78.99 & 61.48
			\\
			& 
			& F1-score
			& 0.76 & 0.79 & 0.75 & 0.65 & 0.63 & 0.57
			\\
			&
			& Precision
			& \textbf{0.98} & \textbf{1.00} & 0.99 & 0.97 & 0.98 & 0.97
			\\
			&
			& Recall
			& 0.62 & 0.66 & 0.61 & 0.48 & 0.47 & 0.40
			\\
			\bottomrule
			\end{tabular}
	    \label{table: exp appendix}
    \end{table*}

\section{Specific excluded structure} \label{sec: apd chordal}
In this section, we discuss the specific structure that is excluded from the result of Theorem \ref{thm: test removability}.
Formally, this structure is a MAG $\mathcal{M}$ that contains a specific type of cycle, which we call non-chordal: A cycle $(V_0,V_1,...,V_m,V_{m+1}=V_0)$ such that I) $V_i$ and $V_{i+1}$ are neighbors for every $0\leq i\leq m$, and II) the inducing subgraph of $\mathcal{M}$ over the vertices $\{V_0,...,V_m\}$ does not contain any other edges.
We show that this certain structure of MAGs represents a very restrictive structure of the DAG $\GV$. Consider the DAG $\GV$ in Figure \ref{fig: chordal-dag}, where $\mathbfcal{O}=\{O_1,O_2,O_3,O_4\}$ and $\mathbfcal{S}=\{S_{12},S_{23},S_{34},S_{41}\}$. The corresponding MAG is shown in Figure \ref{fig: chordal-mag}. As seen in Figure \ref{fig: chordal-mag}, the non-chordal cycle $(O_1,O_2,O_3,O_4,O_1)$ appears in the MAG structure. We claim such a cycle can only happen if all of the following conditions are satisfied:
\begin{itemize}[leftmargin=*]
    \item Each pair $(O_i,O_{i+1})$ have a specific selection variable $S_{i(i+1)}$ such that $O_i,O_{i+1}\in\Anc{S_{i(i+1)}}$, and none of the other observed variables of the cycle are ancestors of $S_{i(i+1)}$. 
    Note that if for instance $O_1\in\Anc{S_{23}}$ in the example above, then $O_1$ would be adjacent to $O_3$ in $\GV[\mathbfcal{O}|\mathbfcal{S}]$, since $(O_1,S_{23},O_3)$ is an inducing path. 
    So for the resulting MAG to have a non-chordal cycle, each pair of the observed variables must have their own specific selection variable.
    \item None of the pairs of variables $(O_i,O_j)$ must be adjacent if $j\neq(i-1),(i+1)$. That is, the induced subgraph of the DAG $\GV$ over $O_i$s must not contain any edges other than the edges of the cycle. Otherwise, the cycle in MAG $\GV[\mathbfcal{O}|\mathbfcal{S}]$ would contain a chord.
    \item None of the pairs of variables $(O_i,O_j)$ must have common latent confounders if $j\neq(i-1),(i+1)$. Otherwise, as in the case above, this would form a chord in the cycle.
\end{itemize}
Not allowing the aforementioned specific structure, the result of Theorem \ref{thm: test removability} is guaranteed. Note that it is mandatory to exclude this structure, as such structures have induced sub-graphs with no removable variables.
\begin{lemma}\label{lem: non-chordal no rem}
    Suppose $\mathcal{G}$ is a MAG with non-chordal cycle $c=(O_0,...,O_m)$. None of the vertices $\{O_0,...,O_m\}$ are removable in any induced sub-graph of $\mathcal{G}$ that contains the cycle $c$.
\end{lemma}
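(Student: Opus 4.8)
The plan is to reduce the statement to the first condition of the graphical characterization of removability in Theorem~\ref{thm: graph-rep}. Let $\mathcal{G}'$ be an induced subgraph of $\mathcal{G}$ whose vertex set contains $\{O_0,\dots,O_m\}$; an induced subgraph of a MAG is again a MAG, so Theorem~\ref{thm: graph-rep} applies to $\mathcal{G}'$. I will show that \emph{every} vertex $O_i$ of the cycle violates condition~1 of that theorem in $\mathcal{G}'$, hence is non-removable there, which is exactly the claim.

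First I would record what survives in $\mathcal{G}'$. A non-chordal cycle has length $m+1\ge 4$ (cf.\ the discussion preceding Theorem~\ref{thm: test removability}), so for any fixed $i$ the vertices $Y:=O_{i-1}$ and $Z:=O_{i+1}$ (indices modulo $m+1$) are distinct. The undirected edges $O_{i-1}-O_i$ and $O_i-O_{i+1}$ of $c$ belong to $\mathcal{G}'$, so $Y,Z\in\N{O_i}$; and since $\mathcal{G}'$ is an induced subgraph while, by condition~(II) of the definition, the subgraph of $\mathcal{G}$ induced on $\{O_0,\dots,O_m\}$ contains no edge besides those of $c$, the pair $Y,Z$ is non-adjacent in $\mathcal{G}'$ (so $c$ remains a non-chordal cycle of $\mathcal{G}'$). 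Now the contradiction is immediate: if $O_i$ were removable in $\mathcal{G}'$, then condition~1 of Theorem~\ref{thm: graph-rep}, applied with $X=O_i$, this $Y\in\Adj{O_i}$, and $Z\in\Ch{O_i}\cup\N{O_i}\setminus\{Y\}$ (indeed $Z\in\N{O_i}$ and $Z\ne Y$), would force $Y$ and $Z$ to be adjacent --- contradiction. Hence no $O_i$ is removable in $\mathcal{G}'$, and since $\mathcal{G}'$ was arbitrary the lemma follows.

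An alternative route, bypassing Theorem~\ref{thm: graph-rep}, argues straight from Definition~\ref{def: removable}: the length-two path $(Y,O_i,Z)$ has $O_i$ as a non-collider (both incident edges of $c$ carry a tail at $O_i$), so it is m-connecting relative to every set $\mathbf{W}$ of vertices of $\mathcal{G}'$ disjoint from $\{Y,O_i,Z\}$; thus $Y$ and $Z$ are not m-separated by any such $\mathbf{W}$ in $\mathcal{G}'$, while in the induced subgraph obtained by deleting $O_i$ --- still a MAG --- the non-adjacent vertices $Y,Z$ \emph{are} m-separated by some set, so the m-separation equivalence demanded of a removable vertex fails. I do not anticipate a genuine obstacle; the only points needing a word of care are that non-chordal cycles have length at least four (so $Y\ne Z$ and each $O_i$ really has two distinct non-adjacent neighbours), that the class of MAGs is closed under taking induced subgraphs (so Theorem~\ref{thm: graph-rep} and Definition~\ref{def: removable} apply to $\mathcal{G}'$), and that ``$\mathcal{G}'$ containing $c$'' forces every edge of $c$, and no chord of it, into $\mathcal{G}'$.
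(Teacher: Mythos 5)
Your main argument is exactly the paper's proof: for each $O_i$, its two cycle-neighbours $O_{i-1},O_{i+1}$ remain non-adjacent neighbours of $O_i$ in any induced subgraph containing $c$, which violates condition~1 of Theorem~\ref{thm: graph-rep}, so $O_i$ is not removable. The proposal is correct (and your care about the length-$\ge 4$ and chordlessness points is fine); the alternative argument straight from Definition~\ref{def: removable} is also valid but unnecessary here.
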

\begin{proof}
    Suppose $\mathcal{H}$ is an induced sub-graph of $\mathcal{G}$ that contains the cycle $c$. Take an arbitrary vertex $O_i$. $O_{i-1},O_{i+1}\in \N{O_i}$, and $O_{i-1}\notin\Adj{O_{i+1}}$ since $c$ is non-chordal. From Theorem \ref{thm: graph-rep}, $O_i$ is not removable, which completes the proof.
\end{proof}
\begin{figure}[t] 
    \centering
	\tikzstyle{block} = [circle, inner sep=1.3pt, fill=black]
	\tikzstyle{input} = [coordinate]
	\tikzstyle{output} = [coordinate]
	\begin{subfigure}[b]{0.34\textwidth}
	    \centering
	    \begin{tikzpicture}
            \tikzset{edge/.style = {->,> = latex'}}
            % vertices
            \node[block] (s12) at  (0.5,2) {};
            \node[] ()[above left=-0.1cm and -0.1cm of s12]{$S_{12}$};
            \node[block] (s23) at  (2,0.5) {};
            \node[] ()[above right=-0.1cm and -0.1cm of s23]{$S_{23}$};
            
            \node[block] (s34) at  (0.5,-1) {};
            \node[] ()[above left=-0.45cm and -0.05cm of s34]{$S_{34}$};
            
            \node[block] (s41) at  (-1,0.5) {};
            \node[] ()[above left=-0.1cm and -0.1cm of s41]{$S_{41}$};
            
            \node[block] (o1) at  (0,1) {};
            \node[] ()[above left=-0.1cm and -0.1cm of o1]{$O_1$};
            \node[block] (o2) at  (1,1) {};
            \node[] ()[above right=-0.1cm and -0.05cm of o2]{$O_2$};
            \node[block] (o3) at  (1,0) {};
            \node[] ()[above right=-0.5cm and -0.1cm of o3]{$O_3$};
            \node[block] (o4) at  (0,0) {};
            \node[] ()[above left=-0.5cm and -0.1cm of o4]{$O_4$};
            %edges
            \draw[edge, dashed, bend left=20] (o1) to (s12);
            \draw[edge, dashed, bend right=20] (o2) to (s12);
            \draw[edge, dashed, bend left=20] (o2) to (s23);
            \draw[edge, dashed, bend right=20] (o3) to (s23);
            \draw[edge, dashed, bend left=20] (o3) to (s34);
            \draw[edge, dashed, bend left=20] (o4) to (s41);
            \draw[edge, dashed, bend right=20] (o4) to (s34);
            \draw[edge, dashed, bend right=20] (o1) to (s41);
        \end{tikzpicture}
    \caption{$\GV$}\label{fig: chordal-dag}
	\end{subfigure}
    \begin{subfigure}[b]{0.34\textwidth}
	    \centering
	    \begin{tikzpicture}
            \tikzset{edge/.style = {->,> = latex'}}
            % vertices
            \node[block] (o1) at  (0,1) {};
            \node[] ()[above left=-0.1cm and -0.1cm of o1]{$O_1$};
            \node[block] (o2) at  (1,1) {};
            \node[] ()[above right=-0.1cm and -0.05cm of o2]{$O_2$};
            \node[block] (o3) at  (1,0) {};
            \node[] ()[above right=-0.5cm and -0.1cm of o3]{$O_3$};
            \node[block] (o4) at  (0,0) {};
            \node[] ()[above left=-0.5cm and -0.1cm of o4]{$O_4$};
            \node[]()at (0.5,-1){};
            %edges
            \draw[edge, style={-}] (o1) to (o2);
            \draw[edge, style={-}] (o2) to (o3);
            \draw[edge, style={-}] (o3) to (o4);
            \draw[edge, style={-}] (o4) to (o1);
        \end{tikzpicture}
    \caption{$\GV[\mathbfcal{O}\vert\mathbfcal{S}]$}\label{fig: chordal-mag}
	\end{subfigure}
    \caption{A structure where every pair of observed vertices have its own specific selection variable, shared among only the variables of this pair. This results in a non-chordal MAG over the observe variables, if none of the pairs $(O_1,O_3)$ and $(O_2,O_4)$ have neither an edge in the DAG $\GV$, nor a latent common confounder.}
    \label{fig: chordal} 
    
\end{figure}
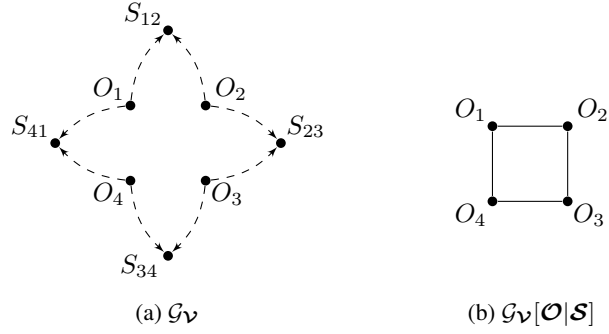
	
The following result indicates that given the aforementioned assumption, that is, if no non-chordal cycle exists in $\mathcal{M}=\GV[\mathbfcal{O}\vert\mathbfcal{S}]$, then a removable variable always exists in any subgraph of $\mathcal{M}$, which completes our discussion.
\begin{lemma}\label{lem: rem exists}
    Suppose the edge-induced subgraph of $\mathcal{M}$ over the undirected edges is chordal. 
    Let $\mathcal{G} = \GV[\mathbf{V}|\mathbfcal{S}]$ for some $\mathbf{V}\subseteq \mathbfcal{O}$.
    Then there exists $X\in\mathbf{V}$ such that $X$ is removable in $\mathcal{G}$.
\end{lemma}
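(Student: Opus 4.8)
The plan is to use the graphical characterisation of removability in Theorem \ref{thm: graph-rep} and, in every case, exhibit a vertex meeting its two conditions. The organising observation is that both conditions are vacuous for a vertex $X$ with $\Ch{X}=\varnothing$ and $\N{X}=\varnothing$: Condition 2 requires some $Z$ with $X\in\Pa{Z}$, which is impossible when $\Ch{X}=\varnothing$, and Condition 1 ranges over $Z\in\Ch{X}\cup\N{X}$, which is empty. So such an $X$ is automatically removable, and the task reduces to finding a vertex with no children and no neighbours — with a single exception handled by chordality.

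First I would transfer the chordality hypothesis to $\mathcal{G}=\GV[\mathbf{V}\mid\mathbfcal{S}]$. Since $\mathbf{V}\subseteq\mathbfcal{O}$, passing from $\mathcal{M}$ to $\mathcal{G}$ is a further latent projection, and such a projection creates no new undirected edge: an undirected edge $A-B$ in a latent projection forces $A\in\Anc{B}$ and $B\in\Anc{A}$, which is impossible for distinct vertices of the ancestral graph $\mathcal{M}$. Hence the undirected edge-induced subgraph of $\mathcal{G}$ is an induced subgraph of that of $\mathcal{M}$, so it is chordal. Now case on the directed edges of $\mathcal{G}$. If $\mathcal{G}$ has at least one directed edge, then its directed edges form a DAG with nonempty edge set (ancestrality forbids directed cycles), which therefore has a vertex $S$ that is a sink but not a source — take the last vertex of a maximal directed path starting at the head of any directed edge. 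Since $S$ has a parent, the ancestral property forbids $S$ from being an endpoint of an undirected edge, so $\N{S}=\varnothing$; and $\Ch{S}=\varnothing$ since $S$ is a sink. Hence $S$ is removable.

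If $\mathcal{G}$ has no directed edge, every vertex has $\Ch{X}=\varnothing$, so it suffices to produce $X$ with $\N{X}=\varnothing$, or else to handle the remaining subcase. If some vertex $S$ has $\N{S}=\varnothing$ — e.g.\ an isolated vertex, or one incident only to bidirected edges — then $S$ is removable by the observation above. Otherwise every vertex has a neighbour, hence is the endpoint of an undirected edge, hence has no spouse by the ancestral property; so $\mathcal{G}$ has no bidirected edges and $\mathcal{G}$ coincides with its undirected edge-induced subgraph, which is chordal and has at least two vertices. A chordal graph has a simplicial vertex $S$; here $\Adj{S}=\N{S}$ is a clique and $\Ch{S}=\varnothing$, so Condition 2 is vacuous while Condition 1 holds because any two neighbours of $S$ are adjacent. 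Thus $S$ is removable, which completes the argument. For necessity of the hypothesis, Lemma \ref{lem: non-chordal no rem} already shows that a non-chordal undirected cycle leaves no removable vertex in any induced subgraph containing it.

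The only genuinely non-trivial case — and the main obstacle — is the last one, where $\mathcal{G}$ is a nonempty chordal undirected graph: here removability of a simplicial vertex relies on the structural constraints of ancestral graphs (an endpoint of an undirected edge has no parents, no spouses, and, in this case, no directed children), and this is exactly the place where the chordality assumption is used.
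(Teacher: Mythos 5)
Your overall skeleton is correct and is essentially the paper's: when an arrowhead edge is present you exhibit a vertex with $\Ch{X}=\varnothing$ and $\N{X}=\varnothing$ (the paper takes a vertex with an incident arrowhead and no descendants; your sink-with-a-parent, and your no-neighbour vertex in the bidirected-only subcase, do the same job), and in the purely undirected case both you and the paper invoke a simplicial vertex of a chordal graph. The genuine gap is in your transfer of chordality from $\mathcal{M}$ to $\mathcal{G}$. The claim that the projection ``creates no new undirected edge'' is false, and its justification misreads the orientation rule of Definition \ref{def: DAG to MAG}: an undirected edge $A-B$ in $\GV[\mathbf{V}\vert\mathbfcal{S}]$ requires $A\in \Anc{\{B\}\cup\mathbfcal{S}}$ and $B\in \Anc{\{A\}\cup\mathbfcal{S}}$ in $\GV$, not mutual ancestry of $A$ and $B$; if your reasoning were valid it would equally apply with $\mathbf{V}=\mathbfcal{O}$ and show that $\mathcal{M}$ itself has no undirected edges at all, which is absurd since those are exactly the selection-bias edges. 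Concretely, let $\GV$ be $O_1\to S_{12}\leftarrow O_2\to S_{23}\leftarrow O_3$ with $\mathbfcal{S}=\{S_{12},S_{23}\}$: then $\mathcal{M}$ is the chordal path $O_1-O_2-O_3$, but for $\mathbf{V}=\{O_1,O_3\}$ this very path is an inducing path relative to $\langle\{O_2\},\mathbfcal{S}\rangle$, so $\mathcal{G}$ contains the undirected edge $O_1-O_3$, which is absent from $\mathcal{M}$. Hence the undirected part of $\mathcal{G}$ is in general \emph{not} an induced subgraph of that of $\mathcal{M}$, and the last subcase of your argument (the only place where chordality is used) is unsupported as written. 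A secondary, smaller issue: the paper only defines latent projection for DAGs, so ``passing from $\mathcal{M}$ to $\mathcal{G}$ is a further latent projection'' itself relies on the composition property of projections for MAGs, which you would need to cite or prove.

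The fact you actually need -- that when $\mathcal{G}$ turns out to be purely undirected it is chordal -- is true, and the paper also asserts it in a single sentence, but establishing it requires a different argument than yours: one has to show that a new undirected edge of $\mathcal{G}$ between retained vertices can only arise from an all-undirected path of $\mathcal{M}$ whose interior vertices are marginalized (an endpoint of an undirected edge has no arrowheads, so mutual ancestry relative to $\mathbfcal{S}$ forces such a path), so the undirected part of $\mathcal{G}$ is obtained from the chordal undirected part of $\mathcal{M}$ by vertex elimination (deleting a vertex and completing its neighbourhood), an operation that preserves chordality. With that step supplied, your case analysis goes through and coincides in substance with the paper's proof; without it, the chordal subcase -- which you yourself identify as the crux -- is not justified.
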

\begin{proof}
    We consider the following two cases and introduce a removable variable at each case:
    \begin{enumerate}
        \item $\mathcal{G}$ has at least one directed or bidirected edge:
        Take $X$ as a vertex that has at least one arrowhead incident to it (i.e., it has at least a parent or a spouse), and satisfies the following property:
        \[\textit{De}_\mathcal{G}(X)\cap\mathbf{V}\setminusA\{X\}=\varnothing.\]
        We first show that such a vertex exists. Suppose not. Start from a vertex with an arrowhead incident to it and move to one of its children. Since the vertex we are in now has other descendants, again move to one of its children. Continuing in the same manner, we traverse over a directed cycle, which is in contradiction with the definition of MAGs.  
        
        Now we show that this variable $X$ is removable.
        Since $X$ has no other descendants, $\Ch{X}=\varnothing.$ Furthermore, $\N{X}=\varnothing$ by definition of MAG. Now Theorem \ref{thm: graph-rep} implies that $X$ is removable.
        \item $\mathcal{G}$ is an undirected graph:
        Since $\mathcal{M}$ is chordal over its undirected edges, $\mathcal{G}$ is chordal too.
        Chordal graphs have a perfect elimination ordering \cite{fulkerson1965incidence, blair1993introduction}.
        Let $X$ be the first vertex in this ordering.
        By definition of perfect elimination ordering, all of the neighbors of $X$ are adjacent.
        From Theorem \ref{thm: graph-rep}, $X$ is removable
    \end{enumerate}
\end{proof}	

Lemmas \ref{lem: non-chordal no rem} and \ref{lem: rem exists} indicate that the assumption that the induced subgraph of $\mathcal{M}$ on the undirected edges is chordal is the necessary and sufficient condition so that there exists a removable variable at every subgraph of $\mathcal{M}$.

\end{document}